%% file: main.tex
\documentclass[11pt]{article}

\usepackage[margin=1in]{geometry}
\usepackage{times}

\PassOptionsToPackage{numbers}{natbib}
\input{util.tex}
\usepackage{natbib}

\title{Curvature Beyond Positivity:\\ Greedy Guarantees for Arbitrary Submodular Functions}

\author{%
  Yixin Chen\thanks{Department of Computer Science \& Engineering, Texas A\&M University.
    Email: \texttt{chen777@tamu.edu}.}
  \and
  Alan Kuhnle\thanks{Department of Computer Science \& Engineering, Texas A\&M University.
    Email: \texttt{kuhnle@tamu.edu}.}
}

\date{May 8, 2026}

\begin{document}

\maketitle

\begin{abstract}
Submodular functions---functions exhibiting diminishing returns---are
central to machine learning.
When the objective is monotone and non-negative, the greedy algorithm
achieves a tight $63\%$ approximation.
But many practical objectives incorporate costs that make them negative
on some inputs, and all existing multiplicative guarantees require
non-negativity.
Prior work handles negativity through additive bounds for the
special class of decomposable functions and
non-monotonicity through partial-monotonicity parameters, but these
address each difficulty in isolation and neither extends the classical
structural theory.
We extend \emph{curvature}---a parameter measuring how far a function
deviates from linearity---to all submodular functions, handling both
non-monotonicity and negativity through a single classical concept.
A greedy algorithm with pruning achieves a curvature-controlled
multiplicative ratio for \emph{any} submodular function, including those
taking negative values---the first such guarantee beyond monotonicity
and non-negativity.
In the non-monotone regime $1 \le c_g < 2.2$, the bound strictly beats
the best known uniform ratio of $0.401$ (for non-negative $f$), and it recovers the classical $(1-e^{-c_g})/c_g$
guarantee for monotone functions.
A multilinear-extension variant extends the framework to general
combinatorial constraints via multilinear relaxation.
Experiments on cost-penalized experimental design, coverage, feature
selection, and a curvature sweep on Multi-News passage selection
support the theory.
\end{abstract}

%% ================================================================
\section{Introduction}\label{sec:intro}
%% ================================================================

\noindent\textbf{Submodular optimization in machine learning.}
Sensor placement for environmental
monitoring~\citep{krause2008near}, data summarization for information
retrieval~\citep{mirzasoleiman2016fast}, feature selection for model
training~\citep{kazemi2021regularized}, and Bayesian experimental
design~\citep{harshaw2019submodular} share a common mathematical
structure: their objective functions exhibit \emph{diminishing
returns}---adding an element to a smaller collection yields at least
as much benefit as adding it to a larger one.
Functions with this property are called \emph{submodular}, and the
canonical optimization task is to select a subset of at most $k$ items
maximizing a submodular objective~$f$.
When $f$ is \emph{monotone} (more items never hurt) and non-negative,
the simple greedy algorithm---repeatedly adding the element with the
largest marginal gain---achieves a tight
$(1-1/e) \approx 0.632$ approximation~\citep{nemhauser1978analysis}.
The greedy guarantee has made submodular optimization a practical
workhorse in these settings.

\noindent\textbf{When objectives go negative.}
In practice, each of these applications involves costs.
Deploying a sensor costs money; including a document incurs
retrieval overhead; each training example or experiment requires
resources.
The natural formulation subtracts a modular cost~$\ell$ from a
monotone submodular benefit~$g$, giving an objective $f = g - \ell$
that is still submodular but takes negative values whenever costs
exceed benefits---a sensor whose installation cost outweighs its
information gain, a feature whose $\ell_1$ penalty outweighs its
mutual information, or an experiment whose acquisition cost dominates
its value~\citep{krause2008near,kazemi2021regularized,harshaw2019submodular}.
This is the routine regime of cost-benefit trade-offs.
For non-monotone but non-negative objectives, a decade of work has
driven the best approximation from
$1/e$~\citep{BFNS2014} to
$0.401$~\citep{buchbinder2024constrained}, against a hardness
ceiling of $0.478$~\citep{oveisgharanvondrak2011,qi2024maximizing}.
But every result in this line requires $f(S) \ge 0$ for every~$S$:
the correlation gap argument~\citep{calinescu2011maximizing} that
connects the multilinear relaxation to the discrete problem collapses
when $f$ takes negative values, providing
\emph{no approximation guarantee whatsoever}.

\noindent\textbf{Two partial solutions and their limitations.}
Existing work addresses negativity \emph{or} non-monotonicity, through
separate frameworks; we extend a single classical concept to handle
both simultaneously.
\citet{harshaw2019submodular} initiated a line of work that decomposes
$f = g - \ell$ and proves that a distorted greedy achieves
$\mathbb{E}[f(S)] \ge (1-1/e)\,g(\mathrm{OPT}) - \ell(\mathrm{OPT})$,
an additive guarantee subsequently extended to several
settings~\citep{kazemi2021regularized,lu2024regularized,bodek2022maximizing}.
The additive guarantee mixes the incommensurable quantities
$g(\mathrm{OPT})$ and $\ell(\mathrm{OPT})$.
Converting it to a multiplicative ratio on~$f$ itself gives
$1 - 1/(e(1-\rho))$ where
$\rho = \ell(\mathrm{OPT})/g(\mathrm{OPT})$, which becomes
\emph{vacuous} once the cost ratio exceeds
$\rho \ge 1 - 1/e \approx 0.632$---a sharp threshold, not a graceful
degradation---and requires explicit access to $g$ and $\ell$
separately, while many objectives are given only as a value oracle.
Separately, \citet{mualem2022using} developed \emph{partial
monotonicity} as a structural parameter for non-monotone submodular
maximization, but this addresses non-monotonicity alone---it does not
handle objectives that go negative, nor does it connect to the
classical structural theory that made the greedy guarantee
instance-dependent.

\noindent\textbf{Curvature: from monotone to general.}
The concept we extend is
\emph{curvature}~\citep{conforti1984submodular}, a parameter
$c \in [0,1]$ for monotone submodular functions that measures how far
$f$ deviates from linearity.
When $c = 0$ the function is linear (modular) and greedy is exact; as
$c$ grows toward~$1$, diminishing returns become more pronounced and
the greedy guarantee degrades smoothly from~$1$ to~$1-1/e$ via the
bound $(1-e^{-c})/c$.
\citet{sviridenko2017optimal} proved this bound is tight under matroid
constraints, establishing curvature as the canonical
instance-dependent refinement for monotone submodular maximization.
The classical definition, however, diverges when marginals go
negative, and the associated guarantee says nothing.
This leads to a natural question:
\emph{can curvature be extended to non-monotone, and possibly negative,
submodular functions in a way that yields multiplicative
approximation guarantees?}

\begin{figure}[t]
  \centering
  \includegraphics[width=\textwidth]{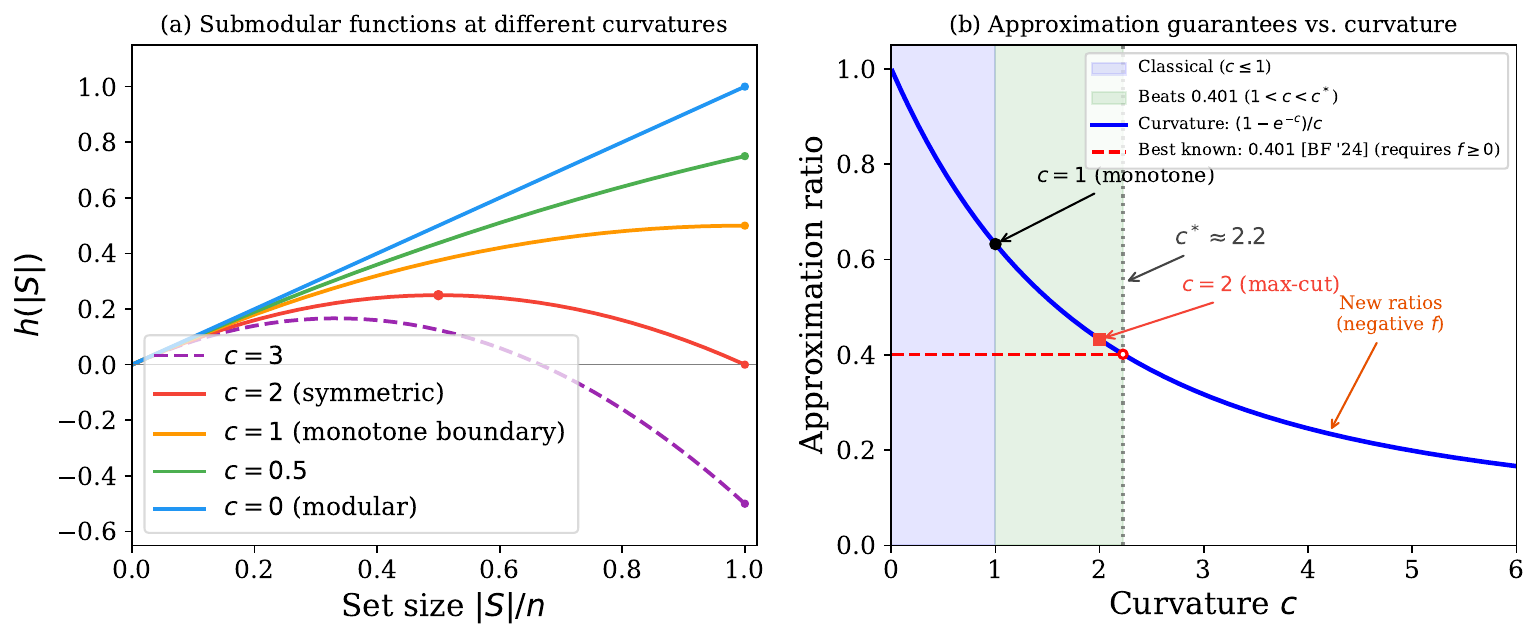}
  \caption{The curvature spectrum.
    \textbf{(a)}~Illustrative submodular functions $h(|S|)$ at
    different curvatures.
    Modular ($c=0$): linear growth.
    Low curvature ($c=0.5$): mild diminishing returns.
    Monotone boundary ($c=1$): the function flattens but
    remains non-decreasing.
    Symmetric ($c=2$): the function peaks at $|S| = n/2$ and
    returns to zero, as in max-cut.
    Beyond ($c=3$): the function goes negative for large sets.
    \textbf{(b)}~Approximation guarantees as a function of the relevant
    curvature parameter~$c$.
    The \textcolor{blue}{blue region} ($c \le 1$) recovers classical
    monotone results~\citep{conforti1984submodular,sviridenko2017optimal}.
    The \textcolor{green!50!black}{green region} ($1 < c < c^*$)
    is where curvature strictly beats the best known uniform ratio
    of $0.401$ (which requires $f \ge 0$)~\citep{buchbinder2024constrained}.
    Beyond $c^*$, the curvature guarantee is below $0.401$ but provides
    the only known multiplicative bound for negative-valued instances.}
  \label{fig:curvature-spectrum}
\end{figure}

\noindent\textbf{Contributions.}
We answer this question affirmatively: greedy with pruning achieves a
$(1-e^{-\bar c_g})/\bar c_g$ approximation for \emph{any} submodular
function---the first curvature-parameterized guarantee that extends
beyond monotonicity to negative-valued objectives.
Figure~\ref{fig:curvature-spectrum} previews the landscape; our
contributions fall into four categories.

\begin{itemize}
  \item \textbf{Curvature framework.}
    We extend the classical Conforti--Cornu\'{e}jols
    curvature---a parameter measuring deviation from
    linearity---beyond monotone functions for the first time.
    For monotone functions our definition recovers the classical one
    exactly (Definition~\ref{def:curv}); for general functions it
    provides a continuous characterization:
    $c_f \le 1$ if and only if $f$ is monotone.

  \item \textbf{Greedy guarantees.}
    A simple greedy algorithm with pruning---which deletes any current
    element whose marginal contribution to the rest has become
    non-positive---achieves a
    $(1-e^{-\bar c_g})/\bar c_g$ multiplicative approximation for
    \emph{any}
    submodular function under a cardinality constraint $|S| \le k$,
    with no monotonicity or non-negativity
    requirement (Theorem~\ref{thm:greedy}).
    Here $c_g$ is a trajectory-restricted greedy curvature
    (Definition~\ref{def:greedy-curv}) that need not be known at
    runtime. Clipping to $\bar c_g=\max\{1,c_g\}$ handles non-monotone
    cases; monotone instances use the sharper formula in terms of~$c_g$.
    The guarantee recovers the tight $1-1/e$ at $c_g = 1$ and
    strictly beats $0.401$ for $1 \le c_g < c^* \approx 2.2$
    (see~\S\ref{sec:apps-symmetric} for applications to symmetric
    submodular functions).
    The bound uses only a value oracle but is restricted to cardinality
    constraints; Section~\ref{sec:multilinear-summary} extends to
    general constraints via multilinear relaxation.

  \item \textbf{Beyond cardinality: DMCG-P.}
    The standard multilinear extension curvature~$c_F$ diverges to
    infinity for any function taking negative values
    (Theorem~\ref{thm:nn-characterization}), so extending the
    guarantee beyond cardinality constraints requires new machinery.
    We introduce a \emph{trajectory-restricted fractional
    curvature}~$c_g^F$ that sidesteps this obstruction.
    DMCG-P (Discretized Measured Continuous Greedy with
    Pruning), a discretization of the measured continuous greedy
    framework~\citep{feldman2017maximizing}, achieves
    $(1-e^{-\bar c_g^F})/\bar c_g^F \cdot f(O^*)$ minus a discretization
    error $O(n^2 C_F/T)$ that vanishes with the step count~$T$, with
    $\bar c_g^F=\max\{1,c_g^F\}$, for arbitrary
    submodular~$f$ with finite trajectory curvature, over
    integral feasible-set hulls.
    For decomposable $f = g - \ell$, a closed-form OPT-free certificate
    bounds~$c_g^F$ from the pruned trajectory alone.
    Details appear in
    Section~\ref{sec:multilinear-summary} and
    Appendix~\ref{apx:multilinear}.

  \item \textbf{Experimental validation.}
    On cost-penalized experimental design, coverage, and feature
    selection instances, greedy with pruning matches or exceeds distorted
    greedy~\citep{harshaw2019submodular}, and the curvature guarantee
    strictly exceeds the additive guarantee (which becomes vacuous as costs grow).
    At $c_g \approx 1.6$ the guarantee is $58\%$ against an observed
    ratio of $94\%$.
    On instances where computing the optimal solution is intractable,
    trajectory-based
    diagnostics remain informative without access to the optimal solution.
    A curvature sweep on Multi-News passage
    selection~\citep{fabbri2019multinews} shows that trajectory
    curvature follows the uniform-query reference curve; the
    certificate-based guarantee improves on the analytic bound by up
    to $20$ percentage points.
\end{itemize}

%% ================================================================
\section{Curvature and Greedy Algorithms}\label{sec:curvature}
%% ================================================================

We consider set functions $f: 2^\uni \to \reals$ defined on subsets of a ground set $\uni$
of size~$n$.
We assume $f$ is \emph{normalized}: $\ff{\emptyset} = 0$.
We say $f$ is \emph{submodular} if $\ff{X \cup Y} + \ff{X \cap Y} \le \ff{X} + \ff{Y}$
for all $X, Y \subseteq \uni$.
We do \emph{not} require $f$ to be monotone or non-negative.
Since we normalize $\ff{\emptyset} = 0$, we may assume that every element $e \in \uni$
satisfies $\ff{e} > 0$, as any element with $\ff{e} \le 0$ can be removed without
affecting the optimum.\footnote{%
  Normalization $\ff{\emptyset} = 0$ is standard for monotone maximization but
  typically avoided for non-monotone functions, as it may cause $f$ to take
  negative values.}

\subsection{Curvature: Definition and Basic Properties}\label{sec:curvature-defs}

Our generalization replaces the classical element-wise ratio
with a set-wise comparison: how much does the value of
$Y \setminus X$ degrade when added in the context of~$X$?
This avoids the element-level marginals that diverge for
non-monotone functions.
Since the minimum in Definition~\ref{def:curv} is taken over all pairs
with $\ff{Y \setminus X} - \ff{\emptyset} > 0$, and submodularity ensures
$\ff{X \cup Y} - \ff{X} \le \ff{Y \setminus X} - \ff{\emptyset}$,
the curvature satisfies $c_f \ge 0$ for every submodular~$f$.

\begin{definition}[Curvature]\label{def:curv}
  The \emph{curvature} $c_f \in [0, \infty)$ of a submodular function $f: 2^\uni \to \reals$
  is defined as
  \[
    c_f = 1 - \min_{\substack{X, Y \subseteq \uni \\
    \ff{Y \setminus X} - \ff{\emptyset} > 0}}
    \frac{\ff{X \cup Y} - \ff{X}}{\ff{Y \setminus X} - \ff{\emptyset}}.
  \]
\end{definition}

Equivalently, $c_f$ is the minimum value such that, for all $X, Y \subseteq \uni$
with $\ff{Y \setminus X} - \ff{\emptyset} > 0$:
\begin{equation}\label{eq:curvature-ineq}
  \ff{X \cup Y} - \ff{X} \ge (1 - c_f)\bigl(\ff{Y \setminus X} - \ff{\emptyset}\bigr).
\end{equation}
When $\ff{\emptyset}=0$ (as we assume throughout), the $-\ff{\emptyset}$ terms vanish.
When $c_f \le 1$, this says adding $Y \setminus X$ to $X$ remains non-negative
and captures at least a $(1-c_f)$ fraction of the standalone value of
$Y \setminus X$.
When $c_f > 1$, the union can \emph{degrade} value: adding $Y \setminus X$ to $X$
may reduce the function value by up to $(c_f - 1)\,\ff{Y \setminus X}$.
Submodularity also gives the companion upper bound
$\ff{X\cup Y}-\ff{X}\le \ff{Y\setminus X}$.

\begin{example}
For cardinality-constrained MaxCut, $c_f = 2$ (Proposition~\ref{prop:symmetric}).
\end{example}

The global curvature $c_f$ may be much larger than what the algorithm
actually encounters along its trajectory---a worst-case set pair may never
arise during execution.  To capture this, we define a tighter,
trajectory-specific variant.

\begin{definition}[Greedy curvature]\label{def:greedy-curv}
  Fix a cardinality constraint~$k$.
  Let $\mathcal{O} = \argmax_{|S| \le k} \ff{S}$ be the set of optimal solutions
  and let $A_0,A_1,\ldots,A_k$ be the active sets generated by
  greedy with pruning (Algorithm~\ref{alg:greedy}), repeating the
  terminal set if the algorithm stops early.
  These active sets are not required to be nested: pruning may delete
  elements chosen in earlier iterations.
  The \emph{greedy curvature} is
  \[
    c_g \;=\; 1 - \min_{O^* \in \mathcal{O}}\; \min_{i:\, \ff{A_i \setminus O^*} - \ff{\emptyset} > 0}
    \frac{\ff{O^* \cup A_i} - \ff{O^*}}{\ff{A_i \setminus O^*} - \ff{\emptyset}}.
  \]
  By construction, $c_g \le c_f$, since the minimization is over the
  trajectory pairs $(O^*,A_i)$, a subset of the pairs allowed in
  Definition~\ref{def:curv}.
\end{definition}

\begin{proposition}[Monotonicity characterization]\label{prop:c-monotone}
  A submodular function $f$ with $\ff{e} > 0$ for all $e \in \uni$ is
  monotone if and only if $c_f \le 1$.
\end{proposition}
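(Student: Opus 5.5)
Both directions follow by reading the curvature inequality \eqref{eq:curvature-ineq} as a statement about marginals. Throughout, $\ff{\emptyset}=0$ and $\ff{e}>0$ for every $e\in\uni$.

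\emph{Monotone $\Rightarrow c_f\le 1$.} Take any pair $X,Y$ with $\ff{Y\setminus X}>0$. Monotonicity gives $\ff{X\cup Y}\ge \ff{X}$, so the ratio in Definition~\ref{def:curv} is nonnegative. Hence the minimum is at least $0$, and $c_f\le 1$. The minimum is attained over a finite nonempty family, since $Y=\{e\}$, $X=\emptyset$ qualifies because $\ff{e}>0$. So $c_f$ is well defined.

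\emph{$c_f\le 1\Rightarrow$ monotone.} Let $c_f\le1$. Fix $X$ and $e\notin X$; it suffices to show $\ff{X\cup\{e\}}\ge\ff{X}$, because monotonicity follows by adding elements one at a time. Apply \eqref{eq:curvature-ineq} with $Y=\{e\}$, so $Y\setminus X=\{e\}$ and $\ff{Y\setminus X}=\ff{e}>0$. The pair is admissible, and we get
\[
\ff{X\cup\{e\}}-\ff{X}\ \ge\ (1-c_f)\,\ff{e}\ \ge 0 .
\]
This proves monotonicity.

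There is no real obstacle; the only point to watch is the hypothesis $\ff{e}>0$. It is what makes singleton pairs admissible for every $X$, and this is where it is used. Without it, an element with $\ff{e}\le 0$ would never appear as a constraint in the minimum, and the reverse direction could fail. Submodularity is not needed beyond guaranteeing $c_f\ge0$, which the paper has already noted.
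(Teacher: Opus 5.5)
Your proof is correct and is essentially the paper's argument. The forward direction is identical. For the converse, the paper argues by contradiction: it telescopes $f(X\cup Y)-f(X)<0$ to find an element $e_i$ with a negative marginal, then uses the admissible singleton pair $(X\cup Y_{i-1},\{e_i\})$; this is exactly your direct inequality $f(X\cup\{e\})-f(X)\ge(1-c_f)f(\{e\})\ge 0$ read in contrapositive form, and your explicit check that the admissible family is nonempty (via $X=\emptyset$, $Y=\{e\}$) is a small detail the paper leaves implicit.
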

This result means curvature cleanly partitions the function
space: $c_f \le 1$ is precisely the monotone regime, so
functions with $c_f > 1$ are \emph{necessarily} non-monotone.
The curvature parameter thus simultaneously measures distance from
linearity and distance from monotonicity.
See Appendix~\ref{apx:deferred} for the proof.

\begin{proposition}[Classical equivalence]\label{prop:classical}
  For monotone submodular $f$, the curvature $c_f$ equals the total curvature
  $\alpha$ of \citet{conforti1984submodular}:
  $1 - \alpha = \min_{e: \marge{e}{\emptyset} > 0}
  \marge{e}{\uni \setminus e}/\marge{e}{\emptyset}$.
\end{proposition}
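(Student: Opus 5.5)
We prove the two inequalities $c_f \le \alpha$ and $c_f \ge \alpha$ separately. Write $1-\alpha = \min_{e} \marge{e}{\uni\setminus e}/\marge{e}{\emptyset}$, with the minimum over $e$ having $\marge{e}{\emptyset}=\ff{e}>0$. Recall that $1-c_f$ is the minimum of the ratio $\bigl(\ff{X\cup Y}-\ff{X}\bigr)/\ff{Y\setminus X}$ over pairs with $\ff{Y\setminus X}>0$.

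\textbf{Step 1: $c_f \ge \alpha$.} We specialize Definition~\ref{def:curv} to $X=\uni\setminus e$ and $Y=\{e\}$. Then $Y\setminus X=\{e\}$, so whenever $\ff{e}>0$ the ratio becomes $\bigl(\ff{\uni}-\ff{\uni\setminus e}\bigr)/\ff{e}=\marge{e}{\uni\setminus e}/\marge{e}{\emptyset}$. The set-wise minimum is therefore at most the element-wise one, which gives $1-c_f\le 1-\alpha$.

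\textbf{Step 2: $c_f \le \alpha$.} Here we must show that for every $X,Y$ with $\ff{Y\setminus X}>0$,
\[
\ff{X\cup Y}-\ff{X}\ge(1-\alpha)\,\ff{Y\setminus X}.
\]
Let $Z=Y\setminus X=\{z_1,\dots,z_m\}$ and telescope in both directions.

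For the left side, $\ff{X\cup Z}-\ff{X}=\sum_j \marge{z_j}{X\cup\{z_1,\dots,z_{j-1}\}}$. Submodularity gives $\ge\sum_j \marge{z_j}{\uni\setminus z_j}$. The definition of $\alpha$ then gives $\ge(1-\alpha)\sum_j \ff{z_j}$.

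The definition of $\alpha$ applies only to elements with $\ff{z_j}>0$. Under monotonicity every element with $\ff{z_j}=0$ has $\marge{z_j}{\cdot}=0$, since marginals lie between $0$ and $\ff{z_j}$. The inequality $\marge{z_j}{\uni\setminus z_j}\ge(1-\alpha)\ff{z_j}$ therefore holds trivially for those elements, as $0\ge0$.

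For the right side, submodularity gives $\ff{Z}=\sum_j \marge{z_j}{\{z_1,\dots,z_{j-1}\}}\le\sum_j\ff{z_j}$. Since $1-\alpha\ge0$ by monotonicity, we obtain $(1-\alpha)\sum_j\ff{z_j}\ge(1-\alpha)\ff{Z}$. Chaining the two sides yields the claim.

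\textbf{Main obstacle.} The only delicate point is the sign bookkeeping in Step 2. The comparison $(1-\alpha)\sum_j\ff{z_j}\ge(1-\alpha)\ff{Z}$ needs $1-\alpha\ge 0$, and this is where monotonicity is used. Monotonicity also handles the degenerate elements with $\ff{z_j}=0$. If the paper's standing assumption $\ff{e}>0$ for all $e$ is in force, that degenerate case disappears entirely. One edge case remains: if no $e$ has $\ff{e}>0$, then $f\equiv0$ on singletons. Monotonicity and submodularity then give $f\equiv0$, both minima range over empty sets, and both curvatures are conventionally $0$.
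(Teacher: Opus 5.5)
Your proof is correct and follows the paper's argument essentially step for step. For $c_f\ge\alpha$ you use the singleton pair $X=\uni\setminus e$, $Y=\{e\}$, and for $c_f\le\alpha$ you telescope over $Z=Y\setminus X$, apply submodularity and the definition of $\alpha$, and then use subadditivity of $f$ on $Z$. Beyond the paper, you state explicitly that the last comparison needs $1-\alpha\ge 0$, supplied by monotonicity, and you also handle zero-valued singletons and the all-zero edge case; the paper leaves these points implicit under its standing assumption $\ff{e}>0$.
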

This equivalence ensures backward compatibility: every result
proved using the CC parameter~$\alpha$ transfers directly to~$c_f$
for monotone functions, while our generalization extends the same
quantity beyond the monotone boundary.
See Appendix~\ref{apx:deferred} for the proof.

\noindent\textbf{Notation.}
Throughout, $c_f$ denotes the \emph{set-function curvature}
(Definition~\ref{def:curv}), a global quantity; $c_g$ denotes the
\emph{greedy curvature} (Definition~\ref{def:greedy-curv}), restricted
to the greedy trajectory and satisfying $c_g \le c_f$.
The multilinear analogues $c_F$ and $c_g^F$ (stated in
Section~\ref{sec:multilinear-summary}; formally defined in
Appendix~\ref{apx:multilinear}) extend these to continuous relaxations. When comparing to the classical
Conforti--Cornu\'{e}jols literature, $\alpha$ and $\alpha_g$ refer to
the CC curvature of a monotone submodular function, where $\alpha = c_f$
by Proposition~\ref{prop:classical}.

\subsection{Discrete Greedy with Pruning}\label{sec:greedy}

The cleanest algorithmic setting for the set-level curvature is the
cardinality-constrained discrete greedy algorithm
(Algorithm~\ref{alg:greedy}).  Here the
curvature inequality can be applied directly to sets along the greedy
trajectory, without first passing through a relaxation.
The multilinear obstruction discussed below motivates the
trajectory machinery needed for the continuous lift in Section~\ref{sec:multilinear-summary}.

The key algorithmic device in the discrete result is a \emph{pruning
loop}: after each
greedy addition, any element whose marginal to the rest of the current
set has turned non-positive is removed.
The pruning loop supplies the local positivity needed by the curvature
recurrence: every nonempty subset of the active set has positive value,
and $f$ is monotone within the active set (Remark~\ref{rem:pruning-loop}).

\begin{theorem}[Greedy guarantee]\label{thm:greedy}
  Let $f$ be submodular with greedy curvature~$c_g$
  (Definition~\ref{def:greedy-curv}), and let $O^* \in \argmax_{|S| \le k} \ff{S}$.
  Put $\bar c_g=\max\{1,c_g\}$.
  Then $\greedy(f,k)$ (Algorithm~\ref{alg:greedy}, Appendix~\ref{apx:algorithm}) returns $A_k$ with
  $\ff{A_k} \ge \frac{1-e^{-\bar c_g}}{\bar c_g}\,\ff{O^*}$.
  If $f$ is monotone, the classical Conforti--Cornu\'ejols analysis gives
  the sharper $\frac{1-e^{-c_g}}{c_g}$ bound when $c_g<1$.
\end{theorem}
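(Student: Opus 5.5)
We will prove the bound through a one-step recurrence on the gap $\ff{O^*}-\ff{A_i}$, the same template as the Conforti--Cornu\'ejols analysis. The curvature inequality is applied only at the trajectory pairs $(O^*,A_i)$ that Definition~\ref{def:greedy-curv} controls.

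Fix $O^*$ attaining the minimum in Definition~\ref{def:greedy-curv}, and write $c=c_g$. Let $A_{i+1}$ be obtained from $A_i$ by the greedy addition of $e_{i+1}$ followed by pruning. By Remark~\ref{rem:pruning-loop}, pruning never decreases the value: an element whose marginal to the rest of the set is non-positive can be removed without loss. Hence
\[
\ff{A_{i+1}}\ge \ff{A_i+e_{i+1}}=\ff{A_i}+\max_e \marge{e}{A_i}.
\]
If $|A_i|<k$, submodularity over the elements of $O^*\setminus A_i$, which number at most $k$, gives
\[
\max_e \marge{e}{A_i}\ge \tfrac1k\bigl(\ff{O^*\cup A_i}-\ff{A_i}\bigr).
\]
If pruning has kept $|A_i|=k$ with no budget left, we use the repeated-terminal-set convention, so the argument must check that the greedy step is still a swap-or-stop with the same gain bound. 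This bookkeeping needs care, but it is routine.

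Next we lower bound $\ff{O^*\cup A_i}$ via greedy curvature. We have $\ff{O^*\cup A_i}\ge \ff{O^*}+(1-c)\ff{A_i\setminus O^*}$ whenever $\ff{A_i\setminus O^*}>0$. Remark~\ref{rem:pruning-loop} says every nonempty subset of the active set has positive value, so this case always applies except when $A_i\setminus O^*=\emptyset$, where the bound holds trivially. For $c\ge1$ we need an upper bound on $\ff{A_i\setminus O^*}$ in terms of $\ff{A_i}$. Monotonicity of $f$ within the active set (again Remark~\ref{rem:pruning-loop}) gives $\ff{A_i\setminus O^*}\le \ff{A_i}$. Since $1-c\le 0$, this yields
\[
\ff{O^*\cup A_i}\ge \ff{O^*}-(c-1)\ff{A_i}.
\]
For $c<1$ the coefficient $1-c$ is positive, and we simply drop the term; this is the loss we absorb by clipping to $\bar c_g=\max\{1,c\}$, since the bound for $\bar c$ is weaker. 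Writing $\bar c$ for $\bar c_g$, we obtain in all cases
\[
\ff{A_{i+1}}\ge \ff{A_i}+\tfrac1k\bigl(\ff{O^*}-\bar c\,\ff{A_i}\bigr).
\]

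It remains to solve the recurrence. It rearranges to
\[
\ff{O^*}-\bar c\,\ff{A_{i+1}}\le\Bigl(1-\tfrac{\bar c}{k}\Bigr)\bigl(\ff{O^*}-\bar c\,\ff{A_i}\bigr),
\]
valid when $\bar c\le k$; for $\bar c>k$ a separate trivial argument or the one-step bound suffices. Starting from $\ff{A_0}=0$ and iterating gives
\[
\bar c\,\ff{A_k}\ge\Bigl(1-\bigl(1-\tfrac{\bar c}{k}\bigr)^k\Bigr)\ff{O^*}\ge(1-e^{-\bar c})\ff{O^*}.
\]
Monotone instances with $c<1$ follow the classical Conforti--Cornu\'ejols route via Proposition~\ref{prop:classical}.

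The main obstacle is the interaction of pruning with the recurrence. We must justify that pruning only helps, that the active set stays within budget so that the $1/k$ averaging is valid, and that the monotone-within-active-set property gives $\ff{A_i\setminus O^*}\le\ff{A_i}$. We would first establish these invariants carefully from the pruning rule. The main invariant is that every element of $A_i$ has positive marginal to $A_i$ minus itself, which by submodularity implies positive marginals to every subset, and hence monotonicity within $A_i$.
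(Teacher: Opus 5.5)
Your proposal is correct and follows essentially the same argument as the paper. Pruning only raises $f$; greedy averaging over $O^*\setminus A_i$ plus submodularity, the trajectory curvature inequality at $(O^*,A_i)$, and local monotonicity $f(A_i\setminus O^*)\le f(A_i)$ then give $f(A_{i+1})\ge f(A_i)+\frac{1}{k}\bigl(f(O^*)-\bar c_g f(A_i)\bigr)$, which you solve for $\bar c_g\le k$ and settle by the first greedy step when $\bar c_g>k$. Your worry about $|A_i|=k$ is vacuous, since each iteration adds one element and pruning only removes, so $|A_i|\le i<k$ whenever a step is taken; the averaging actually relies on $\max_{e\in\mathcal{N}}\Delta(e\mid A_i)\ge 0$ (elements of $A_i$ have zero marginal), and an early stop gives a zero increment against a non-positive right-hand side.
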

\begin{proof}
  Write $A_0,A_1,\ldots,A_k$ for the active sets produced by the algorithm.
  Let $\bar c_g=\max\{1,c_g\}$.  Since $\bar c_g\ge c_g$, the curvature
  inequality is valid with $\bar c_g$ in place of~$c_g$.
  \begin{align*}
    \ff{A_{i+1}} - \ff{A_i}
    &\ge \ff{A_{i+1}'} - \ff{A_i} \tag{pruning can only increase} \\
    &\ge \frac{1}{k}\sum_{o \in O^* \setminus A_i} \marge{o}{A_i} \tag{greedy selection} \\
    &\ge \frac{1}{k}[\ff{O^* \cup A_i} - \ff{A_i}] \tag{submodularity} \\
    &\ge \frac{1}{k}[\ff{O^*} + (1-\bar c_g)\ff{A_i \setminus O^*} - \ff{A_i}] \tag{curvature} \\
    &\ge \frac{1}{k}[\ff{O^*} - \bar c_g\,\ff{A_i}]. \tag{$\bar c_g\ge1$, local monotonicity on $A_i$}
  \end{align*}
  The curvature step uses Definition~\ref{def:greedy-curv} at the pair $(O^*, A_i)$
  along the greedy trajectory; only the trajectory-restricted $c_g$ is needed.
  If $A_i\setminus O^*$ is nonempty, Remark~\ref{rem:pruning-loop} gives
  $\ff{A_i\setminus O^*}>0$, so the pair is admissible in
  Definition~\ref{def:greedy-curv}. If $A_i\setminus O^*=\emptyset$,
  the same display holds with the curvature term equal to zero.
  The last inequality uses $\ff{A_i \setminus O^*} \le \ff{A_i}$, which follows
  from the local monotonicity of $f$ on the pruned active set
  (Remark~\ref{rem:pruning-loop});
  this is the step that requires using a curvature parameter at least~$1$.
  If $\bar c_g \le k$, the recurrence
  $\ff{A_{i+1}} \ge \ff{O^*}/k + (1 - \bar c_g/k)\ff{A_i}$ solves to
  $\ff{A_k} \ge \frac{\ff{O^*}}{\bar c_g}(1 - (1-\bar c_g/k)^k) \ge \frac{\ff{O^*}}{\bar c_g}(1 - e^{-\bar c_g})$.
  If $\bar c_g>k$, the first greedy step already gives
  $\ff{A_1}\ge \ff{O^*}/k > \ff{O^*}/\bar c_g$, and subsequent
  accepted additions and prunings do not decrease the value; this is
  stronger than the displayed bound.
\end{proof}

\begin{remark}[The pruning loop]\label{rem:pruning-loop}
  The pruning loop removes elements with non-positive marginal contribution.
  By submodularity, for any nonempty $S \subseteq A_i$ the telescoping sum
  $\ff{S} = \sum_{j=1}^{|S|} \marge{s_j}{\{s_1,\ldots,s_{j-1}\}}
  \ge \sum_{j=1}^{|S|} \marge{s_j}{A_i \setminus s_j} > 0$,
  since every remaining element has strictly positive marginal to the rest.
  The same argument, applied to $A_i\setminus S$ and telescoping from $S$
  up to $A_i$, gives $\ff{S}\le \ff{A_i}$ for every $S\subseteq A_i$.
  Hence every nonempty subset of $A_i$ has positive value, and $f$ is
  monotone inside the active set.  These are exactly the local properties
  used in the proof above.
  The algorithm does not need to know~$c_g$.
\end{remark}

Algorithm~\ref{alg:greedy} runs in $O(nk)$ oracle queries.
Lazy (priority-queue) greedy can be used between pruning events, but
after pruning shrinks the active set the cached marginal upper bounds
must be rebuilt or revalidated. With that caveat, the practical query
count is often close to $O(n + k\log n)$. Whether a near-linear-time
threshold greedy variant exists is an open question
(see Section~\ref{sec:discussion}).

\noindent\textbf{Toward the multilinear extension.}
A natural question is whether our set-level curvature extends
to the multilinear extension~$F$ of~$f$.  Such an extension would
provide one route to lifting the greedy guarantee from cardinality
constraints to general combinatorial constraints---matroids,
knapsacks, and their intersections---via continuous relaxation
methods.
For strictly positive functions ($\ff{S}>0$ for all nonempty~$S$),
this works: the multilinear curvature~$c_F$ equals the discrete
curvature~$c_f$ (Theorem~\ref{thm:nn-characterization}(a),
Appendix~\ref{apx:multilinear}).
However, there is an obstruction.
When $f$ takes any negative value, $c_F = \infty$
(Proposition~\ref{prop:cF-infty}, Appendix~\ref{apx:multilinear}):
mass can be concentrated on an inclusion-minimal negative set,
driving the curvature denominator to zero.
Section~\ref{sec:multilinear-summary} resolves this via
DMCG-P, which restricts curvature to the algorithm's trajectory
and uses pruning to maintain a positive-slope invariant.

\section{Beyond Cardinality: the DMCG-P Guarantee}\label{sec:multilinear-summary}
%% ================================================================

The discrete greedy result (Theorem~\ref{thm:greedy}) handles
cardinality constraints.
For general combinatorial constraints---matroids, knapsack, and
their intersections---a standard approach is to optimize the
multilinear extension~$F$ over a relaxation of the feasible family.
The classical continuous-greedy analysis requires bounded global
curvature~$c_F$, but for negative-valued~$f$ this diverges
(Proposition~\ref{prop:cF-infty},
Appendix~\ref{apx:multilinear}).
We resolve this via DMCG-P (Discretized Measured Continuous Greedy
with Pruning; Algorithm~\ref{alg:dmcgp-main}), which restricts
curvature to the algorithm's trajectory.

The algorithmic ideas mirror the discrete proof but in fractional
form.  Each step chooses an integral feasible direction by exact
linear optimization over the current multilinear slopes, takes a
small $\delta=1/T$ step, and then zeroes coordinates whose slopes have become
non-positive.  Discretization lets the continuous trajectory be
analyzed through set-level greedy/pruning increments, while the
positive-slope pruning invariant removes the immediate zero-slope
obstruction that makes the global curvature~$c_F$ unusable.  The
theorem below still states finite trajectory curvature as a
hypothesis; in the applications section, decomposable objectives give
one concrete certificate for that hypothesis.

\begin{theorem}[DMCG-P guarantee; see Appendix~\ref{apx:multilinear}]
  \label{thm:dmcgp-summary}
  Let $\mathcal I \subseteq 2^\uni$ be downward-closed with
  $\polytope = \conv\{\mathbf{1}_S : S \in \mathcal I\}$, and assume
  exact linear optimization over~$\mathcal I$.
  For any submodular~$f$ (no monotonicity or non-negativity required)
  with trajectory-restricted fractional curvature
  $c_g^F < \infty$ and $\bar c_g^F\triangleq\max\{1,c_g^F\}$, DMCG-P
  with $T$ steps (step size $\delta=1/T$, with
  $\delta\bar c_g^F \le 1$) returns
  $\tilde S_T \in \polytope$ satisfying
  \[
    F(\tilde S_T)
    \;\ge\;
    \frac{1 - e^{-\bar c_g^F}}{\bar c_g^F}\,\ff{O^*}
    \;-\; O\!\left(\frac{n(n{-}1)\, C_F}{T}\right),
  \]
  where $n=|\uni|$, and
  $C_F = \max_{j \ne \ell,\,\vx\in[0,1]^\uni}|\partial_{j\ell}F(\vx)|$
  is the smoothness constant, with
  $O^* \in \argmax_{S \in \mathcal I} f(S)$.
  The step count~$T$ is independent of the constraint; choosing
  $T$ large enough makes the error negligible at the cost of $O(Tn)$
  oracle calls.
\end{theorem}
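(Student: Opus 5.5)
The plan is to transcribe the discrete proof of Theorem~\ref{thm:greedy} into fractional form. Let $\tilde S_t$ denote the DMCG-P iterate after $t$ steps and pruning. We would derive a one-step recurrence
\[
  F(\tilde S_{t+1}) - F(\tilde S_t) \ge \delta\bigl[\ff{O^*} - \bar c_g^F F(\tilde S_t)\bigr] - \delta^2 n(n-1) C_F/2 ,
\]
and then unroll it over $T$ steps. With $\delta\bar c_g^F\le 1$, the homogeneous part gives $F(\tilde S_T)\ge \frac{\ff{O^*}}{\bar c_g^F}(1-(1-\delta\bar c_g^F)^T)\ge \frac{1-e^{-\bar c_g^F}}{\bar c_g^F}\ff{O^*}$. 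The per-step errors, multiplied by factors of at most $1$, sum to $T\cdot\delta^2 n(n-1)C_F/2=O(n(n-1)C_F/T)$. Feasibility $\tilde S_T\in\polytope$ holds because we add $\delta\mathbf 1_{D_t}$ with $D_t\in\mathcal I$. Pruning zeroes coordinates, so the iterate stays dominated by a convex combination of feasible indicators, and downward-closedness keeps it in $\polytope$.

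\textbf{Step 1 (first-order gain).} Consider the pre-pruning point $\vx'=\tilde S_t+\delta\mathbf 1_{D_t}\circ(\mathbf 1-\tilde S_t)$, using measured continuous greedy. A second-order Taylor expansion of $F$ along this segment gives $F(\vx')-F(\tilde S_t)\ge \delta\langle \nabla F(\tilde S_t)\circ(\mathbf 1-\tilde S_t),\mathbf 1_{D_t}\rangle - \delta^2\binom{n}{2}C_F$. This holds because $F$ is multilinear, so the diagonal Hessian entries vanish, and only the off-diagonal ones, bounded by $C_F$, contribute. Exact linear optimization over $\mathcal I$ lets us compare with the direction $\mathbf 1_{O^*}$. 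Submodularity (concavity of $F$ along nonnegative directions) then gives $\langle\nabla F(\tilde S_t)\circ(\mathbf 1-\tilde S_t),\mathbf 1_{O^*}\rangle\ge F(\tilde S_t\vee\mathbf 1_{O^*})-F(\tilde S_t)$.

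\textbf{Step 2 (pruning does not hurt).} Zeroing a coordinate $j$ whose slope $\partial_jF\le0$ cannot decrease $F$, because $F$ is affine in $x_j$. Zeroing one coordinate at a time, and recomputing slopes before each removal, shows that $F$ weakly increases through the pruning loop. This is the analogue of the first line of the discrete display.

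\textbf{Step 3 (curvature step).} This is the main obstacle. We need
\[
  F(\tilde S_t\vee\mathbf 1_{O^*})\ge \ff{O^*}-(\bar c_g^F-1)\,\Phi_t
\]
with $\Phi_t\le F(\tilde S_t)$, so that the bracket becomes $\ff{O^*}-\bar c_g^F F(\tilde S_t)$. We expect the trajectory-restricted fractional curvature $c_g^F$ (Appendix~\ref{apx:multilinear}) to be defined as exactly the quantity making the first inequality hold at the pairs $(O^*,\tilde S_t)$, with $\Phi_t=F(\tilde S_t\circ\mathbf 1_{\uni\setminus O^*})$; we would simply invoke that definition. The second inequality, $F(\tilde S_t\circ\mathbf 1_{\uni\setminus O^*})\le F(\tilde S_t)$, is the fractional analogue of local monotonicity in Remark~\ref{rem:pruning-loop}. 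After pruning, every coordinate in the support has $\partial_jF(\tilde S_t)>0$. Along the path that lowers the coordinates in $O^*$ from $\tilde S_t$ toward zero, submodularity makes the partial derivatives weakly increase (cross partials are $\le0$), so they stay positive. Integrating along this path gives the monotonicity claim. The same argument shows $\Phi_t\ge0$, or else the curvature term drops out as in the discrete case, which makes the clipping $\bar c_g^F\ge1$ legitimate. If the appendix instead defines $c_g^F$ at the pre-pruning points, we would redo Step 3 at $\vx'$ and absorb the difference into the $O(\delta^2)$ error.
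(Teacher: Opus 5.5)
\paragraph{The gap: Step 1 analyzes the wrong update.}
DMCG-P (Algorithm~\ref{alg:dmcgp-main}) uses the \emph{unweighted} selection $B_t\in\arg\max_{B\in\mathcal I}\sum_{j\in B}\partial_jF(\tilde S_t)$ and the \emph{unweighted} step $\tilde S_t+\delta\mathbf 1_{B_t}$, which is what your own feasibility paragraph assumes. It does not use the measured update $\tilde S_t+\delta\mathbf 1_{D_t}\circ(\mathbf 1-\tilde S_t)$ with $(1-x_j)$-weighted selection. For the actual algorithm your comparison step fails as written. The first-order gain is $\delta\langle\nabla F(\tilde S_t),\mathbf 1_{B_t}\rangle$, and LP optimality only gives $\ge\delta\sum_{j\in O^*}\partial_jF(\tilde S_t)$. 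The union bound, however, carries the weights: $F(\tilde S_t\vee\mathbf 1_{O^*})-F(\tilde S_t)\le\sum_{j\in O^*}(1-(\tilde S_t)_j)\,\partial_jF(\tilde S_t)$.

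Dropping those weights is not automatic. It needs $\partial_jF(\tilde S_t)\ge0$ whenever $(\tilde S_t)_j>0$; for $(\tilde S_t)_j=0$ the weight is already $1$. That condition is exactly the post-pruning slope invariant, and it is the paper's Step~5. So in the paper's proof pruning is load-bearing twice: here, and in the local-monotonicity Step~7. In your measured variant it is needed only once.

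The repair is short, since you already have the invariant. Your second-order Taylor bound applies verbatim to $d=\mathbf 1_{B_t}$, because the segment stays in $[0,1]^{\uni}$ (as $(\tilde S_t)_j\le t/T$). It gives error $\delta^2\binom{|B_t|}{2}C_F$, which is a slightly cleaner and sharper route than the paper's right-endpoint tangent plus mean-value gradient substitution (error $\le\delta^2 n(n-1)C_F$). Then insert the weight-dropping step before applying the union bound. As it stands, though, you have proved the bound for a measured continuous greedy with pruning, not for the stated DMCG-P.

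\paragraph{A smaller point in Step 3.}
Definition~\ref{def:cgF} admits only pairs with $F(\tilde S_t-\tilde S_t\wedge\mathbf 1_{O^*})>0$ strictly, so ``$\Phi_t\ge0$'' is not enough. Your ``same argument'' (lowering the $O^*$-coordinates) shows $\Phi_t\le F(\tilde S_t)$, not positivity. Argue positivity as the paper does:
\begin{itemize}
\item If $w_t=\tilde S_t\circ\mathbf 1_{\uni\setminus O^*}=\mathbf 0$, then $\tilde S_t\vee\mathbf 1_{O^*}=\mathbf 1_{O^*}$ and the inequality is an equality.
\item Otherwise, integrate along the ray $s\,w_t$. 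There $\partial_jF(s\,w_t)\ge\partial_jF(\tilde S_t)>0$ on $\supp(w_t)$, so $\Phi_t>0$ and the pair is admissible.
\end{itemize}
Your hedge about pre-pruning points is moot: the definition uses the post-pruning iterates, which is what makes the invariant available. The rest of your argument matches the paper: pruning is value-improving, zeroing the $O^*$-coordinates can only decrease $F$, the clipping to $\bar c_g^F\ge1$ is valid, and the recurrence unrolls using $1-\delta\bar c_g^F\ge0$ and $(1-\delta\bar c_g^F)^T\le e^{-\bar c_g^F}$ with $f(O^*)\ge f(\emptyset)=0$.
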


\noindent
The finite-curvature hypothesis uses the positive-denominator
convention in Definition~\ref{def:cgF}; pruning supplies the slope
invariant used in the proof, but a certificate is needed to verify
finiteness in a concrete instance.

In the application section below, decomposable objectives
$f = g - \ell$ provide one concrete way to certify the finite-curvature
hypothesis: the pruned trajectory yields an OPT-free ratio
$\hat r_F<1$ with
$c_g^F \le \alpha_g / (1 - \hat r_F)$
(Proposition~\ref{prop:dmcgp-decomp-cert}).
For non-negative non-monotone~$f$, a damped weighted variant
(wDMCG-P; Algorithm~\ref{alg:wdmcgp-damped}) recovers the $e^{-1}$
guarantee (Theorem~\ref{thm:wdmcgp-nonneg-main}).
Table~\ref{tab:regime-summary} in Appendix~\ref{apx:multilinear}
summarizes the full landscape.

%% ================================================================

With the general guarantee in hand, we now apply it to three
concrete function classes, each compared to the best previously
available tool.

%% ================================================================
\section{Applications and Experiments}\label{sec:applications}\label{sec:experiments}
%% ================================================================

Each application class answers a different question about the curvature
framework.
\emph{Symmetric submodular functions} test whether curvature subsumes
specialized structural results: with $c_f=2$, the guarantee recovers
the known $0.432$ ratio for graph cuts and
clustering~\citep{vondrak2013symmetry}
(Section~\ref{sec:apps-symmetric}).
\emph{Decomposable objectives} $f=g-\ell$ test whether curvature can
replace additive bounds: closed-form certificates yield per-instance
multiplicative guarantees that improve on the HFWK additive bound at
moderate cost ratios (Section~\ref{sec:apps-decomp}).
\emph{GCLin diversity objectives} test practical relevance: the relevance-minus-redundancy
family introduced for summarization
by~\citet{lin2011class} has curvature at most~$2\lambda$ under uniform query weights
($\lambda\le1$), giving
a $\lambda$-dependent multiplicative guarantee that remains positive
where the partial-monotonicity parameter
of~\citet{mualem2022using} has already vanished
(Section~\ref{sec:gclin-diversity}).
The decomposable experiments use greedy+pruning
(Algorithm~\ref{alg:greedy}) with distorted
greedy~\citep{harshaw2019submodular} as a baseline; the MaxCut
experiment adds standard greedy and random
greedy~\citep{BFNS2014}; the GCLin experiment uses
greedy with best-prefix selection.
Full details are in Appendix~\ref{apx:exp-setup}.

\subsection{Symmetric Submodular Functions}\label{sec:apps-symmetric}

Symmetric submodular functions---those satisfying
$f(S)=f(\uni\setminus S)$---are a central class in combinatorial
optimization, encompassing graph cuts, hypergraph
partitioning, and clustering
objectives~\citep{vondrak2013symmetry,feldman2017maximizing}.
Any normalized symmetric submodular function is automatically
non-negative: submodularity applied to $S$ and $\uni\setminus S$ gives
$2f(S)=f(S)+f(\uni\setminus S)\ge f(\uni)+f(\emptyset)=0$.
For general non-monotone maximization, the uniform guarantee is~$1/e$;
symmetric objectives are one structured setting where specialized
algorithms exceed it~\citep{feldman2017maximizing,wan2025symmetric}, but
curvature provides a structural explanation for why
greedy+pruning already exceeds this barrier.

\begin{proposition}[Symmetric functions]\label{prop:symmetric}
  Let $f$ be normalized, symmetric ($\ff{S} = \ff{\uni \setminus S}$),
  and submodular, and suppose $f$ has a positive singleton.
  Then $f$ is non-negative and $c_f = 2$.
\end{proposition}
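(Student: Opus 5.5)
Non-negativity is already given in the text just before the proposition: apply submodularity to $S$ and $\uni\setminus S$ to get $2f(S)\ge f(\uni)+f(\emptyset)=0$. The work is in showing $c_f=2$, which splits into $c_f\le 2$ (for every admissible pair the ratio is at least $-1$) and $c_f\ge 2$ (some admissible pair attains $-1$).

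\textbf{Upper bound $c_f\le2$.} Fix $X,Y$ with $Z\triangleq Y\setminus X$ and $f(Z)>0$. Then $X\cup Y=X\cup Z$ with $X\cap Z=\emptyset$, and we need $f(X\cup Z)-f(X)\ge -f(Z)$, that is, $f(X)\le f(X\cup Z)+f(Z)$.
\begin{itemize}
\item By symmetry, $f(X\cup Z)=f(\uni\setminus(X\cup Z))$.
\item Apply submodularity to the sets $\uni\setminus(X\cup Z)$ and $Z$. They are disjoint, so their intersection is empty, and their union is $\uni\setminus X$.
\item This gives $f(\uni\setminus(X\cup Z))+f(Z)\ge f(\uni\setminus X)+f(\emptyset)=f(X)$, using symmetry and normalization in the last step.
\end{itemize}
So every admissible ratio is at least $-1$, hence $c_f\le 2$. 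This is a two-line computation and is the step I would do first.

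\textbf{Lower bound $c_f\ge2$.} Let $e$ be an element with $f(e)>0$, and take $Y=\{e\}$ and $X=\uni\setminus\{e\}$. Then $Y\setminus X=\{e\}$ has positive value, so the pair is admissible. We have $X\cup Y=\uni$ and $f(\uni)=f(\emptyset)=0$ by symmetry, and $f(X)=f(\uni\setminus\{e\})=f(e)$, also by symmetry. The ratio is therefore $(0-f(e))/f(e)=-1$. The minimum in Definition~\ref{def:curv} is at most $-1$, so $c_f\ge2$.

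\textbf{Main obstacle.} Neither step is hard. The one place to be careful is choosing the right pair of sets for submodularity in the upper bound: the pair must make the intersection empty and the union equal to the complement of $X$. The standing assumption $f(e)>0$ (or the stated hypothesis of a positive singleton) is used only to make the extremal pair admissible.
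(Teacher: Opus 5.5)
Your proof is correct and follows the paper's argument: non-negativity via submodularity on $S$ and $\uni\setminus S$, the ratio bound $\ge -1$ via one application of submodularity plus symmetry, and the extremal pair $X=\uni\setminus\{e\}$, $Y=\{e\}$ for the lower bound. The only cosmetic difference is in the upper bound. You apply submodularity to $\uni\setminus(X\cup Z)$ and $Z$, with empty intersection and union $\uni\setminus X$. The paper uses the complementary pair $X\cup Z$ and $\uni\setminus Z$, with union $\uni$ and intersection $X$. By symmetry these give the same inequality.
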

\noindent\textbf{Proof idea.}
Symmetry forces $\ff{X \cup Y} - \ff{X} \ge -\ff{Y \setminus X}$
(ratio $\ge -1$, so $c_f \le 2$); choosing $X = \uni \setminus Y$ gives
$\ff{\uni} - \ff{Y} = -\ff{Y}$ (ratio $= -1$, so $c_f \ge 2$).
See Appendix~\ref{apx:deferred}.

Since $c_g\le c_f$, Theorem~\ref{thm:greedy} gives
\[
  f(A_k) \;\ge\; \frac{1-e^{-2}}{2}\,f(O^*) \;\approx\; 0.432\,f(O^*)
\]
under a cardinality constraint.
While specialized algorithms also achieve above-$1/e$ guarantees for this
class~\citep{feldman2017maximizing, wan2025symmetric}, curvature provides
a \emph{structural lens}: the guarantee
comes from deterministic greedy+pruning via a single parameter~$c_f$,
without using symmetry at runtime, and the same framework seamlessly
covers non-symmetric and negative-valued objectives.
This is also where the comparison with partial monotonicity is clearest:
for cut-type symmetric instances the monotonicity ratio can be zero,
while the curvature bound remains strictly above~$1/e$.
Appendix~\ref{apx:maxcut} validates this on cardinality-constrained
MaxCut instances, where greedy+pruning consistently improves on
standard greedy at large budget ratios.

Symmetric functions demonstrate that curvature subsumes specialized
structural results; we now turn to a class where it \emph{improves} on
the only available prior guarantee.

\subsection{Decomposable Objectives: the Curvature Certificate}\label{sec:apps-decomp}

Cost-penalized objectives $f = g - \ell$ arise whenever a
monotone benefit~$g$ competes with a modular cost~$\ell$: sensor
placement with deployment
costs~\citep{harshaw2019submodular}, regularized feature
selection~\citep{kazemi2021regularized}, and budget-constrained
experimental design are canonical examples.
\citet{harshaw2019submodular} gave the only prior guarantee for
this class: an \emph{additive} bound
$(1-1/e)\,g(O^*)-\ell(O^*)$ that becomes vacuous at moderate
cost ratios.
Curvature provides \emph{multiplicative} certificates.
Writing $g$ with CC curvature~$\alpha_g$ and $\ell$ modular
(so $F = G - L$ with $L$ linear in the multilinear extension),
the pruned trajectory itself gives an OPT-free bound on the curvature
encountered by the algorithm; neither statement requires
non-negativity of~$f$.

\begin{proposition}[OPT-free decomposable trajectory certificate]\label{prop:opt-free}
  Let $f = g - \ell$ where $g: 2^\uni \to \reals_{\ge 0}$ is monotone submodular
  with Conforti--Cornu\'{e}jols curvature $\alpha_g$, and $\ell: 2^\uni \to \reals_{\ge 0}$
  is modular.
  Let $A_0,A_1,\ldots$ be the active-set trajectory of greedy with pruning, and define
  \[
    \hat r
    \;=\;
    \max_{i}\max_{e\in A_i}
    \frac{\ell(e)}{\Delta_g(e\mid A_i\setminus\{e\})},
  \]
  with value~$0$ if all active sets are empty.  Then $\hat r<1$ and
  $c_g\le \alpha_g/(1-\hat r)$.
\end{proposition}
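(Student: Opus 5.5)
The plan is to prove the two claims separately. Both rest on the local positivity supplied by the pruning loop (Remark~\ref{rem:pruning-loop}), together with the classical Conforti--Cornu\'ejols inequality for $g$.

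\emph{Step 1: $\hat r<1$.} After pruning, every $e\in A_i$ satisfies $\marge{e}{A_i\setminus\{e\}}>0$. Since $\ell$ is modular, this reads $\Delta_g(e\mid A_i\setminus\{e\})-\ell(e)>0$. Because $\ell(e)\ge 0$, this gives $\Delta_g(e\mid A_i\setminus\{e\})>\ell(e)\ge 0$. So each ratio in the definition of $\hat r$ is well defined and strictly less than~$1$. The trajectory has finitely many active sets, each of them finite, so the maximum is attained and $\hat r<1$. The empty case gives $\hat r=0$ by convention.

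\emph{Step 2: a cost-to-benefit bound on subsets of active sets.} Fix $i$ and any $B\subseteq A_i$, ordered as $b_1,\dots,b_m$. I will show $\ell(B)\le \hat r\, g(B)$. By the definition of $\hat r$, $\ell(b_j)\le \hat r\,\Delta_g(b_j\mid A_i\setminus\{b_j\})$. Submodularity of $g$ gives $\Delta_g(b_j\mid A_i\setminus\{b_j\})\le \Delta_g(b_j\mid\{b_1,\dots,b_{j-1}\})$, using $\hat r\ge 0$. Summing over $j$ and telescoping yields $\ell(B)\le \hat r\,g(B)$, since $g(\emptyset)=0$ (this holds because $f$ is normalized and $\ell(\emptyset)=0$).

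\emph{Step 3: verify the curvature inequality at each trajectory pair.} Fix $O^*$ and $i$ with $f(B)>0$, where $B=A_i\setminus O^*$. Set $c=\alpha_g/(1-\hat r)$. By Definition~\ref{def:greedy-curv}, it suffices to show
\[
f(O^*\cup A_i)-f(O^*)\ge(1-c)\,f(B).
\]
Modularity gives $f(O^*\cup A_i)-f(O^*)=\bigl(g(O^*\cup B)-g(O^*)\bigr)-\ell(B)$. For monotone $g$, Proposition~\ref{prop:classical} identifies $\alpha_g$ with $c_g$'s set-level counterpart, namely the set curvature of $g$. Inequality~\eqref{eq:curvature-ineq} applied to $g$ then gives $g(O^*\cup B)-g(O^*)\ge(1-\alpha_g)g(B)$. (If $g(B)=0$, this is immediate from monotonicity of $g$.) So it is enough to show
\[
(1-\alpha_g)g(B)-\ell(B)\ge (1-c)\bigl(g(B)-\ell(B)\bigr).
\]
Subtracting the right side from the left, the difference equals $\alpha_g\bigl[(g(B)-\ell(B))/(1-\hat r)-g(B)\bigr]$. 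This is non-negative exactly when $\ell(B)\le \hat r\,g(B)$, which is Step~2. Taking the minimum over all admissible pairs gives $c_g\le \alpha_g/(1-\hat r)$.

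The main obstacle is Step~3: one has to see that the right intermediate quantity is $g(B)$, so that the classical curvature of $g$ and the cost bound combine linearly. Once Step~2's uniform bound $\ell(B)\le\hat r\,g(B)$ on subsets of the active set is in hand, the algebra closes. Everything else is bookkeeping.
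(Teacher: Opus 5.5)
Your proposal is correct and follows essentially the same route as the paper. Pruning gives $\hat r<1$; telescoping with submodularity yields $\ell(A_i\setminus O^*)\le \hat r\, g(A_i\setminus O^*)$; and this is combined with the Conforti--Cornu\'ejols inequality $g(O^*\cup A_i)-g(O^*)\ge(1-\alpha_g)\,g(A_i\setminus O^*)$. The differences are cosmetic: you obtain that inequality from Proposition~\ref{prop:classical} rather than by inline telescoping, and you finish with a direct linear comparison where the paper notes that $((1-\alpha_g)-\tau)/(1-\tau)$ is decreasing in $\tau=\ell(B)/g(B)$; your separate $g(B)=0$ case is also vacuous, since $f(B)>0$ and $\ell\ge 0$ force $g(B)>0$.
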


The denominator is observable after the run.  Moreover, pruning makes it
valid: every active element has positive $f$-marginal to the rest of the
current active set, so
$\Delta_g(e\mid A_i\setminus\{e\})>\ell(e)$.
The proof in Appendix~\ref{apx:certificate} shows that this elementwise
control lifts to every set difference $A_i\setminus O^*$, avoiding any
need to know the optimal solution.  A sharper OPT-aware ratio is stated
in Appendix~\ref{apx:deferred}.

A fractional analogue for DMCG-P trajectories, giving
$c_g^F \le \alpha_g/(1-\hat r_F)$ via the pruning invariant and
DR-submodularity, is proved in
Appendix~\ref{apx:dmcgp}
(Proposition~\ref{prop:dmcgp-decomp-cert}).

In the dominance regime $g \ge (1+1/\delta)\ell$
\citep{kazemi2021regularized}, we get $c_g \le (1+\delta)\alpha_g$
and the curvature guarantee dominates the HFWK additive bound for
all $\delta > 0$; HFWK becomes vacuous at $\delta \ge e-1 \approx 1.72$.
When $f$ takes negative values (dominance fails), HFWK is inapplicable,
while the trajectory certificates still yield the $\bar c_g$-curvature
bound via Theorem~\ref{thm:greedy}.  They are per-instance: after running
the algorithm, one computes the removal-marginal ratios along the pruned
trajectory and reports the resulting bound.

\noindent\textbf{Experimental validation (small instances).}
Small instances ($n = 20$, $k = 5$, 10 seeds) with exact OPT
validate the curvature framework on three decomposable objectives:
Bayesian experimental design ($\alpha_g = 0.667$), coverage
($\alpha_g = 1$), and feature selection ($\alpha_g \approx 0.79$).

\begin{table}[t]
\centering
\caption{Tier~1 (small instances, exact OPT) results: $n=20$, $k=5$.
  GP = greedy+pruning, DG = distorted greedy.
  Curv.\ guar.\ is $(1-e^{-\bar c_g})/\bar c_g$ using the empirical $c_g$.
  Cert.\ guar.\ is $(1-e^{-\bar{\hat c}_g})/\bar{\hat c}_g$, where
  $\hat{c}_g = \alpha_g/(1-\hat{r})$ is the OPT-free removal-marginal
  certificate (Proposition~\ref{prop:opt-free}).
  Under additional assumptions, a tighter certificate
  is available (Appendix~\ref{apx:certificate}).
  Values are means over 10 seeds.}
\label{tab:tier1}
\small
\begin{tabular}{@{}llccccccc@{}}
\toprule
Application & $\rho$ & GP ratio & DG ratio & $c_g$ & Curv.\ guar. & Cert.\ guar. & HFWK \\
\midrule
\multirow{4}{*}{Exp.\ Design}
  & $0.09$ & $1.00$ & $1.00$ & $0.13$ & $0.95$ & $0.63$ & $0.60$ \\
  & $0.29$ & $1.00$ & $1.00$ & $0.18$ & $0.93$ & $0.61$ & $0.49$ \\
  & $0.56$ & $0.98$ & $0.87$ & $0.76$ & $0.74$ & $0.38$ & $0.17$ \\
  & $0.76$ & $0.94$ & $0.60$ & $1.61$ & $0.58$ & $0.16$ & \textbf{$-0.56$} \\
\midrule
\multirow{4}{*}{Coverage}
  & $0.08$ & $0.98$ & $0.98$ & $0.68$ & $0.75$ & $0.58$ & $0.60$ \\
  & $0.31$ & $0.94$ & $0.94$ & $1.17$ & $0.60$ & $0.27$ & $0.46$ \\
  & $0.50$ & $0.89$ & $0.89$ & $1.41$ & $0.58$ & $0.14$ & $0.27$ \\
  & $0.68$ & $0.84$ & $0.78$ & $2.49$ & $0.37$ & $0.11$ & \textbf{$-0.14$} \\
\midrule
\multirow{4}{*}{Feature Sel.}
  & $0.11$ & $1.00$ & $1.00$ & $0.00$ & $1.00$ & $0.63$ & $0.59$ \\
  & $0.39$ & $1.00$ & $1.00$ & $0.00$ & $1.00$ & $0.48$ & $0.40$ \\
  & $0.57$ & $1.00$ & $0.91$ & $0.00$ & $1.00$ & $0.26$ & $0.10$ \\
  & $0.77$ & $1.00$ & $0.90$ & $0.00$ & $1.00$ & $0.16$ & \textbf{$-2.01$} \\
\bottomrule
\end{tabular}
\end{table}

Table~\ref{tab:tier1} summarizes.
Greedy+pruning achieves empirical ratios $\ge 0.84$ even at high cost
levels, substantially outperforming distorted greedy.
The curvature guarantee is conservative---at $c_g \approx 1.6$ the
guarantee is $58\%$ against an observed ratio of $94\%$.
The OPT-free certificate (Proposition~\ref{prop:opt-free})
provides a formal guarantee computable from the
pruned trajectory alone, without the small-penalty condition
required by the singleton approach; it remains valid even for coverage
($\alpha_g=1$).
The feature-selection tier is a near-modular sanity check: the
$\ell_1$-regularized mutual information is nearly linear on these
small instances, giving $c_g=0$ along the tested trajectories.
Experimental design and
coverage provide the main diminishing-returns stress tests.
Full cost-sweep curves and curvature-vs-ratio scatter plots appear
in Appendix~\ref{apx:exp-additional}.

\noindent\textbf{Moderate-scale instances.}
At moderate scales ($n$ up to $300$), exact OPT is intractable.
Trajectory-based curvature diagnostics stay positive at every cost level,
while HFWK becomes vacuous at moderate costs
(Appendix~\ref{apx:exp-additional}, Table~\ref{tab:tier2} and Figure~\ref{fig:tier2}).

\subsection{GCLin Diversity Objective}\label{sec:gclin-diversity}

Graph-cut-based objectives are a workhorse for document
summarization and information retrieval.
\citet{lin2011class} showed that the relevance--redundancy
tradeoff---each selected passage adds coverage of source
material but also redundancy with other selections---is
naturally captured by this family.
The graph-cut-linear (GCLin) objective
\(\ff{S}=R(S)-\lambda D(S)\), where
\(R(S)=\sum_{i\in\uni}\sum_{j\in S}s_{i,j}\) rewards relevance
and \(D(S)=\sum_{\substack{i,j\in S\\i\neq j}}s_{i,j}\) penalizes
redundancy~\citep{lin2011class}, instantiates this
directly.  LLM context selection---choosing which retrieved
passages to include in a prompt---poses the same structural
problem: each passage adds relevant evidence but also
redundancy, and the context budget imposes a cardinality
constraint.  Our experiments on Multi-News
(below) show that
mean trajectory curvature follows the uniform-query reference curve~$2\lambda$,
showing the redundancy penalty meaningfully changes the curvature profile.

\begin{proposition}\label{prop:movie-curv}
  Suppose the GCLin diversity-relevance objective has
  symmetric non-negative similarities, uniform query weights ($w_i = 1$),
  and $0\le\lambda\le1$.  Then $c_f \le 2\lambda$.
\end{proposition}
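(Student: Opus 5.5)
The plan is a direct computation of the curvature ratio in Definition~\ref{def:curv}. Write $f(S)=R(S)-\lambda D(S)$ with $R(S)=\sum_{j\in S} d_j$, where $d_j=\sum_{i\in\uni}s_{i,j}$. For disjoint $A,B$ let $s(A,B)=\sum_{a\in A,b\in B}s_{a,b}$. Fix $X,Y$ with $Z\triangleq Y\setminus X$ and $\ff{Z}>0$. Since $X\cup Y=X\cup Z$ with $X\cap Z=\emptyset$, it suffices to show
\[
  \ff{X\cup Z}-\ff{X}\;\ge\;(1-2\lambda)\,\ff{Z},
\]
which by \eqref{eq:curvature-ineq} gives $c_f\le 2\lambda$.

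\textbf{Step 1: exact marginal formula.} $R$ is modular, so $R(X\cup Z)-R(X)=R(Z)$. For the redundancy term, $D(X\cup Z)$ sums over ordered pairs of distinct elements. Splitting these pairs into pairs inside $X$, pairs inside $Z$, and cross pairs gives, by symmetry of $s$,
\[
  D(X\cup Z)=D(X)+D(Z)+2s(X,Z).
\]
Hence
\[
  \ff{X\cup Z}-\ff{X}=R(Z)-\lambda D(Z)-2\lambda s(X,Z)=\ff{Z}-2\lambda\, s(X,Z).
\]

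\textbf{Step 2: bound the cross term by $\ff{Z}$.} This is the only step that uses the hypotheses, and I expect it to be the main point to get right. I want $s(X,Z)\le \ff{Z}$. Uniform query weights make $R(Z)=\sum_{z\in Z}\sum_{i\in\uni}s_{i,z}$ a sum over \emph{all} $i\in\uni$. Since the similarities are non-negative, I can drop every term except those with $i\in X$ and those with $i\in Z$, $i\neq z$. These two groups are disjoint because $X\cap Z=\emptyset$, so
\[
  R(Z)\;\ge\; s(X,Z)+D(Z).
\]
Using $\lambda\le 1$ and $D(Z)\ge 0$, this gives $R(Z)-\lambda D(Z)\ge s(X,Z)+(1-\lambda)D(Z)\ge s(X,Z)$, i.e.\ $s(X,Z)\le\ff{Z}$. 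The care needed is to check that the diagonal terms $s_{z,z}$ (if present) and the double counting in $D$ match the factor $2$ from Step~1; both are harmless since every dropped term is non-negative.

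\textbf{Step 3: combine.} Since $\lambda\ge 0$, Steps~1 and~2 give
\[
  \ff{X\cup Z}-\ff{X}\;\ge\;\ff{Z}-2\lambda\,\ff{Z}=(1-2\lambda)\ff{Z}.
\]
This holds for every admissible pair $(X,Y)$, so the minimum ratio in Definition~\ref{def:curv} is at least $1-2\lambda$, and therefore $c_f\le 2\lambda$.
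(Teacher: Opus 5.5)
Your proof is correct and follows essentially the same route as the paper. Both arguments first establish the exact marginal identity $f(X\cup Z)-f(X)=f(Z)-2\lambda\,s(X,Z)$, then bound the cross term by $f(Z)$ using uniform query weights, non-negative similarities, and $\lambda\le 1$. The only difference is bookkeeping: the paper first enlarges $X$ to $\mathcal{N}\setminus Z$ and subtracts the diagonal-inclusive block $\sum_{i,j\in Z}s_{i,j}$, whereas you drop the unneeded non-negative terms of $R(Z)$ directly.
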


Figure~\ref{fig:movie-bounds}(a) compares the resulting uniform-query
curvature guarantee $(1-e^{-2\lambda})/(2\lambda)$ for
$\lambda\le1$ with the
partial-monotonicity-style guarantee of~\citet{mualem2022using}.
Partial monotonicity measures value loss under additions.
Curvature measures diminishing returns along the same tradeoff.
For GCLin, the partial-monotonicity bound vanishes by
$\lambda=1$, while the uniform-query curvature bound remains positive at
that boundary.  This is a relevant separation for diversity-heavy passage selection.
For $\lambda \le 1/2$, $f$ is monotone and both the discrete and DMCG-P
guarantees give $(1-e^{-2\lambda})/(2\lambda)$.
See Appendix~\ref{apx:deferred} for the proof.

\noindent\textbf{Curvature sweep on Multi-News passage selection.}\label{sec:lambda-sweep}
We vary curvature via~$\lambda$ on Multi-News~\citep{fabbri2019multinews}:
for 100 validation examples ($n = 200$, $k = 10$, TF-IDF embeddings),
greedy selects passages under $\text{GCLin}_\lambda$ for
$\lambda \in \{0.1, 0.25, 0.5, 0.75, 1.0, 1.5\}$ and a local
\texttt{openai/gpt-oss-120b} endpoint summarizes them
(prompt and decoding details in Appendix~\ref{apx:exp-setup}).

Figure~\ref{fig:lambda-sweep} shows two key findings:
\textbf{(b)}~the mean trajectory-curvature proxy
$\hat{c}_{\text{traj}}$ stays below the uniform-query reference bound~$2\lambda$
for $\lambda\le1$; individual instances occasionally exceed it
because the TF-IDF query weights are non-uniform
(the proposition assumes uniform weights).
$\lambda=1.5$ is reported only as an empirical high-redundancy diagnostic;
\textbf{(c)}~the resulting clipped trajectory guarantee is tighter than the
uniform-query reference curve for $\lambda\le1$ and remains informative as a
diagnostic at $\lambda=1.5$.

\begin{figure}[t]
  \centering
  \begin{subfigure}[t]{0.31\textwidth}
    \centering
    \includegraphics[width=\textwidth]{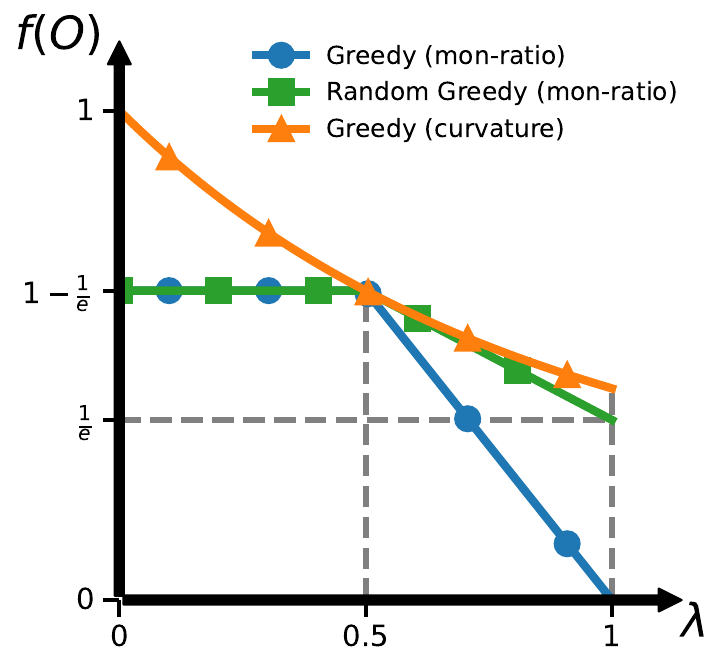}
    \caption{}
  \end{subfigure}\hfill
  \begin{subfigure}[t]{0.31\textwidth}
    \centering
    \includegraphics[width=\textwidth]{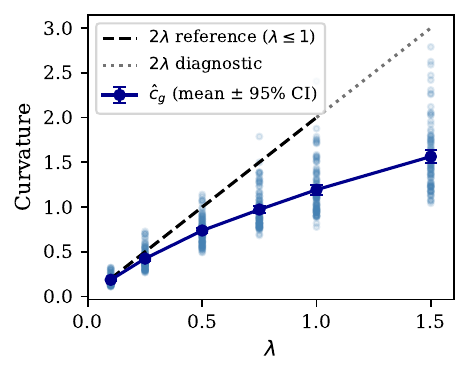}
    \caption{}
  \end{subfigure}\hfill
  \begin{subfigure}[t]{0.31\textwidth}
    \centering
    \includegraphics[width=\textwidth]{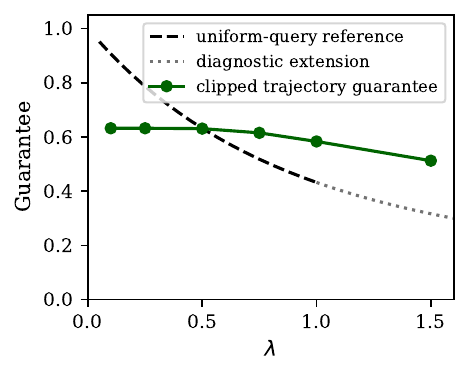}
    \caption{}
  \end{subfigure}
  \caption{GCLin diversity objective: theoretical and empirical analysis.
    \textbf{(a)}~Uniform-query curvature guarantee
    $(1{-}e^{-2\lambda})/(2\lambda)$ for $\lambda\le1$,
    compared with the partial-monotonicity bound.
    \textbf{(b)--(c)}~$\lambda$-sweep on Multi-News
    ($n{=}200$, $k{=}10$, 100~examples):
    \textbf{(b)}~the mean empirical trajectory-curvature proxy stays below the
    uniform-query reference bound $2\lambda$ for $\lambda\le1$;
    \textbf{(c)}~the clipped trajectory guarantee is tighter than the
    uniform-query reference curve.}
  \label{fig:lambda-sweep}
  \label{fig:movie-bounds}
\end{figure}

\noindent\textbf{Summary across applications.}
Symmetric functions: curvature \emph{subsumes} specialized results ($c_f{=}2$ recovers $0.432$ without using symmetry).
Decomposable objectives: curvature \emph{improves} on additive bounds, which fail at moderate cost ratios.
GCLin diversity: mean trajectory curvature follows the uniform-query reference curve~$2\lambda$ and yields tighter per-instance curves.

%% ================================================================
\section{Discussion and Conclusion}\label{sec:conclusion}\label{sec:discussion}
%% ================================================================

Curvature provides a single, continuous parameter that extends the
classical Conforti--Cornu\'{e}jols theory beyond monotonicity,
giving the first curvature-controlled multiplicative guarantees for
negative-valued submodular objectives.
Greedy with pruning achieves $(1-e^{-\bar c_g})/\bar c_g$ under
cardinality constraints; DMCG-P extends this to general combinatorial
constraints under explicit trajectory-curvature hypotheses.
Experiments on three function classes confirm that the guarantees are
conservative, and that trajectory-based curvature is
often substantially tighter than worst-case bounds.

A key limitation is that curvature is trajectory-specific: $c_g$
certifies the realized greedy-with-pruning path, not an input parameter
known before the algorithm runs.
For decomposable objectives, the removal-marginal certificate
(Proposition~\ref{prop:opt-free}) gives an OPT-free post-hoc bound from
the pruned trajectory, with OPT-aware and multilinear variants deferred
to the appendix.
Comparable certificates for broader value-oracle classes remain open.

Concrete open directions include: (i)~near-linear threshold or streaming
variants that preserve the local pruning invariant; (ii)~tight
value-oracle lower bounds for curvature-controlled greedy under
cardinality and matroid constraints; (iii)~certificates for weighted
relevance-minus-redundancy objectives, including nonuniform query
weights; and (iv)~algorithms that adapt pruning or selection to the
observed trajectory curvature.

%% ================================================================
%% References
%% ================================================================
\bibliographystyle{plainnat}
\bibliography{references}

%% ================================================================
%% Appendix
%% ================================================================
\appendix

\section{Additional Related Work}\label{sec:related}
%% Works already discussed in the introduction (Conforti--Cornu\'{e}jols,
%% Sviridenko--Vondr\'{a}k--Ward, BFNS, Buchbinder--Feldman, Harshaw et
%% al., Kazemi et al., partial monotonicity) are not repeated here.

\noindent\textbf{Symmetric submodular functions.}
\citet{feldman2017maximizing} proved that a variant of measured continuous
greedy achieves $(1-e^{-2})/2 \approx 0.432$ for maximizing a non-negative
symmetric submodular function under a cardinality constraint.
Section~\ref{sec:apps-symmetric} explains how the same threshold follows
from Proposition~\ref{prop:symmetric}: symmetric functions have $c_f=2$,
and hence $c_g\le2$.
More recently, \citet{wan2025symmetric} gave a deterministic
$O(kn)$-query algorithm achieving the same $0.432$ ratio for symmetric
functions under a cardinality constraint, confirming the tractability of
this class.
Both results are specific to symmetric functions and require
non-negativity, while our framework derives the same ratio from a
single structural parameter that applies to all submodular functions,
including those that go negative.

\noindent\textbf{Partial monotonicity.}
\citet{mualem2022using} introduced the \emph{monotonicity ratio}
$m = \min_{S \subseteq T} f(T)/f(S)$ as a structural parameter for
non-monotone submodular maximization, proving that standard greedy achieves
$m(1-1/e)$ and randomized greedy achieves $m(1-1/e) + (1-m)/e$ under a
cardinality constraint.
Their framework requires $f \geq 0$ throughout.
For symmetric functions, $m = 0$ (since $f(\emptyset) = f(N) = 0$ for cut
functions), so the standard greedy guarantee is vacuous and the random greedy
guarantee reduces to~$1/e$.
In contrast, our curvature guarantee of $(1-e^{-2})/2 \approx 0.432$ for
symmetric functions strictly exceeds~$1/e$ and uses a deterministic algorithm.
More broadly, curvature and partial monotonicity capture different structural
properties: the monotonicity ratio measures how much value can be lost by
adding elements (a global property), while curvature measures the rate of
diminishing returns along the greedy trajectory (a local property).
Neither parameter dominates the other---functions with high curvature can have
moderate monotonicity ratio and vice versa---but curvature applies to
possibly-negative functions where partial monotonicity is undefined.

The experimental comparison in Section~\ref{sec:applications} confirms the
theoretical separation: the partial-monotonicity guarantee is either
vacuous ($m = 0$ for symmetric-type instances) or inapplicable ($f < 0$
at moderate-to-high cost levels), while the curvature guarantee remains
meaningful throughout.
The GCLin analysis (Figure~\ref{fig:movie-bounds}) further illustrates
this: the curvature bound stays positive in regimes where the
monotonicity-ratio bound vanishes, but the two parameters are
incomparable in general.

\noindent\textbf{Distributional curvature.}
The multilinear curvature $c_F$ is inherently distributional:
it averages the pointwise curvature inequality over product distributions
via the coupling in Theorem~\ref{thm:nn-characterization}.
One can define a \emph{distributional curvature} $c_D$ as the expected curvature
under random subset pairs drawn from such distributions.
For strictly positive $f$, the coupling proof gives $c_D = c_F = c_f$:
the distribution cannot improve on the worst case.
For functions that go negative, $c_D$ may be finite even when $c_F = \infty$,
suggesting that distributional curvature could yield tighter bounds in some regimes.

\noindent\textbf{Comparison with additive guarantees.}
The additive-ratio framework of \citet{harshaw2019submodular}---and its
extensions~\citep{kazemi2021regularized,lu2024regularized,bodek2022maximizing}---gives
$f(S) \ge (1-1/e)\,g(O^*) - \ell(O^*)$
for decomposable $f = g - \ell$.
This becomes vacuous when
$\rho = \ell(O^*)/g(O^*) \ge 1 - 1/e$.
Subsequent works~\citep{kazemi2021regularized,lu2024regularized,nikolakaki2021efficient,bodek2022maximizing}
extend distorted greedy to streaming, distributed, and stochastic settings,
all inheriting the additive guarantee structure.
The curvature framework provides a multiplicative alternative
in the offline setting; extending it to streaming, distributed,
and stochastic variants is a natural direction.

\noindent\textbf{Weak submodularity and generalizations.}
\citet{chen2018weakly} studied maximization of $\gamma$-weakly submodular
functions, which relax submodularity by a multiplicative ratio.
The HFWK distorted greedy guarantee also incorporates the weak
submodularity parameter.
Our curvature is orthogonal: it measures diminishing-returns \emph{degradation}
(how much unions destroy value) rather than the submodularity gap.
\citet{bian2017continuous} studied DR-submodular maximization in continuous
domains; our multilinear extension curvature $c_F$
(Appendix~\ref{apx:multilinear}) provides a bridge between the discrete and
continuous settings: $c_F = c_f$ when $f$ is strictly positive,
and $c_F = \infty$ when $f$ takes negative values
(Theorem~\ref{thm:nn-characterization}).

\section{Greedy with Pruning: Pseudocode}\label{apx:algorithm}

\begin{algorithm}[h]
  \DontPrintSemicolon
  \caption{$\greedy(f, k)$}\label{alg:greedy}
  \KwIn{oracle $f$, cardinality constraint $k$}
  $A_0 \gets \emptyset$\;
  \For{$i \gets 1$ to $k$}{
    $a_i \gets \argmax_{x \in \uni} \marge{x}{A_{i-1}}$ \tcp*{break ties by the fixed ground-set order}
    \If{$\marge{a_i}{A_{i-1}} \le 0$\label{line:greedy-if}}{
      $A_k \gets A_{i-1}$\; \textbf{break}\;
    }
    $A_i' \gets A_{i-1} \cup \{a_i\}$\;
    \While{$\exists\, a \in A_i'$ with $\marge{a}{A_i' \setminus \{a\}} \le 0$\label{line:greedy-while}}{
      choose the first such $a$ in the fixed ground-set order\;
      $A_i' \gets A_i' \setminus \{a\}$\;
    }
    $A_i \gets A_i'$\;
  }
  \KwRet{$A_k$}\;
\end{algorithm}

\section{Measured Continuous Greedy Proof}\label{apx:mcg}

This appendix proves the conditional Theorem~\ref{thm:mcg} (MCG guarantee)
from the supporting step lemma plus the explicit monotonicity assumption
below.  MCG~\citep{feldman2017maximizing,calinescu2011maximizing}
operates on the multilinear extension of $f$ subject to any
downward-closed solvable polytope $\polytope$ (matroid, knapsack, or
their intersection, when a separation oracle for $\polytope$ is
available); the per-step analysis below depends only on this polytope
structure and not on the specific constraint form.

\begin{lemma}[{\citet[Corollary 3.2]{feldman2017maximizing}}]\label{lem:mcg-step}
  For every $0 \le t < T$:
  $F(\mathbf{y}(t+\delta)) - F(\mathbf{y}(t))
  \ge \delta[F(\mathbf{y}(t) \lor \mathbf{1}_{O^*}) - F(\mathbf{y}(t))]
  - O(n^3\delta^2)\ff{O^*}$.
\end{lemma}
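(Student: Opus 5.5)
We need to prove the statement right above: Lemma~\ref{lem:mcg-step}, the single-step bound for measured continuous greedy (Feldman's Corollary 3.2). In the update $\mathbf{y}(t+\delta)=\mathbf{y}(t)+\delta\,\mathbf{1}_{I(t)}\circ(\mathbf{1}-\mathbf{y}(t))$, the direction $I(t)$ maximizes the linear objective $\sum_{u\in I}(1-y_u)\partial_u F(\mathbf{y}(t))$ over $\polytope$. The plan has three steps: a second-order Taylor expansion along the step, a comparison against $O^*$ using the optimality of $I(t)$, and a concavity argument along nonnegative directions. Write $\mathbf{y}=\mathbf{y}(t)$ and $\mathbf{d}=\mathbf{1}_{I(t)}\circ(\mathbf{1}-\mathbf{y})$, so that $\mathbf{0}\le\mathbf{d}\le\mathbf{1}$ coordinatewise.

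\textbf{Step 1: Taylor expansion.} $F$ is multilinear, so $\partial_{uu}F=0$. Hence
\[
F(\mathbf{y}+\delta\mathbf{d})-F(\mathbf{y})\ge \delta\langle\nabla F(\mathbf{y}),\mathbf{d}\rangle-\tfrac12\delta^2\textstyle\sum_{u\ne v}\max|\partial_{uv}F|.
\]
Each mixed partial is a difference of four $f$-values. This gives $|\partial_{uv}F|=O(\max_S|f(S)|)$, and the error term is $O(n^2\delta^2)\max_S|f(S)|$.

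\textbf{Step 2: comparison with $O^*$.} By optimality of $I(t)$, $\langle\nabla F(\mathbf{y}),\mathbf{d}\rangle\ge\sum_{u\in O^*}(1-y_u)\partial_uF(\mathbf{y})$. Using multilinearity, $(1-y_u)\partial_uF(\mathbf{y})=F(\mathbf{y}\vee\mathbf{e}_u)-F(\mathbf{y})$. Submodularity makes $F$ concave along nonnegative directions, and $\mathbf{y}\vee\mathbf{1}_{O^*}-\mathbf{y}=\mathbf{1}_{O^*}\circ(\mathbf{1}-\mathbf{y})\ge\mathbf{0}$. Therefore
\[
\textstyle\sum_{u\in O^*}(1-y_u)\partial_uF(\mathbf{y})\ge F(\mathbf{y}\vee\mathbf{1}_{O^*})-F(\mathbf{y}).
\]
This is the main term of the lemma.

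\textbf{Step 3: stating the error as a multiple of $f(O^*)$ (main obstacle).} In Feldman's nonnegative setting one bounds $|\partial_{uv}F|\le 2\max_S f(S)$. One then relates $\max_S f(S)$ to $f(O^*)$ up to factors of $n$: by submodularity and normalization, $\max_S f(S)\le\sum_u f(\{u\})\le n\max_u f(\{u\})$, and each singleton is feasible when $\polytope$ contains the unit vectors. This yields the $O(n^3\delta^2)f(O^*)$ form. For general possibly-negative $f$ the same constant must be controlled, and $\min_S f(S)$ can be very negative. So I would state the error as $O(n^2\delta^2)C_F$ and invoke the cited corollary only under the nonnegativity and singleton-feasibility assumptions used there. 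Securing this normalization is the step I expect to need care.
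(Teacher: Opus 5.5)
Your proposal is correct. The paper does not prove Lemma~\ref{lem:mcg-step}; it imports Feldman's Corollary~3.2 as a black box, and your three steps are a self-contained reconstruction of the standard argument behind it.

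\textbf{What you reproduce.} The Lagrange-remainder bound on the multilinear polynomial, using $\partial_{uu}F=0$ and $|\partial_{uv}F|\le 2\max_S f(S)$ for $f\ge 0$, is a clean way to get the $O(n^2\delta^2)\max_S f(S)$ remainder. The bound $\max_S f(S)\le n\,f(O^*)$ then reproduces the $n^3$.

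\textbf{What it adds over the citation.} Your version shows exactly where the hypotheses enter.
\begin{itemize}
\item Step~2 needs only submodularity and no sign information on slopes, because the MCG selection is already weighted by $(1-y_u)$. Contrast Step~5 of Lemma~\ref{lem:dmcgp-descent}: there the unweighted DMCG-P selection forces use of the pruning invariant (Lemma~\ref{lem:dmcgp-slope}).
\item Nonnegativity and singleton feasibility enter only in Step~3. The paper invokes the lemma only inside Theorem~\ref{thm:mcg}, where $f$ is strictly positive, so your nonnegativity caveat costs nothing there.
\item Singleton feasibility ($\mathbf{1}_{\{u\}}\in\polytope$ for every $u$, or the usual removal of elements infeasible as singletons) is never stated in the paper. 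It is silently inherited from the citation, so you are right to make it explicit.
\end{itemize}

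Your fallback $O(n^2\delta^2)\,C_F$ form is the same shape of error the paper itself uses for possibly negative $f$ in the DMCG-P analysis.
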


\begin{assumption}[MCG monotonicity condition]\label{assump:mcg-mono}
  $F(\mathbf{x}) \le F(\mathbf{y}(t))$ for all $0 \le t \le T$
  and $\mathbf{x} \le \mathbf{y}(t)$.
\end{assumption}
This is the auxiliary monotonicity property supplied by
\citet[Lemma~3.3]{feldman2017maximizing} in the settings covered by
their MCG analysis. We state it explicitly because strict positivity
alone does not imply it for arbitrary non-monotone submodular functions.
The pruned DMCG-P analysis does not assume this global condition: it
uses the local positive-slope invariant enforced by its pruning loop.

\noindent\textbf{Restated claim.}
Under the hypotheses of Theorem~\ref{thm:mcg}---strict positivity,
submodularity, curvature~$c_f>1$, the MCG monotonicity condition above,
and a downward-closed solvable polytope---measured continuous greedy
returns a point $\mathbf y(T)$ satisfying
\[
  F(\mathbf{y}(T)) \ge
  \frac{1-e^{-c_fT}-o(1)}{c_f}\,\ff{O^*},
\]
where $O^*$ is an optimal feasible set and $T\in[0,1]$ is the time
horizon.

\begin{proof}[Proof of Theorem~\ref{thm:mcg}]
  We solve a scalar recurrence for $a(t)\triangleq F(\mathbf y(t))$.
  From Lemma~\ref{lem:mcg-step},
  \[
    F(\mathbf{y}(t+\delta)) - F(\mathbf{y}(t))
    \ge
    \delta\bigl[F(\mathbf{y}(t) \lor \mathbf{1}_{O^*})
      - F(\mathbf{y}(t))\bigr]
    - O(n^3\delta^2)\ff{O^*}.
  \]

  Apply Definition~\ref{def:F-curv} with $\mathbf{x} = \mathbf{1}_{O^*}$
  and $\mathbf{y} = \mathbf{y}(t)$, using $c_F = c_f$
  (Theorem~\ref{thm:nn-characterization}(a)):
  \[
    F(\mathbf{y}(t) \lor \mathbf{1}_{O^*}) - F(\mathbf{1}_{O^*})
    \ge (1 - c_f)\,F(\mathbf{y}(t) - \mathbf{y}(t) \land \mathbf{1}_{O^*}).
  \]
  Since $\mathbf{y}(t) - \mathbf{y}(t) \land \mathbf{1}_{O^*} \le \mathbf{y}(t)$,
  Assumption~\ref{assump:mcg-mono} gives
  $F(\mathbf{y}(t) - \mathbf{y}(t) \land \mathbf{1}_{O^*}) \le F(\mathbf{y}(t))$.
  Rearranging:
  $F(\mathbf{y}(t) \lor \mathbf{1}_{O^*}) - F(\mathbf{y}(t))
  \ge \ff{O^*} - F(\mathbf{y}(t)) + (1-c_f)F(\mathbf{y}(t))
  = \ff{O^*} - c_f\,F(\mathbf{y}(t))$.
  Combining with the MCG step:
  \[
    a(t+\delta) \ge (1 - c_f\delta)\,a(t)
    + \delta(1 - O(n^3\delta))\ff{O^*}.
  \]
  Let $m=T/\delta$ and unroll the recurrence from $a(0)=0$:
  \[
    a(T)
    \ge \delta(1-O(n^3\delta))\ff{O^*}
       \sum_{r=0}^{m-1}(1-c_f\delta)^r.
  \]
  The geometric sum is
  \[
    \sum_{r=0}^{m-1}(1-c_f\delta)^r
    = \frac{1-(1-c_f\delta)^m}{c_f\delta}.
  \]
  Hence
  \[
    F(\mathbf y(T))
    \ge
    \frac{1-(1-c_f\delta)^{T/\delta}}{c_f}\,
    (1-O(n^3\delta))\,\ff{O^*}.
  \]
  Taking the step size small enough that $\delta\le n^{-5}$, and then
  sending $\delta\to0$, gives
  $(1-c_f\delta)^{T/\delta}\to e^{-c_fT}$ and absorbs
  $O(n^3\delta)$ into the $o(1)$ term.
\end{proof}

\section{Computing the OPT-Free Curvature Certificate}\label{apx:certificate}

A key practical advantage of the greedy curvature framework over additive
guarantees is that the algorithm can output a \emph{provable quality
certificate} alongside its solution, without access to the optimal set or
its value.  For decomposable objectives, pruning gives exactly the
right observable denominators.

\subsection{The Removal-Marginal Certificate}

Given a decomposable objective $f = g - \ell$ with $g$~monotone submodular
(CC curvature~$\alpha_g$) and $\ell$~modular, let
$A_0,A_1,\ldots,A_k$ be the active-set trajectory of greedy with pruning.
Define
\[
  \hat r
  \;\triangleq\;
  \max_i \max_{e\in A_i}
  \frac{\ell(e)}{\Delta_g(e\mid A_i\setminus\{e\})},
\]
with value~$0$ if all active sets are empty.
This is the quantity used in Proposition~\ref{prop:opt-free}.

The denominator is positive automatically.  At the end of each pruning
loop, every active element~$e\in A_i$ satisfies
\[
  \Delta_f(e\mid A_i\setminus\{e\})
  =
  \Delta_g(e\mid A_i\setminus\{e\})-\ell(e)
  >0,
\]
and hence \(\Delta_g(e\mid A_i\setminus\{e\})>\ell(e)\).  Therefore
\(\hat r<1\).  The resulting certificate is
\[
  \frac{f(A_k)}{f(\text{OPT})}
  \;\ge\; \frac{1-e^{-\bar{\hat c}_g}}{\bar{\hat c}_g},
  \qquad
  \hat c_g=\frac{\alpha_g}{1-\hat r},
  \qquad
  \bar{\hat c}_g=\max\{1,\hat c_g\}.
\]

\begin{proof}[Proof of Proposition~\ref{prop:opt-free}]
  Let $\mathcal O=\argmax_{|S|\le k}f(S)$.  Fix
  $O^*\in\mathcal O$ and a trajectory step~$i$ with
  \(f(A_i\setminus O^*)>0\), since these are exactly the positive-denominator
  pairs in Definition~\ref{def:greedy-curv}.  Write \(A=A_i\) and
  \(T=A_i\setminus O^*\).

  The first ingredient is the usual CC-curvature inequality for the
  monotone component \(g\):
  \begin{equation}\label{eq:optfree-r-gain}
    g(O^*\cup A_i)-g(O^*) \;\ge\; (1-\alpha_g)\,g(T).
  \end{equation}
  To see this directly, enumerate \(T=\{e_1,\ldots,e_m\}\) and telescope
  the gain from adding \(T\) to \(O^*\).  Each marginal of \(g\) is at
  least \(1-\alpha_g\) times the corresponding marginal when the same
  element is added inside \(T\); summing gives~\eqref{eq:optfree-r-gain}.

  The second ingredient is the set-level penalty bound supplied by the
  removal marginals.  Enumerate \(T=\{e_1,\ldots,e_m\}\) again and write
  \(T_{<q}=\{e_1,\ldots,e_{q-1}\}\).  Since \(T_{<q}\subseteq A_i\setminus\{e_q\}\),
  submodularity gives
  \[
    \Delta_g(e_q\mid T_{<q})
    \;\ge\;
    \Delta_g(e_q\mid A_i\setminus\{e_q\}).
  \]
  By definition of \(\hat r\),
  \(\ell(e_q)\le \hat r\,\Delta_g(e_q\mid A_i\setminus\{e_q\})\).
  Hence
  \begin{align}
    \ell(T)
    &= \sum_{q=1}^m \ell(e_q) \notag\\
    &\le \hat r \sum_{q=1}^m \Delta_g(e_q\mid A_i\setminus\{e_q\})
     \;\le\; \hat r \sum_{q=1}^m \Delta_g(e_q\mid T_{<q})
     \;=\; \hat r\,g(T),
     \label{eq:optfree-r-set}
  \end{align}

  Set \(\tau=\ell(T)/g(T)\).  The qualifying condition
  \(f(T)>0\) gives \(\tau<1\), and~\eqref{eq:optfree-r-set} gives
  \(0\le\tau\le\hat r\).  Combining~\eqref{eq:optfree-r-gain} with
  modularity of \(\ell\),
  \[
    \frac{f(O^*\cup A_i)-f(O^*)}{f(T)}
    \;\ge\;
    \frac{(1-\alpha_g)g(T)-\ell(T)}{g(T)-\ell(T)}
    \;=\;
    \frac{(1-\alpha_g)-\tau}{1-\tau}.
  \]
  The last expression is decreasing in \(\tau\), since its derivative is
  \(-\alpha_g/(1-\tau)^2\).  Therefore
  \[
    \frac{f(O^*\cup A_i)-f(O^*)}{f(T)}
    \;\ge\;
    \frac{(1-\alpha_g)-\hat r}{1-\hat r}
    \;=\;
    1-\frac{\alpha_g}{1-\hat r}.
  \]
  Taking the double minimum over \(O^*\in\mathcal O\) and qualifying
  trajectory steps in Definition~\ref{def:greedy-curv} gives
  \(c_g\le \alpha_g/(1-\hat r)\).
\end{proof}

\begin{remark}[Cheaper singleton diagnostic]\label{rem:singleton-diagnostic}
  A cheaper quantity is the singleton ratio
  \[
    \hat s=\max_{e\in\cup_i A_i}\frac{\ell(e)}{g(\{e\})}.
  \]
  Since \(\Delta_g(e\mid A_i\setminus\{e\})\le g(\{e\})\), the
  removal-marginal certificate is generally weaker:
  \(\hat r\ge \hat s\).  The singleton ratio can nevertheless be used as
  a formal certificate under the additional hypothesis
  \(\hat s<1-\alpha_g\), in which case the singleton argument yields
  \(c_g\le \alpha_g/(1-\hat s)\).  Without this extra
  hypothesis, singleton ratios are reported only as diagnostics in the
  experiments.
\end{remark}

\noindent\textbf{Contrast with additive guarantees.}
The HFWK guarantee $(1-1/e)\,g(\text{OPT}) - \ell(\text{OPT})$ requires
knowledge of both $g(\text{OPT})$ and $\ell(\text{OPT})$ to evaluate.
At test time on an instance where OPT is intractable, neither is available --- one can only
\emph{upper-bound} $g(\text{OPT})$ (e.g., by the greedy value of~$g$ alone),
which loosens the guarantee further.
The removal-marginal certificate depends only on $\alpha_g$ (a property of
the function class, not the instance) and $\hat r$ (observable from the
algorithm's own trajectory).

\subsection{Analytic $\alpha_g$ for Application Classes}\label{apx:analytic-alpha}

The CC curvature $\alpha_g = 1 - \min_{e,\, S\not\ni e}
\Delta_g(e \mid S) / g(\{e\})$ has closed-form expressions for the
applications considered in this paper.

\noindent\textbf{Experimental design (Bayesian A-optimality).}
$g(S) = \operatorname{tr}(\Sigma_{\text{prior}}) -
\operatorname{tr}(\Sigma_{S|})$, where $\Sigma_{S|}$ is the posterior
covariance after observing experiments in~$S$.
The CC curvature of $g$ is determined by the eigenvalues of the
information matrix:
\[
  \alpha_g \;=\; 1 - \frac{\lambda_{\min}}{\lambda_{\max}}
  \;=\; 1 - \frac{1}{\kappa},
\]
where $\kappa$ is the condition number of $X^\top X / \sigma^2 +
\Sigma_{\text{prior}}^{-1}$ and $X$ is the design matrix.
For well-conditioned designs ($\kappa \approx 2$), $\alpha_g \approx 0.5$;
for ill-conditioned designs ($\kappa \approx 20$), $\alpha_g \approx 0.95$.

\noindent\textbf{Coverage functions (directed vertex cover).}
$g(S) = |\{j : j \text{ covered by some } v \in S\}|$.
Since any element's coverage can be fully subsumed by the rest of the
ground set, $\min_{e,S} \Delta_g(e \mid S) = 0$, giving $\alpha_g = 1$.
This recovers the standard $1 - 1/e$ guarantee as a special case of the
curvature framework.

\noindent\textbf{Feature selection (Gaussian mutual information).}
$g(S) = \tfrac{1}{2}\log\det(I + \Sigma_{SS}/\sigma^2)$, where $\Sigma_{SS}$
is the covariance submatrix for features in~$S$.
The CC curvature depends on the spectral structure of the full covariance:
\[
  \alpha_g \;=\; 1 - \frac{1}{\kappa(I + \Sigma/\sigma^2)},
\]
where $\kappa(\cdot)$ denotes the condition number.
When features are nearly independent, $\alpha_g \approx 0$;
with high redundancy (correlated groups), $\alpha_g$ approaches~$1$.

\subsection{Computing $\hat r$ from the Greedy Trajectory}

The certificate \(\hat r\) is computed from the pruned active sets
\(A_i\), not from the terminal output alone.  For every element that is
active at a step~\(i\), evaluate the removal marginal
\(\Delta_g(e\mid A_i\setminus\{e\})\) and form
\[
  \frac{\ell(e)}{\Delta_g(e\mid A_i\setminus\{e\})}.
\]
The maximum over these ratios is \(\hat r\).  For decomposable objectives
where \(g\) and \(\ell\) are given separately, this is a direct
post-processing pass over the trajectory.  The same pass also verifies
\(\hat r<1\), although the pruning invariant already proves it.

For the \emph{non-decomposable} case (where only a value oracle for $f$ is
available), $\hat r$ cannot be computed directly.  However, the empirical
greedy curvature $c_g$ can still be bounded without decomposition --- see
the per-step curvature analysis in Appendix~\ref{apx:multilinear}.

\subsection{Relation to the Singleton Diagnostic}

Table~\ref{tab:tier1} reports the guarantee
$(1-e^{-\bar{\hat c}_g})/\bar{\hat c}_g$ induced by the
removal-marginal certificate (Proposition~\ref{prop:opt-free}),
which is the primary OPT-free bound for decomposable greedy with pruning.
A cheaper singleton diagnostic \(\hat s = \max_{e\in\mathcal{A}}
\ell(e)/g(\{e\})\) is also computable and becomes a formal certificate
under the additional condition in
Remark~\ref{rem:singleton-diagnostic}.

\section{Experimental Setup}\label{apx:exp-setup}
%% ================================================================

This appendix provides full reproducibility details for the experiments
in Section~\ref{sec:experiments}.
We organize results into two tiers:
\emph{Tier~1} uses small instances ($n=20$, $k=5$) where exact OPT
can be computed by brute-force enumeration, enabling ground-truth
approximation ratios;
\emph{Moderate-scale} experiments use instances with $n$ up to $300$, where OPT is
intractable and only trajectory-based diagnostics are reported.

\noindent\textbf{Tier 1 instances ($n=20$, exact OPT).}
Each configuration uses $k=5$ and $10$ random seeds (seeds $0$--$9$).
Exact OPT is computed by enumerating all subsets of size at most~$k$.

\emph{Experimental design (Bayesian A-optimality).}
A $20 \times 5$ design matrix~$X$ is generated with prescribed
condition number $\kappa=5$ via SVD: $X = U \diag(\sigma_1,\ldots,\sigma_5) V^\top$
where $U,V$ are random orthogonal and $\sigma_i$ are linearly spaced
from~$1$ to~$\sqrt{5}$.
Prior: $\Sigma_{\mathrm{prior}} = I_5$; noise variance $\sigma^2=1$.
$g(S)=\tr(\Sigma_{\mathrm{prior}})-\tr(\Sigma_{S|})$ is the
variance reduction.
Costs: $\ell(e) = c_s \cdot \|x_e\|_2/\overline{\|x\|}$
where $c_s$ is the cost scale and $\overline{\|x\|}$ is the mean row
norm.
$\alpha_g = 1 - 1/\kappa = 0.667$.
Cost scales: $c_s \in \{0, 0.03, 0.06, 0.10, 0.15, 0.20, 0.28\}$.

\emph{Coverage with costs.}
Random bipartite graph: $n=20$ vertices, $m=40$ items,
each (vertex, item) edge included independently with probability~$0.2$.
$g(S)$ = number of items covered by at least one vertex in~$S$.
Costs: $\ell(v) = c_s \cdot \deg(v)/\overline{\deg}$
(degree-normalized).
$\alpha_g = 1$.
Cost scales: $c_s \in \{0, 0.5, 1.0, 2.0, 3.5, 5.0, 8.0\}$.

\emph{Feature selection (mutual information minus $\ell_1$ penalty).}
$p=20$ features in $4$ correlated groups with pairwise
within-group correlation $0.7$; small cross-group noise
($0.05 \cdot Z$ for random $Z$, symmetrized and shifted to ensure
positive definiteness).
$g(S) = \tfrac{1}{2}\log\det(I + \Sigma_{S,S}/\sigma^2)$,
$\sigma^2 = 1$.
Costs: $\ell(e)=c_s \cdot (1 + 0.3\,\epsilon_e)$, $\epsilon_e$
standard normal, clipped to $\ge 0.01$.
$\alpha_g \approx 0.79$.
Cost scales: $c_s \in \{0, 0.05, 0.1, 0.2, 0.3, 0.5, 0.8\}$.

\noindent\textbf{Moderate-scale instances ($n$ up to $300$, trajectory diagnostics).}
Each configuration uses 5 seeds and 7 cost levels.
OPT is intractable; the table reports the singleton curvature diagnostic
(removal marginals were not recorded for these runs).

\emph{Experimental design}: $n = 200$, $k = 20$, $d = 10$,
$\kappa = 5$ ($\alpha_g = 1 - 1/\kappa$).
Cost scales: $c_s \in \{0, 0.02, 0.05, 0.10, 0.15, 0.22, 0.30\}$.
\emph{Coverage}: $n = 300$, $k = 30$, $m = 600$ items, edge
probability $0.05$ ($\alpha_g = 1$).
Cost scales: $c_s \in \{0, 0.5, 1.5, 3.0, 5.0, 8.0, 12.0\}$.
\emph{Feature selection}: $n = 100$, $k = 15$, $10$ correlated groups.
Cost scales: $c_s \in \{0, 0.03, 0.08, 0.15, 0.25, 0.4, 0.6\}$.

\noindent\textbf{$\lambda$-sweep (LLM summarization).}
Dataset: Multi-News~\citep{fabbri2019multinews}, a multi-document
summarization benchmark.
For each of 100 validation examples, source documents are chunked into
150-word passages padded with random distractors to reach
$n = 200$, then embedded with TF-IDF vectors; pairwise cosine
similarities define the GCLin objective (random seed~$42$).
Greedy with best-prefix selection (the prefix $A_i$ maximizing $f(A_i)$
is returned, appropriate for non-monotone objectives) selects $k = 10$
passages under $\text{GCLin}_\lambda$ for
$\lambda \in \{0.1, 0.25, 0.5, 0.75, 1.0, 1.5\}$
($2\lambda$ is the uniform-query reference bound for the five values with
$\lambda\le1$; $\lambda=1.5$ is included as an empirical
high-redundancy diagnostic).
A local \texttt{openai/gpt-oss-120b} endpoint (prompt:
``Summarize the following passages,'' temperature~$0.3$,
max tokens~$1024$) summarizes the selected passages, and
ROUGE-2 is computed against the reference summary.

\noindent\textbf{Algorithms.}
Tier~1 and MaxCut experiments use greedy+pruning (Algorithm~\ref{alg:greedy})
with lazy (priority-queue) evaluation;
the $\lambda$-sweep uses greedy with best-prefix selection.
The comparison baseline is distorted greedy~\citep{harshaw2019submodular}.
The removal-marginal certificate $\hat r$ (Proposition~\ref{prop:opt-free})
is computed by a post-processing pass over the greedy trajectory,
evaluating $g(A_i)$ and $g(A_i\setminus\{e\})$ for each $e\in A_i$.
Code and a CC BY 4.0 paper license will be released with the
camera-ready version.

%% ================================================================
\section{Additional Experimental Results}\label{apx:exp-additional}
%% ================================================================

This appendix contains figures and tables deferred from the main text.

\subsection{Tier 1: Cost-Sweep Curves}

Figure~\ref{fig:cost-sweep} shows the full cost-sweep curves for
experimental design and coverage.
GP is competitive with DG and often higher at larger costs, while the
curvature guarantee stays positive as the HFWK bound goes negative.

\begin{figure}[t]
  \centering
  \begin{minipage}[t]{0.48\textwidth}
    \centering
    \includegraphics[width=\linewidth]{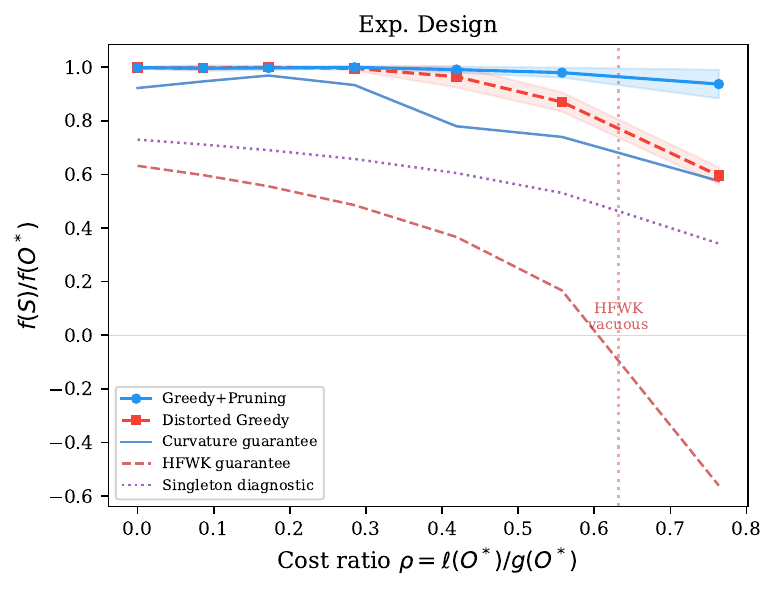}
  \end{minipage}\hfill
  \begin{minipage}[t]{0.48\textwidth}
    \centering
    \includegraphics[width=\linewidth]{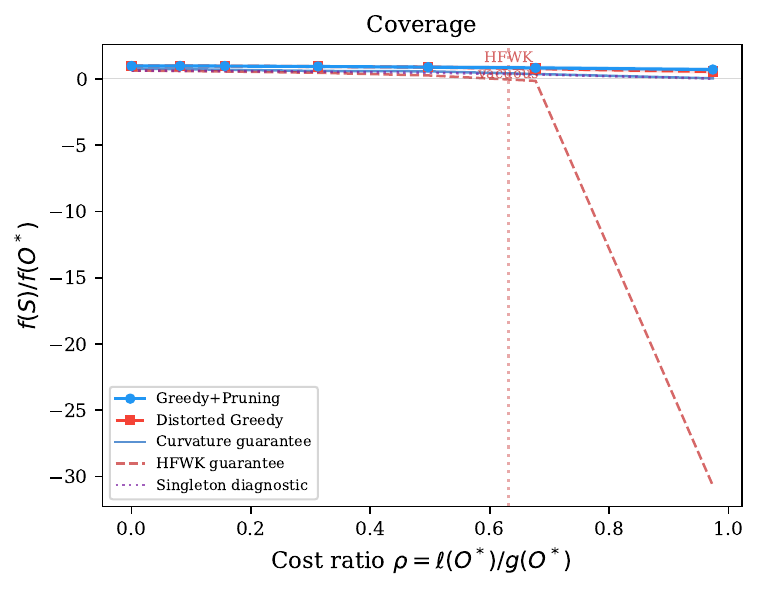}
  \end{minipage}
  \caption{Tier 1 cost sweeps ($n=20$, exact OPT).
    Solid lines: empirical ratios; dashed: theoretical guarantees.
    GP (blue) is competitive with DG (red) and often higher at larger costs.
    The curvature guarantee (dark blue) stays positive while the HFWK bound
    (dark red) goes negative at high costs.}
  \label{fig:cost-sweep}
\end{figure}

\subsection{Tier 1: Curvature vs. Observed Ratios}

Figure~\ref{fig:tightness} plots the empirical greedy curvature~$c_g$
(computed exactly using OPT) against the achieved $f(S)/f(\text{OPT})$
ratio.
The curvature guarantee $(1-e^{-\bar c_g})/\bar c_g$ follows the empirical
trend in the non-monotone range (and the classical formula applies in the
monotone small-curvature range).
The removal-marginal certificate (reported in Table~\ref{tab:tier1}) provides
a formal guarantee from the same trajectories.

\begin{figure}[t]
  \centering
  \begin{minipage}[t]{0.48\textwidth}
    \centering
    \includegraphics[width=\linewidth]{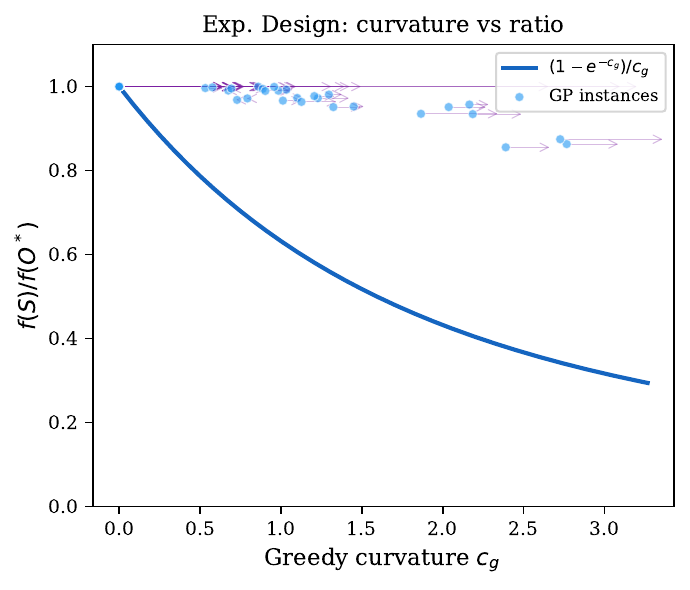}
  \end{minipage}\hfill
  \begin{minipage}[t]{0.48\textwidth}
    \centering
    \includegraphics[width=\linewidth]{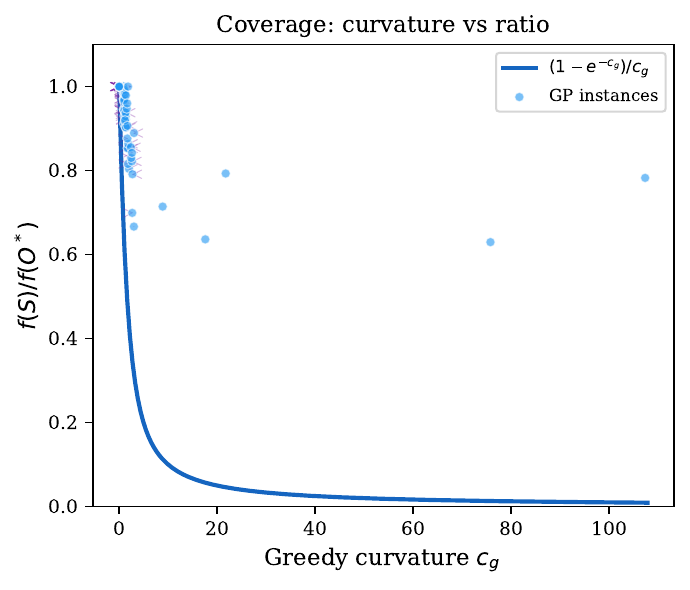}
  \end{minipage}
  \caption{Curvature vs.\ observed ratios ($n=20$, exact OPT).
    Each dot is a (curvature, ratio) pair from one instance.
    The curve is the theoretical guarantee, using the classical formula in
    the monotone small-curvature regime and $(1-e^{-\bar c_g})/\bar c_g$
    otherwise.
    GP ratios lie well above the curve.
    Arrows show the gap from the singleton diagnostic $\hat{c}_g$ to the empirical~$c_g$;
    Proposition~\ref{prop:opt-free} gives the formal removal-marginal certificate.}
  \label{fig:tightness}
\end{figure}

\subsection{Moderate-Scale Instances (No Exact OPT)}

\begin{table}[t]
\centering
\caption{Moderate-scale results (no exact OPT; singleton diagnostics).
  $f_{\text{GP}}$ and $f_{\text{DG}}$ report mean $f(S)$ over 5 seeds.
  Diag.\ is the multiplicative value of the singleton formula, included
  as a diagnostic rather than the formal removal-marginal certificate.
  HFWK uses heuristic best-known in place of OPT.
  \textbf{Bold} marks vacuous (negative) HFWK guarantees.}
\label{tab:tier2}
\small
\begin{tabular}{@{}llccccc@{}}
\toprule
Application & Cost & $f_{\text{GP}}$ & $f_{\text{DG}}$ & $\hat{c}_g$ & Diag. & HFWK \\
\midrule
\multirow{4}{*}{\shortstack[l]{Exp.\ Design\\($n{=}200$, $k{=}20$)}}
  & $0.02$ & $2.56$ & $2.56$ & $0.79$ & $0.69$ & $0.55$ \\
  & $0.05$ & $1.72$ & $1.72$ & $1.11$ & $0.60$ & $0.34$ \\
  & $0.10$ & $0.53$ & $0.52$ & $3.42$ & $0.29$ & \textbf{$-0.42$} \\
  & $0.15$ & $0.05$ & $0.04$ & $44.3$ & $0.05$ & \textbf{$-5.15$} \\
\midrule
\multirow{4}{*}{\shortstack[l]{Coverage\\($n{=}300$, $k{=}30$)}}
  & $1.5$  & $517$ & $517$ & $1.05$ & $0.62$ & $0.60$ \\
  & $5.0$  & $397$ & $400$ & $1.20$ & $0.58$ & $0.48$ \\
  & $8.0$  & $309$ & $312$ & $1.36$ & $0.55$ & $0.37$ \\
  & $12.0$ & $214$ & $218$ & $1.66$ & $0.49$ & $0.20$ \\
\midrule
\multirow{4}{*}{\shortstack[l]{Feature Sel.\\($n{=}100$, $k{=}15$)}}
  & $0.08$ & $5.63$ & $5.62$ & $1.06$ & $0.62$ & $0.58$ \\
  & $0.25$ & $4.14$ & $4.14$ & $1.57$ & $0.51$ & $0.44$ \\
  & $0.40$ & $2.88$ & $2.82$ & $2.85$ & $0.34$ & $0.20$ \\
  & $0.60$ & $1.32$ & $1.23$ & $55.0$ & $0.05$ & \textbf{$-0.40$} \\
\bottomrule
\end{tabular}
\end{table}

\begin{figure}[t]
  \centering
  \begin{minipage}[t]{0.48\textwidth}
    \centering
    \includegraphics[width=\linewidth]{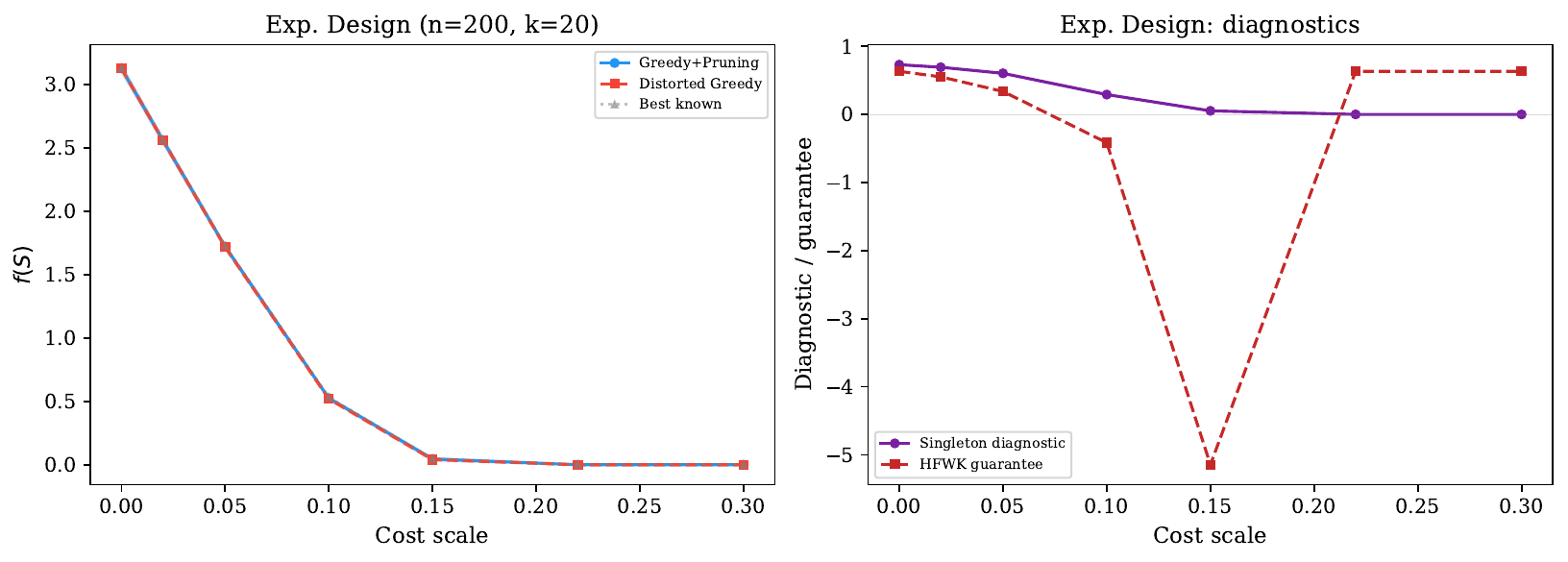}
  \end{minipage}\hfill
  \begin{minipage}[t]{0.48\textwidth}
    \centering
    \includegraphics[width=\linewidth]{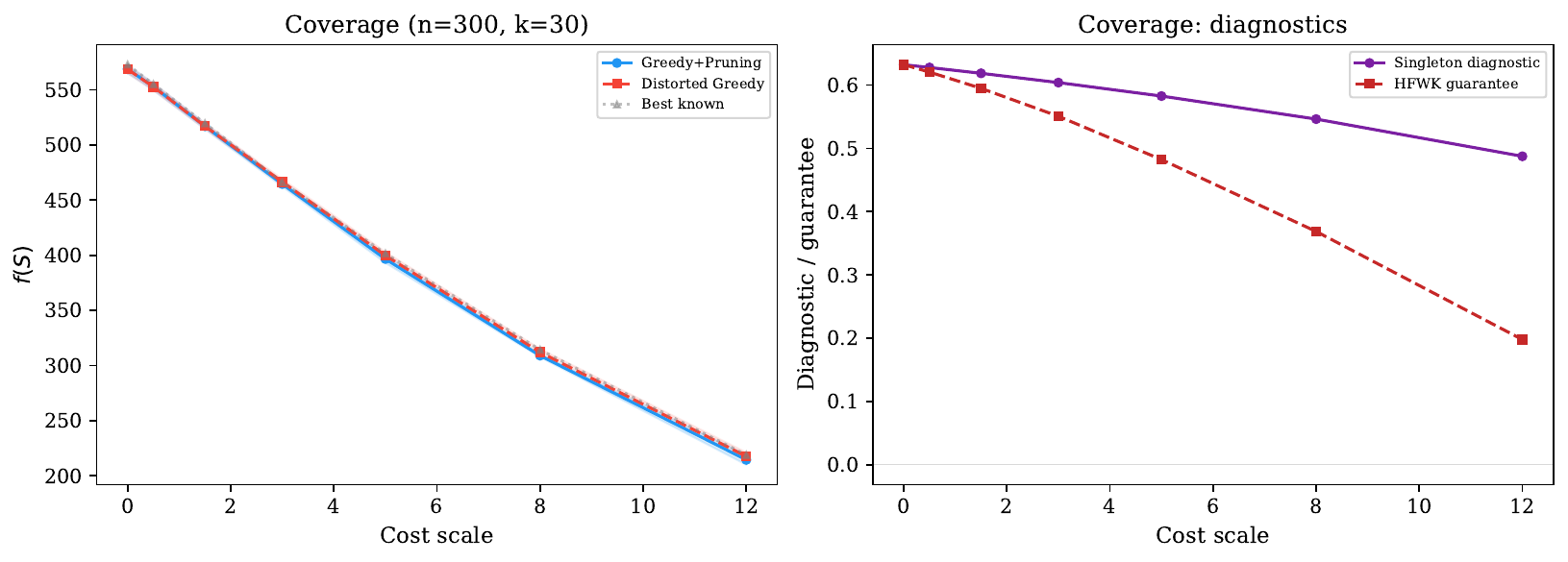}
  \end{minipage}
  \caption{Moderate-scale experiments (no exact OPT).
    Left panels: $f(S)$ values; GP (blue) matches or exceeds DG (red).
    Right panels: singleton curvature diagnostics (purple) vs HFWK (dark red).
    Diagnostics remain positive across all cost levels tested
    (removal marginals were not recorded for these runs).}
  \label{fig:tier2}
\end{figure}

\noindent\textbf{Results.}
GP and DG achieve similar solution quality on most instances,
with GP producing slightly better values at high costs
(e.g., $f_{\text{GP}} = 2.88$ vs $f_{\text{DG}} = 2.82$ for feature
selection at cost $0.40$).
The singleton curvature diagnostic stays positive at every cost level.
The HFWK bound becomes vacuous on experimental design at cost $0.10$
($\rho \ge 0.63$) and on feature selection at cost $0.60$.
The displayed values are diagnostics rather than certified guarantees;
the formal certificate in Appendix~\ref{apx:certificate} requires
removal-marginal post-processing along the trajectory.

%% ================================================================

%% ================================================================

\subsection{Curvature vs.\ Additive Guarantee Comparison}\label{sec:comparison}

\begin{table}[t]
\centering\small
\caption{Curvature vs.\ HFWK additive guarantee (as multiples of $f(O^*)$).
  HFWK uses $1-1/(e(1-\rho))$; curvature uses
  $c_g \le \alpha_g/(1-r)$ (Proposition~\ref{prop:greedy-curv-decomp}).}
\label{tab:comparison}
\begin{tabular}{@{}lccrr@{}}
\toprule
Application & $\alpha_g$ & $\rho$ & HFWK & Curvature \\
\midrule
Exp.\ design (low) & $1$ & $0.1$ & $0.591$ & $\ge 0.604$ \\
Exp.\ design (med) & $1$ & $0.4$ & $0.387$ & $\ge 0.487$ \\
Exp.\ design (high) & $1$ & $0.6$ & $0.080$ & $\ge 0.367$ \\
Feature sel.\ ($\alpha_g\!=\!0.5$) & $0.5$ & $0.3$ & $0.474$ & $\ge 0.715$ \\
Feature sel.\ ($\alpha_g\!=\!0.5$) & $0.5$ & $0.5$ & $0.264$ & $\ge 0.632$ \\
Vertex cover & $1$ & $0.5$ & $0.264$ & $\ge 0.432$ \\
\bottomrule
\end{tabular}
\end{table}

Table~\ref{tab:comparison} compares the curvature and HFWK frameworks
across applications from the additive-ratio literature.
The curvature guarantee dominates whenever the cost ratio $\rho$
exceeds approximately $0.15$ (for coverage-type $g$ with $\alpha_g = 1$).

\subsection{MaxCut: Symmetric Submodular Functions}\label{apx:maxcut}

We test whether greedy+pruning (Algorithm~\ref{alg:greedy}) improves
on standard greedy for cardinality-constrained MaxCut,
a canonical symmetric submodular function with $c_f = 2$.
By Proposition~\ref{prop:symmetric} and Theorem~\ref{thm:greedy},
the guarantee is $(1-e^{-2})/2 \approx 0.432$.

\noindent\textbf{Objective.}
Given an undirected graph $G = (V, E)$ with $|V| = n$, the MaxCut
objective is $f(S) = |\delta(S)| = |\{(u,v) \in E : u \in S, v \notin S\}|$,
subject to $|S| \le k$.

\noindent\textbf{Instance families.}
\emph{Planted distractor (stochastic block model).}
Three communities of sizes $|A| = k$, $|B| = k$, $|C| = n - 2k$.
Edge probabilities: $p_{AA} = 0.8$, $p_{BB} = 0.2$, $p_{CC} = 0.3$,
$p_{AB} = 0.3$, $p_{AC} = p_{BC} = 0.4$.
Community $A$ is a dense ``bait'' cluster whose high-degree vertices
attract greedy early, but whose intra-cluster edges become cut-reducing
once many $A$-vertices are selected.
\emph{Erd\H{o}s--R\'{e}nyi.}
Homogeneous baseline: each edge present independently with
probability $p = 0.3$.

\noindent\textbf{Configurations.}
$n \in \{16, 20\}$, $k \in \{\lfloor n/4 \rfloor,
\lfloor n/3 \rfloor, \lfloor n/2 \rfloor\}$,
giving 12 configurations (6 planted distractor, 6 Erd\H{o}s--R\'{e}nyi).
Each configuration uses 20 random seeds.
Table~\ref{tab:maxcut} reports the 8 configurations where pruning
activates on at least one seed; the 4 remaining Erd\H{o}s--R\'{e}nyi
configurations ($k \le n/3$) are omitted because GP and greedy
coincide on homogeneous graphs at these budgets.

\noindent\textbf{Algorithms.}
Standard greedy, greedy+pruning (Algorithm~\ref{alg:greedy}),
and random greedy~\citep{BFNS2014}.

\noindent\textbf{OPT.}
Exact enumeration over all subsets of size at most~$k$.

\noindent\textbf{Reproducibility.}
Graph generation is deterministic: \texttt{numpy.RandomState}
with seed $\mathtt{seed} \times 1000 + n \times 100 + k$.
Code: \texttt{experiments/objectives/maxcut.py} (objective and graph
generators) and \texttt{experiments/run\_maxcut.py} (runner).
Full results: \texttt{experiments/results/maxcut\_results.json} (720 records).

\noindent\textbf{Results.}
Table~\ref{tab:maxcut} reports mean approximation ratios (relative to
exact OPT) and the number of instances where greedy+pruning (GP)
strictly beats standard greedy.

\begin{table}[h]
\centering\small
\caption{MaxCut experiment: mean $f(S)/f(O^*)$ over 20 seeds.
  ``GP $>$ Greedy'' counts instances where pruning strictly improves
  on standard greedy.
  Theory guarantee: $(1-e^{-2})/2 \approx 0.432$.}
\label{tab:maxcut}
\begin{tabular}{@{}lrrcccc@{}}
\toprule
Instance type & $n$ & $k$ & Greedy & GP & Random & GP $>$ Greedy \\
\midrule
Planted distractor & 16 & 4  & 0.979 & 0.979 & 0.947 & 0/20 \\
Planted distractor & 16 & 5  & 0.974 & 0.974 & 0.956 & 0/20 \\
Planted distractor & 16 & 8  & 0.935 & 0.966 & 0.911 & 13/20 \\
Planted distractor & 20 & 5  & 0.980 & 0.980 & 0.957 & 0/20 \\
Planted distractor & 20 & 7  & 0.975 & 0.975 & 0.937 & 0/20 \\
Planted distractor & 20 & 10 & 0.956 & 0.975 & 0.915 & 12/20 \\
\midrule
Erd\H{o}s--R\'{e}nyi & 16 & 8  & 0.971 & 0.986 & 0.895 & 6/20 \\
Erd\H{o}s--R\'{e}nyi & 20 & 10 & 0.966 & 0.981 & 0.922 & 9/20 \\
\bottomrule
\end{tabular}
\end{table}

At small budgets ($k \le n/3$), pruning never activates:
greedy does not over-commit, so GP and greedy coincide.
At $k \approx n/2$, greedy selects vertices whose neighbors
increasingly fall inside~$S$, reducing cut value.
Pruning detects these ($f(v \mid S \setminus \{v\}) \le 0$) and
ejects them, freeing slots for better choices.
On planted-distractor graphs the effect is strongest
(GP wins 13/20 at $n{=}16$, $k{=}8$; 12/20 at $n{=}20$, $k{=}10$),
because the dense bait cluster~$A$ traps greedy more reliably.
Erd\H{o}s--R\'{e}nyi graphs show a weaker but still present effect
(6--9 wins out of 20).
Overall, GP strictly improves on greedy in 40 of 240 instances (16.7\%),
concentrated in the $k = n/2$ regime.
All observed ratios far exceed the $(1-e^{-2})/2 \approx 0.432$
guarantee, confirming that the bound is conservative while the pruning
mechanism produces measurable empirical gains.

\section{Multilinear Extension, DMCG-P, and MCG}\label{apx:multilinear}
%% ================================================================

This appendix contains the multilinear extension analysis, the DMCG-P
algorithm and its guarantees, and a conditional MCG comparison for
strictly positive functions satisfying the standard MCG monotonicity
lemma.  These results extend the discrete greedy guarantee of
Theorem~\ref{thm:greedy} from cardinality constraints to integral
feasible-set hulls for DMCG-P, and to the standard continuous setting
for MCG.

\subsection{Curvature of the Multilinear Extension}\label{sec:multilinear}

We now lift curvature to the continuous relaxation.
The multilinear extension is the standard bridge between discrete
submodular optimization and continuous relaxations---it underlies
the measured continuous greedy algorithm (Section~\ref{sec:mcg}) and
the correlation gap framework~\citep{calinescu2011maximizing}.
The question is whether curvature lifts cleanly to this
continuous setting.
Classical non-monotone algorithms (e.g., BFNS randomized greedy~\citep{BFNS2014},
measured continuous greedy) analyze the multilinear extension and
rely on the correlation gap: each step of the continuous relaxation
bounds $F(\mathbf{x} \lor \mathbf{y}) - F(\mathbf{x})$ from below
using $F(\mathbf{y} - \mathbf{x} \land \mathbf{y})$.  In the
coupled set-level view, one wants a lower bound on
$\ff{R_1 \cup R_2} - \ff{R_1}$ in terms of $\ff{R_2}$ for disjoint
realizations~$R_1,R_2$.  Zero-valued sets are delicate:
if $\ff{R_2}=0$, submodularity only gives the upper bound
$\ff{R_1\cup R_2}-\ff{R_1}\le0$, not the lower bound needed for a
positive-denominator curvature comparison.
Our curvature framework makes this precise:
strict positivity of~$f$ turns out to be the key structural property.

For a set function $f: 2^\uni \to \reals$, its \emph{multilinear extension}
$F: [0,1]^\uni \to \reals$ is
\[
  F(\mathbf{x}) = \ex{\ff{\mathcal{R}(\mathbf{x})}}
  = \sum_{R \subseteq \uni} \ff{R} \prod_{i \in R} x_i \prod_{j \notin R}(1 - x_j),
\]
where $\mathcal{R}(\mathbf{x})$ includes each $i$ independently with probability $x_i$.

\begin{definition}[Multilinear curvature]\label{def:F-curv}
  The \emph{curvature} $c_F$ of the multilinear extension is
  \[
    c_F = 1 - \min_{\substack{\mathbf{x}, \mathbf{y} \in [0,1]^\uni\\
      F(\mathbf{y} - \mathbf{x} \land \mathbf{y}) > 0}}
    \frac{F(\mathbf{x} \lor \mathbf{y}) - F(\mathbf{x})}
         {F(\mathbf{y} - \mathbf{x} \land \mathbf{y})}.
  \]
  As in Definition~\ref{def:curv}, the minimum ranges only over
  positive-denominator pairs, and we set $c_F=\infty$ if the admissible
  ratios are unbounded below.
\end{definition}

The multilinear curvature is inherently \emph{distributional}: it averages the
pointwise curvature inequality over coupled product distributions.
The coupling proof below makes this explicit.

\begin{proposition}[Discrete curvature bounds multilinear curvature]\label{prop:c-leq-cF}
  $c_f \le c_F$.
\end{proposition}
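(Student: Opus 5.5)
The plan is to show that every admissible pair in Definition~\ref{def:curv} yields an admissible pair in Definition~\ref{def:F-curv} with the same ratio. Indicator vectors suffice. Given $X,Y\subseteq\uni$ with $\ff{Y\setminus X}>0$, set $\mathbf x=\mathbf 1_X$ and $\mathbf y=\mathbf 1_Y$. On integral points the multilinear extension agrees with $f$, so $F(\mathbf 1_S)=\ff{S}$. The lattice operations act coordinatewise:
\[
  \mathbf 1_X\lor\mathbf 1_Y=\mathbf 1_{X\cup Y},\qquad \mathbf 1_X\land\mathbf 1_Y=\mathbf 1_{X\cap Y},\qquad \mathbf 1_Y-\mathbf 1_X\land\mathbf 1_Y=\mathbf 1_{Y\setminus X}.
\]
Hence the denominator in Definition~\ref{def:F-curv} is $F(\mathbf 1_{Y\setminus X})=\ff{Y\setminus X}>0$, so the pair is admissible. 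The ratio equals
\[
  \frac{\ff{X\cup Y}-\ff{X}}{\ff{Y\setminus X}},
\]
which is exactly the discrete ratio (recall $\ff{\emptyset}=0$).

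The discrete admissible set therefore embeds into the continuous one with ratios preserved. It follows that
\[
  \inf_{\text{cont.}}\text{ratio}\;\le\;\min_{\text{discrete}}\text{ratio},
\]
and so $c_F=1-\inf\ge 1-\min=c_f$.

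Two edge cases need care. First, if the continuous ratios are unbounded below, the convention gives $c_F=\infty$ and the inequality holds trivially. Second, the continuous "min" may only be an infimum; I will read Definition~\ref{def:F-curv} as an infimum, which only strengthens the inequality. If the discrete admissible set is empty, $c_f$ is a vacuous minimum. Under the standing assumption $\ff{e}>0$ for every singleton, however, the set is nonempty: take $X=\emptyset$ and $Y=\{e\}$.

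There is essentially no obstacle here. The only points to verify are that $F$ restricted to the vertices of the cube is $f$, and that the operation $\mathbf y-\mathbf x\land\mathbf y$ maps to set difference on indicator vectors. Both are immediate from the product formula for $F$.
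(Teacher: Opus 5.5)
Your proof is correct and is the same as the paper's: embed each discrete admissible pair as the cube vertices $(\mathbf{1}_X,\mathbf{1}_Y)$, note that the ratios coincide, and conclude that the continuous minimum ranges over a larger domain. Your extra bookkeeping (infimum versus minimum, the $c_F=\infty$ convention, nonemptiness of the admissible set) only makes explicit what the paper's one-line proof leaves implicit.
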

\begin{proof}
  The discrete vertices $\{0,1\}^\uni \subset [0,1]^\uni$, and every
  positive-denominator pair in Definition~\ref{def:curv} is an
  admissible pair in Definition~\ref{def:F-curv}, so the minimum
  defining $c_F$ ranges over a larger domain.
\end{proof}

\begin{proposition}[$c_F = \infty$ for negative functions]\label{prop:cF-infty}
  Suppose $f$ is normalized submodular and $\ff{e}>0$ for every
  singleton $e\in\uni$.  If $f$ takes a negative value, then
  $c_F = \infty$.
\end{proposition}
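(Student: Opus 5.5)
The plan is to exhibit a family of admissible pairs $(\mathbf x,\mathbf y)$ in Definition~\ref{def:F-curv} along which the denominator $F(\mathbf y-\mathbf x\land\mathbf y)$ tends to $0^+$ while the numerator $F(\mathbf x\lor\mathbf y)-F(\mathbf x)$ stays bounded away from zero on the negative side. The ratio is then unbounded below, so $c_F=\infty$ by the stated convention. Everything takes place on the diagonal segment of a single negative set.

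\textbf{Step 1 (one-dimensional reduction).} Fix $N\subseteq\uni$ with $\ff{N}<0$; one may take $N$ inclusion-minimal, although the argument below does not need it. Put $g(a)=F(a\mathbf 1_N)$ for $a\in[0,1]$. Then $g(0)=\ff{\emptyset}=0$ and $g(1)=\ff{N}<0$. As $a\to0$ we have $g(a)=a\sum_{e\in N}\ff{e}+O(a^2)>0$, using the hypothesis $\ff{e}>0$. It is standard that the multilinear extension of a submodular function is concave along nonnegative directions, so $g$ is a concave polynomial. Hence $g$ has a unique root $a_0\in(0,1)$, with $g>0$ on $(0,a_0)$ and $g<0$ on $(a_0,1]$.

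\textbf{Step 2 (the pairs).} For $c\in(0,1)$ take $\mathbf y=\mathbf 1_N$ and $\mathbf x=c\,\mathbf 1_N$. Then $\mathbf x\land\mathbf y=\mathbf x$, so $\mathbf y-\mathbf x\land\mathbf y=(1-c)\mathbf 1_N$, and also $\mathbf x\lor\mathbf y=\mathbf 1_N$. The ratio in Definition~\ref{def:F-curv} is therefore
\[
  \frac{\ff{N}-g(c)}{g(1-c)} .
\]
Let $1-c\uparrow a_0$, that is $c\downarrow 1-a_0$. The denominator $g(1-c)$ is positive by Step~1 (so the pair is admissible) and tends to $0$. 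The numerator tends to $g(1)-g(1-a_0)$ by continuity.

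\textbf{Step 3 (numerator strictly negative).} This is the only delicate step. I must show $g(1-a_0)>g(1)$.
\begin{itemize}
\item If $1-a_0\le a_0$, then $g(1-a_0)\ge0>g(1)$ immediately.
\item Otherwise $1-a_0\in(a_0,1)$. Concavity gives $g(1-a_0)\ge \min\{g(a_0),g(1)\}=g(1)$.
\item Equality in the previous bullet would force, by concavity, $g$ to be nonincreasing on $[a_0,1]$ and constant on $[1-a_0,1]$. A concave $g$ that is constant on an interval has its maximum value there. That maximum would be $g(1)<0$, contradicting $g>0$ on $(0,a_0)$.
\end{itemize}
So the limit of the numerator is a fixed negative number, while the denominator goes to $0^+$. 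The ratio tends to $-\infty$, the minimum in Definition~\ref{def:F-curv} is unbounded below, and $c_F=\infty$.

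The main obstacle I anticipate is justifying concavity of $g$. This follows from $\partial_{ij}F\le0$ for submodular $f$, so that $g''(a)=\sum_{i\ne j\in N}\partial_{ij}F(a\mathbf 1_N)\le0$ because the multilinear $F$ has $\partial_{ii}F=0$. The strict inequality in Step~3 also needs care. If concavity were unavailable, I would fall back on choosing $c$ with $g(c)\ge0$ directly. This is possible whenever $1-a_0\le a_0$, and otherwise I would try perturbing $\mathbf x$ off the diagonal within $[0,1]^N$.
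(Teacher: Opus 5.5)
Your proof is correct, and it takes a genuinely different route from the paper's. The paper passes to an inclusion-minimal negative set $\mathcal N'$ and fixes $e'\in\mathcal N'$. It then uses the off-diagonal pairs $\mathbf x_\varepsilon=(1-\varepsilon)\mathbf 1_{\mathcal N'\setminus\{e'\}}$, $\mathbf y=\mathbf 1_{\mathcal N'}$, so the denominator is $F(\varepsilon\mathbf 1_{\mathcal N'\setminus\{e'\}}+\mathbf 1_{\{e'\}})$. This runs continuously from $f(e')>0$ to $f(\mathcal N')<0$ and crosses zero by the intermediate value theorem. Minimality gives $F((1-\varepsilon)\mathbf 1_{\mathcal N'\setminus\{e'\}})\ge 0$, hence a numerator bound $\le f(\mathcal N')<0$ along the whole family.

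You instead work on the diagonal of an arbitrary negative set and replace minimality by concavity of $g(a)=F(a\mathbf 1_N)$. Concavity gives you the sign pattern of $g$, but your numerator is only controlled in the limit, which is what forces Step~3. So your argument needs no minimal set but genuinely uses submodularity (via $\partial_{ij}F\le 0$). The paper's argument never invokes submodularity at all, only normalization, positive singletons, and minimality, and it gets a uniform rather than limiting numerator bound.

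Step~3 can also be shortened. In the case $1-a_0>a_0$, write $1-a_0=\lambda a_0+(1-\lambda)\cdot 1$ with $\lambda=a_0/(1-a_0)\in(0,1)$. Concavity then gives $g(1-a_0)\ge(1-\lambda)g(1)>g(1)$ directly. Equivalently, $g>0$ on $(0,a_0)$ and $g(a_0)=0$ force $g$ to be strictly decreasing on $[a_0,1]$. Either way, the equality-case discussion is unnecessary.
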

Intuitively, the denominator $F(\mathbf{z}_\varepsilon)$ in the
curvature ratio can be driven to zero by concentrating mass on an
inclusion-minimal negative set, while the numerator remains bounded
away from zero by that set's negative value. This concentration
structure motivates the pruning invariant introduced in
\S\ref{sec:dmcgp}.
\begin{proof}[Proof idea]
  Let $\uni'$ be an inclusion-minimal set with $\ff{\uni'} < 0$; minimality
  gives $\ff{S} \ge 0$ for all $S \subsetneq \uni'$.
  Construct $\mathbf{z}_\varepsilon =
  \varepsilon\,\mathbf{1}_{\uni' \setminus \{e'\}} + \mathbf{1}_{\{e'\}}$
  so that the denominator $F(\mathbf{z}_\varepsilon)$ interpolates continuously
  from $\ff{e'} > 0$ (by singleton positivity) to $\ff{\uni'} < 0$,
  crossing zero at some $\varepsilon_0$.
  The numerator stays at most $\ff{\uni'} < 0$ throughout,
  so the ratio diverges as $\varepsilon \to \varepsilon_0^-$.
  See Appendix~\ref{apx:deferred} for the complete construction.
\end{proof}

The following is the key structural result connecting pointwise and distributional curvature.

\begin{theorem}[Strict positivity characterization]\label{thm:nn-characterization}
  Let $f: 2^\uni \to \reals$ be normalized submodular with
  $\ff{e}>0$ for every singleton $e\in\uni$.
  \textup{(a)}~If $f$ is strictly positive ($\ff{S} > 0$ for every nonempty~$S$),
  then $c_f = c_F$.
  \textup{(b)}~If $f$ takes a negative value, then $c_F = \infty$.
\end{theorem}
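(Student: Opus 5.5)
Part~(b) is exactly Proposition~\ref{prop:cF-infty}, whose hypotheses (normalization, submodularity, positive singletons, some negative value) coincide with those of the theorem. So I would simply invoke it, with the $\mathbf z_\varepsilon$ construction on an inclusion-minimal negative set deferred to Appendix~\ref{apx:deferred}. The real work is part~(a).

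For~(a), the inequality $c_f \le c_F$ is Proposition~\ref{prop:c-leq-cF}. For the reverse inequality $c_F\le c_f$, I will show that every admissible pair $(\mathbf x,\mathbf y)$ in Definition~\ref{def:F-curv} satisfies
\[
F(\mathbf x\lor\mathbf y)-F(\mathbf x)\;\ge\;(1-c_f)\,F(\mathbf y-\mathbf x\land\mathbf y).
\]
Dividing by the positive denominator then shows that every admissible ratio is at least $1-c_f$. Hence the minimum in Definition~\ref{def:F-curv} is at least $1-c_f$, which gives $c_F\le c_f$; in particular $c_F$ is finite.

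To prove the displayed inequality I would use a coordinatewise monotone coupling. Draw independent uniforms $U_i\in[0,1]$ and set
\[
R_1=\{i: U_i<x_i\}, \qquad R_2=\{i: x_i\le U_i< y_i\}.
\]
These sets are disjoint. Coordinates are independent, and $i\in R_1$, $i\in R_2$, $i\in R_1\cup R_2$ occur with probabilities $x_i$, $(y_i-x_i)^+=y_i-\min(x_i,y_i)$ and $\max(x_i,y_i)$ respectively. Therefore
\[
\mathbb E f(R_1)=F(\mathbf x),\qquad \mathbb E f(R_2)=F(\mathbf y-\mathbf x\land\mathbf y),\qquad \mathbb E f(R_1\cup R_2)=F(\mathbf x\lor\mathbf y).
\]
It then suffices to prove the pointwise inequality $f(R_1\cup R_2)-f(R_1)\ge(1-c_f)f(R_2)$ for every realization and take expectations. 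The inequality is linear, so the sign of $1-c_f$ does not matter.

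The pointwise step is where strict positivity enters, and it is the only delicate point. If $R_2\neq\emptyset$, then strict positivity gives $f(R_2)>0$. Since $R_2=R_2\setminus R_1$, the pair $(X,Y)=(R_1,R_2)$ is admissible in Definition~\ref{def:curv}, and~\eqref{eq:curvature-ineq} gives the inequality directly. If $R_2=\emptyset$, both sides are zero because $f(\emptyset)=0$.

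Without strict positivity, realizations with $R_2\neq\emptyset$ but $f(R_2)\le 0$ would be inadmissible for the discrete curvature. On such sets only the submodular upper bound is available, and averaging would break down. This is precisely the failure mode that part~(b) exploits. Finally, I would note that the minimum over discrete pairs is attained because there are finitely many of them, so $c_f$ is a genuine minimum and the averaged bound is valid.
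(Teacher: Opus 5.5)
Your proof is correct and follows the paper's argument. Part~(b) is delegated to Proposition~\ref{prop:cF-infty}, and part~(a) combines Proposition~\ref{prop:c-leq-cF} with a disjoint product coupling $(R_1,R_2)$; strict positivity lets the discrete curvature inequality apply realization by realization, and taking expectations gives $c_F\le c_f$. Your threshold construction with independent uniforms gives exactly the same per-coordinate joint law as the paper's conditional-probability coupling, and it has the minor advantage of not needing a separate convention for coordinates with $x_i=1$.
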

\noindent\textbf{Proof idea.}
Part~(b) is Proposition~\ref{prop:cF-infty}.
For part~(a), we construct a product-distribution coupling
$(R_1, R_2)$ with disjoint supports matching the multilinear
marginals.
Strict positivity ensures $\ff{R_2} > 0$ whenever $R_2 \ne \emptyset$,
so the discrete curvature inequality applies realization-by-realization
and survives expectation: $c_F \le c_f$.
See Appendix~\ref{apx:deferred} for the full coupling construction.

\begin{remark}[Strict positivity and the continuous relaxation]\label{rem:nn-continuous}
  Theorem~\ref{thm:nn-characterization} identifies strict positivity as a
  sufficient condition under which pointwise curvature (a combinatorial,
  worst-case property) lifts perfectly to the multilinear extension
  (a continuous, averaged property).
  When $f$ is strictly positive, $c_F = c_f$; MCG gives the stated
  global-curvature comparison when its trajectory monotonicity lemma
  also applies.
  When $f$ takes negative values, $c_F = \infty$; the curvature
  guarantees in this paper therefore use trajectory-restricted
  quantities, either for discrete greedy with pruning
  (\S\ref{sec:greedy}) or for its multilinear counterpart DMCG-P
  (\S\ref{sec:dmcgp}).
  The gap between strict positivity and non-negativity is genuine:
  non-negative functions with $\ff{S} = 0$ for some nonempty~$S$
  can have $c_F > c_f$ (see Section~\ref{sec:discussion}).
\end{remark}

\subsection{Fractional Greedy with Pruning (DMCG-P)}\label{sec:dmcgp}

The discrete algorithm of the previous subsection is tied to
cardinality constraints; integral feasible-set hull constraints beyond
cardinality can be handled by multilinear relaxations. But
Theorem~\ref{thm:nn-characterization} showed that $c_F = +\infty$
as soon as $f$ takes a negative value, so the classical multilinear
analysis---which routes through $c_F$---breaks down.
We resolve this by mirroring the discrete construction: define a
\emph{trajectory-restricted} fractional curvature $c_g^F$, and prune
to keep only positive-slope active coordinates.
The key analytical ingredients are a \emph{slope invariant}
showing that pruning preserves the trajectory structure needed in the
descent proof and an \emph{elementwise multilinear
Conforti--Cornu\'ejols inequality} lifting the discrete curvature
bound to the continuous setting.

The resulting algorithm---a pruning variant of the standard
fixed-step discretization of measured continuous
greedy~\citep{feldman2017maximizing}---we call DMCG-P (Discretized
Measured Continuous Greedy with Pruning). The algorithm itself is a
combination of known ingredients; the contribution is in the
\emph{analysis}: the structural observation that pruning keeps the
trajectory away from the concentration witnesses of
Proposition~\ref{prop:cF-infty}, and the resulting
$(1-e^{-\bar c_g^F})/\bar c_g^F$ guarantee for arbitrary submodular~$f$ with
finite $c_g^F$. Full proofs appear in Appendix~\ref{apx:dmcgp}.

\subsubsection{The DMCG-P Algorithm}\label{sec:dmcgp-alg}

Let $\mathcal I\subseteq 2^\uni$ be a downward-closed family of
feasible sets, let
$\polytope=\conv\{\mathbf{1}_S:S\in\mathcal I\}$, and assume exact
linear optimization over $\mathcal I$ (equivalently over
$\polytope$).  For submodular
$f$, write $F: [0,1]^\uni \to \reals$ for the multilinear extension
and $\partial_j F(\mathbf{x}) = F(\mathbf{x} \mid x_j = 1) - F(\mathbf{x} \mid x_j = 0)$
for the multilinear slope; by multilinearity, $\partial_j F$ is
independent of $x_j$.

\begin{algorithm}[h]
  \DontPrintSemicolon
  \caption{$\dmcgp(F, \polytope, T)$ --- Discretized Measured Continuous Greedy with Pruning}
  \label{alg:dmcgp-main}
  \KwIn{multilinear oracle $F$ of submodular $f$,
    $\polytope=\conv\{\mathbf{1}_S:S\in\mathcal I\}$ for a downward-closed
    family $\mathcal I$, step count $T \in \mathbb{N}$}
  $\tilde S_0 \gets \mathbf{0}$\;
  \For{$i \gets 0$ \textbf{to} $T-1$}{
    $B_i \gets \arg\max_{B\in\mathcal I}\sum_{j\in B}\partial_jF(\tilde S_i)$\;\label{line:dmcgp-select}\tcp*[r]{exact linear optimization}
    $v_i \gets \mathbf{1}_{B_i}$\;
    $S_{i+1} \gets \tilde S_i + \tfrac{1}{T}\,v_i$\;\label{line:dmcgp-update}
    $\tilde S_{i+1} \gets S_{i+1}$\;
    \While{$\exists\, j$ with $(\tilde S_{i+1})_j > 0$ and $\partial_j F(\tilde S_{i+1}) \le 0$\label{line:dmcgp-prune}}{
      $(\tilde S_{i+1})_j \gets 0$\;
    }
  }
  \KwRet{$\tilde S_T$}\;
\end{algorithm}

\noindent\textit{Integral selection.}
Because $\polytope$ is the convex hull of feasible incidence vectors,
the linear optimization step can be written directly over
$B_i\in\mathcal I$, and the selected direction is the integral vector
$v_i=\mathbf{1}_{B_i}$ used in the analysis.

The pruning loop supplies the slope invariant used below: every
surviving coordinate of $\tilde S_i$ has strictly positive multilinear
slope $\partial_j F(\tilde S_i) > 0$.  This keeps the trajectory away
from the specific concentration witnesses that drive $c_F=\infty$ in
the proof of Proposition~\ref{prop:cF-infty}, but finiteness of the
trajectory curvature is still stated as an explicit hypothesis.

\begin{definition}[Fractional greedy curvature]\label{def:cgF}
  Let $\mathcal{O} = \argmax_{S\in\mathcal I}\ff{S}$ and let
  $\tilde S_0, \dots, \tilde S_{T-1}$ be the iterates of
  Algorithm~\ref{alg:dmcgp-main}.  The \emph{fractional greedy curvature}
  along the trajectory is
  \[
    c_g^F \;=\; 1 \;-\; \min_{O^* \in \mathcal{O}}\;
      \min_{\substack{0 \le i < T \\ F(\tilde S_i - \tilde S_i \land \mathbf{1}_{O^*}) > 0}}\;
      \frac{F(\tilde S_i \lor \mathbf{1}_{O^*}) - F(\mathbf{1}_{O^*})}
           {F(\tilde S_i - \tilde S_i \land \mathbf{1}_{O^*})}.
  \]
\end{definition}

Definition~\ref{def:cgF} is the exact multilinear analogue of the
discrete greedy curvature $c_g$
(Definition~\ref{def:greedy-curv}): the minimization is over
iterates, not the whole cube, and only positive-denominator pairs are
admissible.  Lemma~\ref{lem:dmcgp-slope-main} below gives the
positive-slope invariant used in the proof; finiteness of~$c_g^F$ is
kept as a theorem hypothesis or supplied by an application-specific
certificate.

\begin{lemma}[Non-negative multilinear slope after pruning]
  \label{lem:dmcgp-slope-main}
  For every $i \ge 1$ and every $j \in \uni$, either
  $(\tilde S_i)_j = 0$ or $\partial_j F(\tilde S_i) > 0$.
\end{lemma}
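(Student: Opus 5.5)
The plan is to read the lemma directly off the termination condition of the pruning loop on line~\ref{line:dmcgp-prune} of Algorithm~\ref{alg:dmcgp-main}. The only things that need checking are that the loop terminates and that, when it exits, the exit condition is exactly the dichotomy in the statement.

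First, termination. Fix $i\ge0$ and consider the while loop that produces $\tilde S_{i+1}$ from $S_{i+1}$. Each iteration picks a coordinate $j$ with $(\tilde S_{i+1})_j>0$ and sets it to $0$. Coordinates are never raised inside the loop. Hence the support of $\tilde S_{i+1}$ strictly shrinks at each iteration, and the loop runs at most $n$ times. It will be worth noting that the slopes $\partial_\ell F(\tilde S_{i+1})$ must be re-evaluated after every deletion, since zeroing $x_j$ changes $\partial_\ell F$ for $\ell\neq j$. The loop guard as written is evaluated at the current vector, so this is already built in. In particular, a deletion may create new non-positive slopes, but it cannot invalidate the final exit test.

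Second, the exit condition. The loop exits exactly when there is no $j$ with $(\tilde S_{i+1})_j>0$ and $\partial_jF(\tilde S_{i+1})\le0$. By negation, every $j$ satisfies either $(\tilde S_{i+1})_j=0$ or $\partial_jF(\tilde S_{i+1})>0$. Because $\tilde S_{i+1}$ is not modified after the loop and before it is returned or used in the next iteration, this holds for $\tilde S_{i+1}$ as used by the algorithm. Re-indexing $i+1\mapsto i$ gives the claim for all $i\ge1$. The case $i=0$ is excluded because $\tilde S_0=\mathbf 0$ never passes through the loop, although there the first alternative holds trivially anyway.

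No step here is a real obstacle. The only subtlety is the one flagged above: the invariant refers to the slope at the final pruned vector, not at $S_{i+1}$. Also, $\partial_jF$ does not depend on $x_j$, so zeroing coordinate $j$ does not change that coordinate's own slope. This is consistent with the loop logic but not needed for the argument, which is purely the negation of the loop guard at termination.
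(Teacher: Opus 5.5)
Your proposal is correct and takes essentially the same route as the paper, which also reads the dichotomy directly off the negation of the pruning-loop guard at exit. Your explicit termination argument (the support strictly shrinks, so there are at most $n$ deletions) fills in a point the paper leaves implicit.
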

\begin{proof}
  Direct from the while-loop termination condition in
  Algorithm~\ref{alg:dmcgp-main}; see
  Lemma~\ref{lem:dmcgp-slope} in Appendix~\ref{apx:dmcgp} for
  the full statement and surrounding structural lemmas.
\end{proof}

\subsubsection{DMCG-P for general submodular \texorpdfstring{$f$}{f}}\label{sec:dmcgp-general}

The headline result of this subsection is that DMCG-P delivers a
curvature-aware $(1-e^{-\bar c_g^F})/\bar c_g^F$ guarantee for \emph{arbitrary}
submodular $f$ on such an integral feasible-set hull, with no
monotonicity or non-negativity assumption.  The only hypothesis
beyond submodularity is that the trajectory-restricted curvature
$c_g^F$ of Definition~\ref{def:cgF} is finite and the run is in
the step-size regime $\delta\bar c_g^F\le 1$ for
$\bar c_g^F=\max\{1,c_g^F\}$ and step size $\delta=1/T$---always
achievable by choosing~$T$ large enough.  Pruning
controls the trajectory through the positive-slope invariant, and the
descent argument of Theorem~\ref{thm:greedy} lifts to the
multilinear setting using only submodularity and the slope
invariant---never monotonicity of~$f$.

\begin{theorem}[General DMCG-P guarantee]\label{thm:dmcgp-general-main}
  Let $\mathcal I\subseteq 2^\uni$ be downward-closed, let
  $\polytope=\conv\{\mathbf{1}_S:S\in\mathcal I\}$, and assume exact
  linear optimization over $\mathcal I$.  Let $f : 2^\uni \to \reals$
  be submodular, let $O^*\in\argmax_{S\in\mathcal I}\ff{S}$, and let
  $c_g^F \in [0, \infty)$ be
  the trajectory-restricted fractional greedy curvature of
  Definition~\ref{def:cgF}.  Put $\bar c_g^F=\max\{1,c_g^F\}$.
  Assume $c_g^F<\infty$, $\delta\bar c_g^F\le 1$ for step size
  $\delta=1/T$.
  Then
  $\textsc{DMCG-P}(F, \polytope, T)$ returns $\tilde S_T \in \polytope$
  with
  \[
    F(\tilde S_T) \;\ge\; \frac{1 - e^{-\bar c_g^F}}{\bar c_g^F}\,\ff{O^*}
           \;-\; \frac{n(n-1)\,C_F}{T},
  \]
  where $n=|\uni|$, and
  $C_F = \max_{j \ne \ell,\,\vx\in[0,1]^\uni}|\partial_{j\ell}F(\vx)|$
  is the smoothness constant.
  The step count~$T$ is independent of the constraint; choosing~$T$
  large makes the discretization error negligible.
  No monotonicity or non-negativity of $f$ is required;
  the finite-curvature condition is an explicit trajectory hypothesis
  using the positive-denominator convention of Definition~\ref{def:cgF}.
\end{theorem}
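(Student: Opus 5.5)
I will mirror the discrete descent argument of Theorem~\ref{thm:greedy} at the level of the multilinear iterates. Write $a_i \triangleq F(\tilde S_i)$ and $\delta=1/T$, and let $\mathbf{o}=\mathbf{1}_{O^*}$. The target is a one-step recurrence of the form
\[
  a_{i+1} \;\ge\; (1-\delta\bar c_g^F)\,a_i + \delta\,\ff{O^*} - \delta^2 n(n-1)C_F ,
\]
valid for $0\le i<T$. I would then unroll it from $a_0=F(\mathbf 0)=0$ exactly as in the proof of Theorem~\ref{thm:mcg}. The geometric sum gives
\[
  a_T \;\ge\; \frac{1-(1-\delta\bar c_g^F)^T}{\bar c_g^F}\,\ff{O^*} - \frac{n(n-1)C_F}{T}\sum_{r<T}(1-\delta\bar c_g^F)^r\,\delta .
\]
The error sum is at most $1$ because $\delta\bar c_g^F\le1$ makes every factor lie in $[0,1]$. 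Then $(1-\delta\bar c_g^F)^T\le e^{-\bar c_g^F}$ finishes the bound. Membership $\tilde S_T\in\polytope$ follows because $S_T$ is an average of $T$ vertices of the downward-closed polytope, and pruning only lowers coordinates.

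\textbf{Step 1 (pruning does not hurt).} I claim $F(\tilde S_{i+1})\ge F(S_{i+1})$. Each pruning move zeroes a coordinate $j$ with $\partial_jF\le0$. Since $F$ is affine in $x_j$ with slope $\partial_jF$ (independent of $x_j$), lowering $x_j$ to $0$ changes $F$ by $-x_j\partial_jF\ge0$.

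\textbf{Step 2 (the greedy step).} By Taylor expansion along the segment from $\tilde S_i$ to $\tilde S_i+\delta v_i$, and using $\partial_{jj}F=0$,
\[
  F(S_{i+1})-F(\tilde S_i)\;\ge\;\delta\sum_{j\in B_i}\partial_jF(\tilde S_i) - \tfrac12\delta^2 n(n-1)C_F ,
\]
where the second-order term is bounded by the off-diagonal Hessian bound. Coordinates of $\tilde S_i$ are at most $1$ only if I use the measured/truncated step; with $\delta=1/T$ and $T$ steps they stay in $[0,1]$. Since $O^*\in\mathcal I$, the linear optimization step gives $\sum_{j\in B_i}\partial_jF\ge\sum_{j\in O^*}\partial_jF(\tilde S_i)$.

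\textbf{Step 3 (from slopes to $F(\tilde S_i\lor\mathbf o)$).} I would use the standard measured-greedy bound
\[
  \sum_{j\in O^*}(1-x_j)\,\partial_jF(\mathbf x)\;\ge\;F(\mathbf x\lor\mathbf o)-F(\mathbf x),
\]
which follows from concavity along nonnegative directions, i.e.\ DR-submodularity. To get the unweighted slope sum, I split according to the sign of the slopes. On coordinates with $x_j>0$, Lemma~\ref{lem:dmcgp-slope-main} gives $\partial_jF>0$, so dropping the factor $(1-x_j)$ only increases those terms. On coordinates with $x_j=0$, the factor is $1$. Hence $\sum_{j\in O^*}\partial_jF\ge F(\tilde S_i\lor\mathbf o)-F(\tilde S_i)$. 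This is exactly where pruning replaces monotonicity; it is the analogue of the ``greedy selection + submodularity'' lines of Theorem~\ref{thm:greedy}.

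\textbf{Step 4 (curvature).} This is the main obstacle. Definition~\ref{def:cgF} at the pair $(O^*,\tilde S_i)$ gives
\[
  F(\tilde S_i\lor\mathbf o)\;\ge\;\ff{O^*}+(1-\bar c_g^F)\,F(\mathbf z_i),\qquad \mathbf z_i=\tilde S_i-\tilde S_i\land\mathbf o ,
\]
when $F(\mathbf z_i)>0$. When $F(\mathbf z_i)\le0$, the term $(1-\bar c_g^F)F(\mathbf z_i)\ge0$ and the bound is at least as good. To conclude $\ge \ff{O^*}-\bar c_g^F a_i$, I need the fractional local monotonicity $F(\mathbf z_i)\le F(\tilde S_i)$, which holds because $\mathbf z_i\le\tilde S_i$.

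To prove it, I would move from $\mathbf z_i$ up to $\tilde S_i$ by raising the coordinates in $O^*$ one at a time. Each such coordinate $j$ has $x_j>0$ in $\tilde S_i$. By DR-submodularity, its slope at the intermediate point, which is $\le\tilde S_i$, is at least $\partial_jF(\tilde S_i)>0$ by the slope invariant. Each increment is therefore nonnegative.

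Combining Steps 1–4 yields the recurrence. For the error constant, I will absorb the factor $\tfrac12$ and the first step $i=0$ (where the invariant is vacuous since $\tilde S_0=\mathbf0$) into the stated $n(n-1)C_F/T$.
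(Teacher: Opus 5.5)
Your skeleton matches the paper's line-by-line lift of the discrete proof: pruning, first-order step, LP selection, the union bound (P3) with the slope invariant to drop the $(1-x_j)$ weights, curvature, local monotonicity, and unrolling. Your second-order Taylor bound in Step~2 is a valid replacement for the paper's right-endpoint tangent plus gradient-substitution argument, and it even halves the constant.

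The gap is in Step~4, in the case $F(\mathbf z_i)\le 0$. Definition~\ref{def:cgF} only ranges over positive-denominator pairs, so such a pair yields no curvature inequality at all. Your remark that $(1-\bar c_g^F)F(\mathbf z_i)\ge 0$ makes ``the bound at least as good'' runs the wrong way. A nonnegative right-hand side makes the needed inequality $F(\tilde S_i\lor\mathbf o)\ge \ff{O^*}+(1-\bar c_g^F)F(\mathbf z_i)$ stronger, not weaker. Nothing else in your argument lower-bounds $F(\tilde S_i\lor\mathbf o)-F(\mathbf o)$; submodularity only bounds it from above, by $F(\mathbf z_i)$. So as written, the recurrence is unproven on any iteration whose pair is not admissible.

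The missing step is to show that a non-admissible pair only occurs trivially. If $\mathbf z_i=\mathbf 0$, then $\tilde S_i\lor\mathbf o=\mathbf o$ and the inequality holds with equality. If $\mathbf z_i\neq\mathbf 0$, every coordinate in its support is active in $\tilde S_i$. For $t\in[0,1]$ we have $t\mathbf z_i\le\tilde S_i$, so non-positive cross-partials and Lemma~\ref{lem:dmcgp-slope-main} give $\partial_jF(t\mathbf z_i)\ge\partial_jF(\tilde S_i)>0$ on that support. Integrating along the ray from $F(\mathbf 0)=0$ then gives $F(\mathbf z_i)>0$, so the pair is admissible. This is the paper's Step~6, and it is exactly the DR-plus-invariant reasoning you already use to show $F(\mathbf z_i)\le F(\tilde S_i)$.

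Two minor points. Passing from $(1-\delta\bar c_g^F)^T$ to $e^{-\bar c_g^F}$ needs $\ff{O^*}\ge\ff{\emptyset}=0$. Also, $S_T$ is not itself an average of vertices; rather, $\tilde S_T\le\frac1T\sum_{\tau<T}\mathbf 1_{B_\tau}\in\polytope$ coordinatewise, and downward-closedness then gives feasibility.
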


The full proof is given in Appendix~\ref{apx:dmcgp} as
Theorem~\ref{thm:dmcgp-general}, via the per-step descent
Lemma~\ref{lem:dmcgp-descent}.
The proof is a line-by-line lift of the non-monotone branch of the
discrete proof of
Theorem~\ref{thm:greedy}: each discrete inequality (pruning, greedy
selection, submodularity, curvature, and the slope invariant on the
active set) has an immediate multilinear analogue via the standard
properties of the multilinear
extension~\citep{calinescu2011maximizing}, and the per-step
recurrence closes in the same form modulo an $O(1/T)$ discretization
correction.  The appendix states the finite-$T$ bound for
$\delta\bar c_g^F\le 1$, which yields the displayed fixed-curvature asymptotic.
Unlike discrete greedy, where each step adds one element and the step
count equals~$k$, the continuous greedy permits finer discretization:
choosing $T > k$ reduces the error without changing the constraint.
Two consequences are worth stating explicitly.  First, in the
strictly positive case $c_g^F \le c_F = c_f$
(Theorem~\ref{thm:nn-characterization}(a)), so
Theorem~\ref{thm:dmcgp-general-main} recovers the classical
continuous-greedy
guarantee~\citep{calinescu2011maximizing}; when $c_g^F < c_f$
strictly, the bound is a sharper instance-specific certificate.
Second---and this is the whole point of the algorithm---when $f$
takes negative values, $c_F = \infty$ globally
(Proposition~\ref{prop:cF-infty}), the theorem can still apply through
the trajectory-restricted parameter~$c_g^F$.  What remains in that
regime is the computational task of proving or certifying a finite
upper bound on $c_g^F$.
When $f$ is \emph{decomposable}---meaning $f = g - \ell$ with $g$
monotone submodular and $\ell$ modular non-negative, encompassing
regularized feature selection, cost-penalized experimental design,
and similar ML objectives---$c_g^F$ admits a closed-form OPT-free
removal-slope certificate from the pruned trajectory
(Section~\ref{sec:apps-decomp}); if that certificate is below~$1$,
the theorem uses $\bar c_g^F=1$ unless monotonicity supplies
the sharper classical small-curvature analysis.

\subsubsection{Non-negative non-monotone: wDMCG-P and the \texorpdfstring{$1/e$}{1/e} guarantee}\label{sec:dmcgp-nonneg}

This subsubsection treats a regime that falls outside the purview
of Theorem~\ref{thm:dmcgp-general-main}: $f$ is non-negative and
submodular, but neither monotone nor presented as $g - \ell$.
Here $f$ has no decomposable structure, so
$c_g^F$ need not be bounded; the curvature-aware descent pattern
of \S\ref{sec:dmcgp-general} therefore does not close.
A damped weighted variant of DMCG-P (wDMCG-P), with no discrete
counterpart in this paper, recovers the classical $e^{-1}$ guarantee
via a correlation-gap descent that invokes
\citet[Lem.~2.2]{feldman2011unified} and therefore relies on
non-negativity of~$f$ in an essential way. The bound is
regime-independent in $c_g^F$: it does \emph{not} improve with
curvature. The next subsection
(\S\ref{sec:mcg}) addresses this gap for the narrower class of
\emph{strictly positive} non-monotone $f$, where classical measured
continuous greedy recovers a curvature-aware bound that wDMCG-P
cannot.

Throughout this subsubsection, $O^* \in \argmax_{\mathbf{1}_S \in \polytope}\ff{S}$
denotes an optimal feasible set. A \emph{weighted, damped} variant
of DMCG-P (wDMCG-P) modifies the selection and the update
rule. The selection solves
$B_i = \argmax_{B \in \mathcal I} \sum_j(1 - \tilde S_{i,j})\partial_j F(\tilde S_i)\mathbf{1}_{B,j}$,
weighting each coordinate by how much room remains. The update is
$\tilde S_{i+1,j} = \tilde S_{i,j} +
(1/k)(1 - \tilde S_{i,j})\mathbf{1}_{B_i,j}$, ensuring
coordinates stay in $[0,1]$. This achieves a
$(1 - 1/k)^{k-1} \to e^{-1}$ guarantee via a descent lemma that
invokes the \citet{feldman2011unified} correlation-gap bound.

\begin{theorem}[Non-negative DMCG-P guarantee]\label{thm:wdmcgp-nonneg-main}
  Let $f : 2^\uni \to \reals_{\ge 0}$ be submodular on a downward-closed
  solvable polytope $\polytope$.  The damped weighted DMCG-P algorithm
  returns $\tilde S_k \in \polytope$ with
  \[
    F(\tilde S_k) \;\ge\; (1 - 1/k)^{k-1}\,\ff{O^*} \;-\; O(1/k)
      \;\xrightarrow[k \to \infty]{}\; e^{-1}\,\ff{O^*}.
  \]
\end{theorem}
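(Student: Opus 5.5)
The plan is to run the standard measured-continuous-greedy argument of \citet{feldman2011unified} in discrete form. The step size is $1/k$, with $k$ steps on $[0,1]$. The key steps are a per-step descent inequality driven by the weighted linear selection, and then a correlation-gap lower bound on $F(\tilde S_i\lor\mathbf 1_{O^*})$ that uses non-negativity. Write $a_i = F(\tilde S_i)$ and $\delta=1/k$.

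\textbf{Step 1 (coordinate bound).} By induction on the damped update, every coordinate satisfies $\tilde S_{i,j}\le 1-(1-\delta)^i$. If $j\in B_i$, then $1-\tilde S_{i+1,j}=(1-\delta)(1-\tilde S_{i,j})$; otherwise the coordinate is unchanged. Hence $\|\tilde S_i\|_\infty\le 1-(1-\delta)^i$. The iterate lies in $\polytope$ because each increment is at most $\delta\mathbf 1_{B_i}$, the sum of these over $k$ steps is a convex combination dominated by points of $\polytope$, and $\polytope$ is downward closed.

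\textbf{Step 2 (per-step gain).} The increment is $d=\delta\,(\mathbf 1-\tilde S_i)\circ\mathbf 1_{B_i}$. A second-order Taylor expansion of the multilinear $F$ along $d$ gives
\[
a_{i+1}-a_i \ge \delta\sum_{j\in B_i}(1-\tilde S_{i,j})\partial_jF(\tilde S_i) - O(n^2\delta^2 C_F),
\]
with the error controlled by the mixed partials, as in the DMCG-P analysis. By optimality of $B_i$ against $O^*\in\mathcal I$, the sum is at least $\sum_{j\in O^*}(1-\tilde S_{i,j})\partial_jF(\tilde S_i)$. By multilinearity and submodularity (concavity along nonnegative directions), this in turn is at least $F(\tilde S_i\lor\mathbf 1_{O^*})-F(\tilde S_i)$.

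\textbf{Step 3 (correlation gap).} This is the step where non-negativity enters. \citet[Lem.~2.2]{feldman2011unified} gives $F(\mathbf x\lor\mathbf 1_{O^*})\ge(1-\|\mathbf x\|_\infty)f(O^*)$ for non-negative submodular $f$. Combining this with Step 1 yields
\[
a_{i+1}\ge(1-\delta)a_i+\delta(1-\delta)^i f(O^*)-O(\delta^2).
\]
Unrolling from $a_0=0$ shows by induction that $a_i\ge i\delta(1-\delta)^{i-1}f(O^*)-O(i\delta^2)$, since $(1-\delta)\cdot i\delta(1-\delta)^{i-1}+\delta(1-\delta)^i=(i+1)\delta(1-\delta)^i$. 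At $i=k$ this gives $(1-1/k)^{k-1}f(O^*)-O(1/k)$, where the error constant absorbs $n^2C_F$, which is bounded in terms of $\max|f|$.

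The main obstacle is Step 2 under damping. One must check that the weighted linear objective really dominates $F(\tilde S_i\lor\mathbf 1_{O^*})-F(\tilde S_i)$ rather than an unweighted sum. The plan here is the identity $F(\mathbf x\lor\mathbf 1_{O})-F(\mathbf x)\le\sum_{j\in O}(1-x_j)\partial_jF(\mathbf x)$, obtained by raising the coordinates of $O$ to $1$ one at a time and using that $\partial_jF$ is non-increasing in the other coordinates. A secondary point is that the pruning loop, if retained, can only increase $F$ (zeroing a coordinate with non-positive slope) and does not increase any $\tilde S_{i,j}$, so Steps 1 and 3 survive it.
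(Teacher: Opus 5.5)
Your proposal is correct and follows essentially the same route as the paper's proof: the coordinate bound and feasibility come from the damped update, followed by weighted LP optimality against $O^*$, the union bound (P3), the Feldman--Naor--Schwartz correlation-gap lemma, and the same induction $a_i\ge i\delta(1-\delta)^{i-1}f(O^*)-O(i\delta^2)$. The one difference is the remainder term: the paper expands at $\tilde S_i$ and bounds all higher-order mixed terms by a constant $K_F$, whereas your second-order expansion, taken with an integral (Lagrange) remainder along the segment where $\phi''(t)=\sum_{j\ne\ell}d_jd_\ell\,\partial_{j\ell}F(\tilde S_i+t d)$, legitimately needs only the cube-wide pairwise constant $C_F$ and gives the slightly cleaner total error $\tfrac12 n(n-1)C_F/k$.
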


The full proof is given in Appendix~\ref{apx:dmcgp} as
Theorem~\ref{thm:wdmcgp-nonneg}, via feasibility
(Lemma~\ref{lem:wdmcgp-damped-feasibility}), the coordinate bound
(Lemma~\ref{lem:wdmcgp-coord-bound}), and the per-step descent
(Lemma~\ref{lem:wdmcgp-descent}).
Non-negativity of~$f$ is essential: the descent lemma uses
\citet[Lem.~2.2]{feldman2011unified} in the form
$F(\vec{y} \lor \mathbf{1}_S) \ge (1 - \lVert \vec{y}\rVert_\infty)\ff{S}$,
which is known to fail for negative~$f$ and has no simple sign
correction.  The damped update is what makes the coordinate bound
$\lVert \tilde S_i\rVert_\infty \le 1 - (1 - 1/k)^i$ drive the
FNS step in the right direction; the unweighted, undamped DMCG-P of
\S\ref{sec:dmcgp-alg} does not suffice in this regime.  Pruning is
compatible with the analysis but plays no load-bearing role here---it
can only increase $F$ by Lemma~\ref{lem:dmcgp-slope-main} and is
retained for algorithmic consistency.

\begin{table}[!htbp]
  \centering
  \caption{Approximation guarantees by regime.  DMCG-P
    (Theorem~\ref{thm:dmcgp-general-main}) covers arbitrary submodular
    $f$ with finite trajectory curvature $c_g^F$, where
    $\bar c_g^F=\max\{1,c_g^F\}$; wDMCG-P
    (Theorem~\ref{thm:wdmcgp-nonneg-main}) covers the complementary
    non-negative non-monotone regime.
    MCG (Theorem~\ref{thm:mcg}) appears in the curvature row only
    for the strictly-positive sub-case.}
  \label{tab:regime-summary}
  \begin{tabular}{lll}
    \toprule
    Regime & DMCG-P / wDMCG-P & MCG \\
    \midrule
    Submodular $f$ (any sign) with $c_g^F < \infty$
      & $(1-e^{-\bar c_g^F})/\bar c_g^F$
      & $(1-e^{-c_f})/c_f{}^{\,*}$ \\
    Non-negative non-monotone $f$ (no bound on $c_g^F$)
      & $1/e$
      & N/A \\
    \bottomrule
  \end{tabular}

  \smallskip
  \footnotesize
  $^{*}$MCG provides an alternative $c_f$-parameterized guarantee
  under the additional hypothesis of strict positivity
  ($f(S) > 0$ for every nonempty~$S$), where $c_F = c_f$
  (Theorem~\ref{thm:nn-characterization}(a)). DMCG-P also covers
  this case via $c_g^F \le c_f$, with the sharper small-curvature
  formula supplied by monotonicity.

  \smallskip
	  \textbf{How to bound $c_g^F$.}  (i)~When $f$ is monotone
	  submodular, $c_g^F \le c_f \in [0,1]$ automatically.
	  (ii)~When $f = g - \ell$ is decomposable, a closed-form
	  certificate bounds $c_g^F$ from the pruned trajectory alone
	  (Section~\ref{sec:apps-decomp}), covering decomposable negative-$f$ cases.
\end{table}

Table~\ref{tab:regime-summary} summarizes the two regimes of this
subsection. The DMCG-P row is the headline
Theorem~\ref{thm:dmcgp-general-main}: whenever the trajectory
curvature $c_g^F$ is finite and $\delta\bar c_g^F\le 1$---with decomposable
certificates providing one concrete way to verify this---DMCG-P delivers
the $(1 - e^{-\bar c_g^F})/\bar c_g^F$ bound regardless of monotonicity or
sign of~$f$.  The wDMCG-P row is the complementary non-negative
non-monotone regime, where $c_g^F$ need not be bounded and the
correlation-gap analysis recovers a regime-independent~$1/e$.
Under the additional hypotheses of strict positivity and the standard
MCG monotonicity lemma, the classical measured continuous greedy
analysis offers an \emph{alternative}
$(1 - e^{-c_f})/c_f$ bound parameterized by the globally-computable
set-function curvature~$c_f$, via the equality $c_F = c_f$ of
Theorem~\ref{thm:nn-characterization}(a).
The next subsection records this result.

\subsection{Measured Continuous Greedy}\label{sec:mcg}

We record a conditional classical comparison for measured continuous
greedy (MCG).
DMCG-P already covers this regime via
Theorem~\ref{thm:dmcgp-general-main}: when $f$ is strictly
positive, $c_g^F \le c_F = c_f < \infty$
(Theorem~\ref{thm:nn-characterization}(a)), so DMCG-P delivers
the $\bar c_g^F$-parameterized guarantee above.
The measured continuous greedy (MCG) guarantee below is
parameterized by the \emph{global} curvature~$c_f$ rather than the
trajectory-restricted~$c_g^F$. Its advantage is that $c_f$ is a
known quantity for certain function classes (e.g., graph-cut
functions have $c_f = 2$), whereas bounding $c_g^F$ may require a
per-instance certificate. Its disadvantage is that it requires
both strict positivity and the standard MCG monotonicity lemma stated
in Appendix~\ref{apx:mcg}; it does not apply when $f$ takes negative
values, and it cannot exploit $c_g^F < c_f$. In contrast, DMCG-P uses
pruning to enforce the local positive-slope condition needed in its own
descent proof.
The algorithm and proof are from~\citet{feldman2017maximizing},
lifted to the curvature setting in Appendix~\ref{apx:mcg}.

\begin{theorem}[MCG guarantee]\label{thm:mcg}
  Let $f$ be strictly positive, submodular with curvature $c_f > 1$
  (the non-monotone case; for $c_f \le 1$, $f$ is monotone by
  Proposition~\ref{prop:c-monotone}),
  and $\polytope$ a downward-closed solvable polytope.  Assume the MCG
  trajectory satisfies Assumption~\ref{assump:mcg-mono}.
  Then \mcg returns $\mathbf{y}(T)$ with
  \[
    F(\mathbf{y}(T)) \ge \frac{1 - e^{-c_f T} - o(1)}{c_f}\,\ff{O^*},
  \]
  where $O^* \in \argmax_{\mathbf{1}_S \in \polytope} \ff{S}$ and $T \in [0,1]$.
\end{theorem}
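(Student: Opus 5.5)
The plan is to run a scalar recurrence for $a(t)\triangleq F(\mathbf y(t))$ along the MCG trajectory. The per-step progress comes from Lemma~\ref{lem:mcg-step}, and the curvature enters through Definition~\ref{def:F-curv}, using the identity $c_F=c_f$ of Theorem~\ref{thm:nn-characterization}(a). Strict positivity is exactly the hypothesis that makes this identity available. Without it, Proposition~\ref{prop:cF-infty} shows $c_F$ could be infinite.

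The steps, in order, are as follows.

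\textbf{Step 1 (progress).} From Lemma~\ref{lem:mcg-step},
\[
F(\mathbf y(t+\delta))-F(\mathbf y(t))\ \ge\ \delta\bigl[F(\mathbf y(t)\lor\mathbf 1_{O^*})-F(\mathbf y(t))\bigr]-O(n^3\delta^2)\,\ff{O^*}.
\]

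\textbf{Step 2 (curvature).} I apply Definition~\ref{def:F-curv} with $\mathbf x=\mathbf 1_{O^*}$ and $\mathbf y=\mathbf y(t)$. This gives
\[
F(\mathbf y(t)\lor\mathbf 1_{O^*})-\ff{O^*}\ \ge\ (1-c_f)\,F(\mathbf z),\qquad \mathbf z=\mathbf y(t)-\mathbf y(t)\land\mathbf 1_{O^*}.
\]
The definition only covers pairs with $F(\mathbf z)>0$, so I check admissibility. Strict positivity gives $F(\mathbf z)>0$ unless $\mathbf z=\mathbf 0$, and in that case the inequality holds trivially with both sides handled directly.

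\textbf{Step 3 (trading $F(\mathbf z)$ for $F(\mathbf y(t))$).} Since $c_f>1$, the coefficient $1-c_f$ is negative, so I need an upper bound $F(\mathbf z)\le F(\mathbf y(t))$. This holds because $\mathbf z\le\mathbf y(t)$ coordinatewise, and Assumption~\ref{assump:mcg-mono} then applies. Combining Steps 1–3 gives
\[
F(\mathbf y(t)\lor\mathbf 1_{O^*})-F(\mathbf y(t))\ \ge\ \ff{O^*}-c_f\,F(\mathbf y(t)),
\]
and hence
\[
a(t+\delta)\ \ge\ (1-c_f\delta)\,a(t)+\delta\bigl(1-O(n^3\delta)\bigr)\ff{O^*}.
\]

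\textbf{Step 4 (unrolling).} For $\delta$ small enough that $c_f\delta<1$, the coefficient $1-c_f\delta$ is positive, so the recurrence can be iterated. Starting from $a(0)=F(\mathbf 0)=0$, over $m=T/\delta$ steps I obtain
\[
a(T)\ \ge\ \delta\bigl(1-O(n^3\delta)\bigr)\ff{O^*}\sum_{r<m}(1-c_f\delta)^r.
\]
The geometric sum equals $\bigl(1-(1-c_f\delta)^{m}\bigr)/(c_f\delta)$, which yields
\[
a(T)\ \ge\ \frac{1-(1-c_f\delta)^{T/\delta}}{c_f}\,\bigl(1-O(n^3\delta)\bigr)\ff{O^*}.
\]

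\textbf{Step 5 (limit).} Take $\delta\le n^{-5}$ and let $\delta\to0$. Then $(1-c_f\delta)^{T/\delta}\to e^{-c_fT}$, and the $O(n^3\delta)$ factor is absorbed into the $o(1)$ term.

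The main obstacle is Step 3. Because $c_f>1$, the curvature term has a negative coefficient and can only be controlled by an upper bound on $F$ of the residual point $\mathbf z$. That upper bound is a monotonicity-type statement, and strict positivity does not provide it. This is why the theorem takes Assumption~\ref{assump:mcg-mono} as a hypothesis, and in the proof I would flag it as exactly the point where the assumption is used.
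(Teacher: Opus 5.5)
Your proposal is correct and follows essentially the same route as the paper's proof: the step lemma, then the multilinear curvature inequality at the pair $(\mathbf{1}_{O^*},\mathbf{y}(t))$ via $c_F=c_f$, then Assumption~\ref{assump:mcg-mono} to replace $F(\mathbf{y}(t)-\mathbf{y}(t)\land\mathbf{1}_{O^*})$ by $F(\mathbf{y}(t))$, and finally unrolling the geometric recurrence and sending $\delta\to 0$. Your explicit checks, that the curvature pair is admissible or trivial when the residual is $\mathbf{0}$, and that $1-c_f\delta\ge 0$ before iterating, are small points the paper leaves implicit.
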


\input{cdgCurvature.tex}

%% ================================================================
\section{Quadratic Programming Application}\label{apx:qp}
%% ================================================================

We consider DR-submodular QP instances
$\max_{\mathbf{x} \in \polytope}
F(\mathbf{x}) = \frac{1}{2}\mathbf{x}^\top \mathbf{H}\mathbf{x}
+ \mathbf{h}^\top\mathbf{x}$
where $\mathbf{H}$ is symmetric entry-wise non-positive
and $\mathbf{h} = -\beta\,\mathbf{H}^\top\mathbf{u}$ for $\beta \in (1/2, 1)$.

\begin{proposition}\label{prop:qp-curv}
  The QP instance is normalized, non-negative, and DR-submodular
  with curvature $c_F \le 2/(2\beta - 1)$.  It is non-monotone whenever
  $(\mathbf H\mathbf u)_j<0$ for some feasible coordinate~$j$.
\end{proposition}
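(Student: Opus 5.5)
The plan is a direct computation on the box $[\mathbf 0,\mathbf u]$, which I take to be the domain, so that every point has $\mathbf 0\le\mathbf x\le\mathbf u$. The key rewriting is
\[
F(\mathbf x)=\mathbf x^\top\mathbf H\bigl(\tfrac12\mathbf x-\beta\mathbf u\bigr)=\mathbf x^\top(-\mathbf H)\bigl(\beta\mathbf u-\tfrac12\mathbf x\bigr),
\]
and every claim will follow from entrywise sign bookkeeping on $-\mathbf H\ge0$.

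\textbf{Normalization, DR-submodularity and non-negativity.} Normalization $F(\mathbf 0)=0$ is immediate. DR-submodularity holds because the Hessian $\mathbf H$ is entrywise non-positive. For non-negativity, note that $\mathbf x\le\mathbf u$ gives $\beta\mathbf u-\tfrac12\mathbf x\ge(\beta-\tfrac12)\mathbf u\ge\mathbf 0$, since $\beta>1/2$. Thus $(-\mathbf H)(\beta\mathbf u-\tfrac12\mathbf x)\ge\mathbf 0$ entrywise, and pairing with $\mathbf x\ge\mathbf 0$ gives $F(\mathbf x)\ge0$.

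\textbf{Curvature bound.} Fix $\mathbf x,\mathbf y$ and set $\mathbf z=\mathbf y-\mathbf x\land\mathbf y\ge\mathbf 0$. Then $\mathbf x\lor\mathbf y=\mathbf x+\mathbf z$ and $\mathbf z\le\mathbf u$. Expanding the quadratic gives the exact identity
\[
F(\mathbf x+\mathbf z)-F(\mathbf x)=F(\mathbf z)+\mathbf z^\top\mathbf H\mathbf x .
\]
So it suffices to show $-\mathbf z^\top\mathbf H\mathbf x\le\frac{2}{2\beta-1}F(\mathbf z)$ whenever $F(\mathbf z)>0$; this yields ratio $\ge 1-2/(2\beta-1)$ in Definition~\ref{def:F-curv}. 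The two sides are bounded separately.
\begin{itemize}
\item Upper bound on the cross term: since $-\mathbf H\ge0$, $\mathbf z\ge0$ and $\mathbf x\le\mathbf u$, we get $-\mathbf z^\top\mathbf H\mathbf x\le\mathbf z^\top(-\mathbf H)\mathbf u$.
\item Lower bound on $F(\mathbf z)$: the rewriting above together with $\mathbf z\le\mathbf u$ gives $F(\mathbf z)\ge(\beta-\tfrac12)\,\mathbf z^\top(-\mathbf H)\mathbf u$.
\end{itemize}
Combining the two bounds yields the claim. The step I expect to need care is this one: the bound $\mathbf z\le\mathbf u$ must be used in the right direction inside $\beta\mathbf u-\tfrac12\mathbf z$, and the whole argument hinges on the coordinates living in $[\mathbf 0,\mathbf u]$.

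\textbf{Non-monotonicity.} Compute $\nabla F(\mathbf x)=\mathbf H\mathbf x-\beta\mathbf H\mathbf u$, so $\partial_jF(\mathbf u)=(1-\beta)(\mathbf H\mathbf u)_j<0$ when $(\mathbf H\mathbf u)_j<0$. By continuity, $F$ strictly decreases along coordinate $j$ on a segment ending at $\mathbf u$. When $j$ is a feasible coordinate, I read this as the segment $[\mathbf u-\varepsilon\mathbf e_j,\mathbf u]$ lying in the domain; otherwise I would apply the same argument at any feasible $\mathbf x$ with $(\mathbf H\mathbf x)_j<\beta(\mathbf H\mathbf u)_j$. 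This exhibits $\mathbf x\le\mathbf x'$ with $F(\mathbf x')<F(\mathbf x)$.
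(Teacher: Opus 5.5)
Your proof is correct and follows the paper's argument essentially step for step. You use the same identity $F(\mathbf x\lor\mathbf y)-F(\mathbf x)=F(\mathbf z)+\mathbf z^\top\mathbf H\mathbf x$, the same chain $F(\mathbf z)\ge(\beta-\tfrac12)\,\mathbf z^\top(-\mathbf H)\mathbf u\ge(\beta-\tfrac12)\,\mathbf z^\top(-\mathbf H)\mathbf x$ (which you package via the factorization $F(\mathbf x)=\mathbf x^\top(-\mathbf H)(\beta\mathbf u-\tfrac12\mathbf x)$), and the same gradient computation $\nabla F(\mathbf u)=(1-\beta)\mathbf H\mathbf u$; your explicit continuity witness for non-monotonicity and your appeal to the full entrywise sign of $\mathbf H$ (diagonal included) for DR-submodularity only tighten the paper's wording.
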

This is a curvature calculation for a continuous DR-submodular objective;
the DMCG-P theorem above is stated for multilinear extensions of set
functions over integral feasible-set hulls.  See
Appendix~\ref{apx:deferred} for the proof.

%% ================================================================
\section{Deferred Proofs}\label{apx:deferred}
%% ================================================================

This appendix collects proofs deferred from the main text.

\noindent\textbf{Restatement of Proposition~\ref{prop:c-monotone}.}
For a submodular function $f$ with strictly positive singleton values
$\ff{e}>0$ for every $e\in\uni$, the curvature parameter satisfies
$c_f\le1$ if and only if $f$ is monotone.

\begin{proof}[Proof of Proposition~\ref{prop:c-monotone}]
  $(\Rightarrow)$ If $f$ is monotone, then $\ff{X \cup Y} - \ff{X} \ge 0$
  and $\ff{Y \setminus X} \ge 0$, so the ratio is non-negative
  and $c_f \le 1$.

  $(\Leftarrow)$ If $c_f \le 1$, suppose for contradiction there exist
  $X, Y$ with $\ff{X \cup Y} - \ff{X} < 0$.
  Decomposing $Y \setminus X = \{e_1, \ldots, e_m\}$ and writing
  $Y_i = \{e_1, \ldots, e_i\}$, the telescoping sum
  $\ff{X \cup Y} - \ff{X} = \sum_{i=1}^m \marge{e_i}{X \cup Y_{i-1}}$
  is negative, so some marginal $\marge{e_i}{X \cup Y_{i-1}} < 0$.
  Since $\marge{e_i}{\emptyset} > 0$ by assumption, the ratio
  $\marge{e_i}{X \cup Y_{i-1}}/\marge{e_i}{\emptyset} < 0$
  and $c_f \ge 1 - \marge{e_i}{X \cup Y_{i-1}}/\marge{e_i}{\emptyset} > 1$,
  a contradiction.
  Therefore $\ff{X \cup Y} \ge \ff{X}$ for all $X, Y$, so $f$ is monotone.
\end{proof}

\noindent\textbf{Restatement of Proposition~\ref{prop:classical}.}
For a monotone submodular function, the set-wise curvature $c_f$ equals the
Conforti--Cornu\'ejols total curvature
\[
  \alpha = 1-\min_{e:\ff{e}>0}\frac{\marge{e}{\uni\setminus e}}{\marge{e}{\emptyset}}.
\]

\begin{proof}[Proof of Proposition~\ref{prop:classical}]
  We prove the two inequalities separately.

  \emph{Upper bound $c_f \le \alpha$.}
  Fix an admissible pair $X,Y$ with $\ff{Y\setminus X}>0$, and write
  $Z=Y\setminus X=\{z_1,\dots,z_m\}$ in an arbitrary order.
  By telescoping and submodularity,
  \[
    \ff{X\cup Y}-\ff{X}
    =\sum_{t=1}^{m}\marge{z_t}{X\cup\{z_1,\dots,z_{t-1}\}}
    \ge \sum_{t=1}^{m}\marge{z_t}{\uni\setminus z_t}.
  \]
  The definition of $\alpha$ gives
  $\marge{z_t}{\uni\setminus z_t}\ge(1-\alpha)\marge{z_t}{\emptyset}$ for
  each~$t$.  Since $f$ is monotone submodular, it is subadditive, so
  $\sum_t \marge{z_t}{\emptyset}=\sum_t\ff{z_t}\ge \ff{Z}=\ff{Y\setminus X}$.
  Therefore
  \[
    \ff{X\cup Y}-\ff{X}\ge(1-\alpha)\ff{Y\setminus X},
  \]
  and every admissible ratio in Definition~\ref{def:curv} is at least
  $1-\alpha$.

  \emph{Lower bound $c_f \ge \alpha$.}
  Let $e$ attain the minimum in the definition of~$\alpha$.
  Use Definition~\ref{def:curv} with $X=\uni\setminus e$ and $Y=\{e\}$.
  The denominator is $\ff{e}>0$, and the ratio is exactly
  $\marge{e}{\uni\setminus e}/\ff{e}=1-\alpha$.
  Hence the minimum ratio in Definition~\ref{def:curv} is at most
  $1-\alpha$, so $c_f\ge\alpha$.
\end{proof}

\noindent\textbf{Restatement of Proposition~\ref{prop:symmetric}.}
If $f$ is normalized, symmetric
($\ff{S}=\ff{\uni\setminus S}$), and submodular, and has a positive
singleton, then $f$ is non-negative and $c_f=2$.

\begin{proof}[Proof of Proposition~\ref{prop:symmetric}]
  First, $f$ is non-negative.  Applying submodularity to
  $S$ and $\uni\setminus S$ gives
  $f(S)+f(\uni\setminus S)\ge f(\uni)+f(\emptyset)=0$; by symmetry,
  the left-hand side is $2f(S)$.

  \emph{Upper bound} ($c_f \le 2$): For any $X, Y$ with $\ff{Y \setminus X} > 0$,
  write $Z=Y\setminus X$, so $Z\cap X=\emptyset$ and $X\cup Y=X\cup Z$.
  Submodularity applied to $X\cup Z$ and $\uni\setminus Z$ gives
  $\ff{X\cup Z}+\ff{\uni\setminus Z}\ge \ff{\uni}+\ff{X}$.
  By symmetry and normalization, $\ff{\uni\setminus Z}=\ff{Z}$ and
  $\ff{\uni}=0$, hence
  $\ff{X \cup Y} - \ff{X} \ge -\ff{Y \setminus X}$,
  giving ratio $\ge -1$, hence $c_f \le 2$.

  \emph{Lower bound} ($c_f \ge 2$): choose a singleton $Y=\{e\}$ with
  $\ff{Y}>0$, which exists by hypothesis, and set $X = \uni \setminus Y$.
  Then
  $(\ff{X \cup Y} - \ff{X})/\ff{Y} = (\ff{\uni} - \ff{X})/\ff{Y}
  = (\ff{\uni} - \ff{Y})/\ff{Y}$.
  Since $\ff{\uni} = \ff{\emptyset} = 0$ (by symmetry and normalization),
  this equals $-1$, giving $c_f \ge 2$.
\end{proof}

\noindent\textbf{Restatement of Proposition~\ref{prop:cF-infty}.}
Let $f$ be normalized submodular with $\ff{e}>0$ for every singleton.
If $f$ takes a negative value, then the multilinear-extension curvature
$c_F$ is infinite.

\begin{proof}[Proof of Proposition~\ref{prop:cF-infty}]
  Let $\uni' \subseteq \uni$ be inclusion-minimal with $\ff{\uni'} < 0$;
  minimality gives $\ff{S} \ge 0$ for all $S \subsetneq \uni'$,
  and $|\uni'| \ge 2$ since $\ff{e} > 0$ for every singleton.
  Pick any $e' \in \uni'$ and define
  $\mathbf{y} = \mathbf{1}_{\uni'}$,\;
  $\mathbf{x}_\varepsilon = (1-\varepsilon)\,\mathbf{1}_{\uni' \setminus \{e'\}}$
  for $\varepsilon \in (0,1)$.
  Then
  $\mathbf{z}_\varepsilon
  \coloneqq \mathbf{y} - \mathbf{x}_\varepsilon \land \mathbf{y}
  = \varepsilon\,\mathbf{1}_{\uni' \setminus \{e'\}} + \mathbf{1}_{\{e'\}}$
  and
  $\mathbf{x}_\varepsilon \lor \mathbf{y} = \mathbf{1}_{\uni'}$.

  The denominator $D(\varepsilon) = F(\mathbf{z}_\varepsilon)$ is continuous
  with $D(0) = \ff{e'} > 0$ and $D(1) = \ff{\uni'} < 0$.
  Let $\varepsilon_0 = \inf\{\varepsilon \in (0,1) : D(\varepsilon) \le 0\}$;
  by continuity, $D(\varepsilon_0) = 0$ and
  $D(\varepsilon) > 0$ for all $\varepsilon < \varepsilon_0$.

  The numerator satisfies
  $F(\mathbf{x}_\varepsilon \lor \mathbf{y}) - F(\mathbf{x}_\varepsilon)
  = \ff{\uni'} - F\bigl((1{-}\varepsilon)\,\mathbf{1}_{\uni' \setminus \{e'\}}\bigr)
  \le \ff{\uni'} < 0$
  for all~$\varepsilon$,
  since $(1{-}\varepsilon)\,\mathbf{1}_{\uni' \setminus \{e'\}}$ samples
  only subsets of $\uni' \setminus \{e'\}$, all having $f \ge 0$ by minimality.

  As $\varepsilon \to \varepsilon_0^-$, the denominator $D(\varepsilon) \to 0^+$
  while the numerator stays at most $\ff{\uni'} < 0$,
  so the ratio diverges to $-\infty$ and $c_F = \infty$.
\end{proof}

\noindent\textbf{Restatement of Theorem~\ref{thm:nn-characterization}.}
Let $f$ be normalized submodular with positive singleton values.  If
$f$ is strictly positive on every nonempty set, then $c_F=c_f$; if
$f$ takes a negative value, then $c_F=\infty$.

\begin{proof}[Proof of Theorem~\ref{thm:nn-characterization}]
  Part~(b) is Proposition~\ref{prop:cF-infty}.

  For part~(a), we show $c_F \le c_f$
  (equality then follows from Proposition~\ref{prop:c-leq-cF}).
  For $\mathbf{x}, \mathbf{y} \in [0,1]^\uni$, construct a coupling
  $(R_1, R_2) \sim \mathcal{D}(\mathbf{x}, \mathbf{y})$ where
  each element $i$ satisfies:
  $\Pr[i \in R_1] = x_i$, \;
  $\Pr[i \in R_2 \mid i \in R_1] = 0$, \;
  $\Pr[i \in R_2 \mid i \notin R_1] = (y_i - \min(x_i, y_i))/(1 - x_i)$.
  When $x_i=1$, the event $i\notin R_1$ has probability zero; define the
  last conditional probability arbitrarily, say as~$0$.

  Then $R_1 \sim \mathcal{R}(\mathbf{x})$,
  $R_2 \sim \mathcal{R}(\mathbf{y} - \mathbf{x} \land \mathbf{y})$,
  $R_1 \cup R_2 \sim \mathcal{R}(\mathbf{x} \lor \mathbf{y})$,
  and $R_1 \cap R_2 = \emptyset$.
  When $R_2 \ne \emptyset$, strict positivity gives $\ff{R_2} > 0$,
  so the curvature inequality
  $\ff{R_1 \cup R_2} - \ff{R_1} \ge (1-c_f)\,\ff{R_2}$ applies by
  Definition~\ref{def:curv}.
  When $R_2 = \emptyset$, both sides equal zero.
  Thus the inequality holds for all realizations of $(R_1, R_2)$.
  Taking expectations:
  $F(\mathbf{x} \lor \mathbf{y}) - F(\mathbf{x}) \ge (1-c_f)\,F(\mathbf{y} - \mathbf{x} \land \mathbf{y})$,
  giving $c_F \le c_f$.
\end{proof}

\begin{proposition}[OPT-aware decomposable curvature bound]\label{prop:greedy-curv-decomp}
For $f=g-\ell$ with $g$ monotone submodular of CC curvature $\alpha_g$
and $\ell$ non-negative modular, let
$A_0,A_1,\dots$ be the active-set trajectory of greedy with pruning and
$\mathcal O$ the set of optimal cardinality-feasible solutions.  If
\[
  r=\max_{O^*\in\mathcal O}\max_{i:\ff{A_i\setminus O^*}>0}
    \frac{\ell(A_i\setminus O^*)}{g(A_i\setminus O^*)},
\]
then \(c_g\le \alpha_g/(1-r)\).
\end{proposition}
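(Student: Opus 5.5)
The plan is to follow the proof of Proposition~\ref{prop:opt-free} almost verbatim, replacing the elementwise removal-marginal bound by the set-level ratio $r$ given directly in the hypothesis. The set-level ratio is exactly what that proof derived as an intermediate step, so the argument should be shorter.

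First, fix $O^*\in\mathcal O$ and a trajectory index $i$ with $\ff{A_i\setminus O^*}>0$; these are exactly the admissible pairs in Definition~\ref{def:greedy-curv}. Write $T=A_i\setminus O^*$. Since $f(T)=g(T)-\ell(T)>0$ and $\ell\ge0$, we have $g(T)>0$, so $\tau\triangleq\ell(T)/g(T)$ is well defined. It satisfies $0\le\tau<1$, and by definition of $r$ also $\tau\le r$. Taking the maximum over admissible pairs gives $r<1$ (finitely many pairs), so the bound $\alpha_g/(1-r)$ is finite. If there are no admissible pairs, the minimum in Definition~\ref{def:greedy-curv} is vacuous and I would adopt the same convention as before (take $r=0$, and the claimed bound holds trivially).

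Second, I would invoke the CC-curvature inequality~\eqref{eq:optfree-r-gain} for $g$: $g(O^*\cup A_i)-g(O^*)\ge(1-\alpha_g)g(T)$. This comes from telescoping over $T$ added on top of $O^*$, comparing each marginal to $\Delta_g(e\mid\uni\setminus e)\ge(1-\alpha_g)g(\{e\})\ge(1-\alpha_g)\Delta_g(e\mid T_{<q})$. Since $O^*\cup A_i=O^*\sqcup T$ and $\ell$ is modular, $f(O^*\cup A_i)-f(O^*)=g(O^*\cup A_i)-g(O^*)-\ell(T)$. Dividing by $f(T)=g(T)(1-\tau)$ gives the ratio lower bound $\bigl((1-\alpha_g)-\tau\bigr)/(1-\tau)=1-\alpha_g/(1-\tau)$.

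Third, observe that $1-\alpha_g/(1-\tau)$ is nonincreasing in $\tau\in[0,1)$, so it is at least $1-\alpha_g/(1-r)$. Taking the minimum over all admissible $(O^*,i)$ then yields $c_g\le\alpha_g/(1-r)$.

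The only delicate points are the positivity of $g(T)$ and the fact that $r<1$, both of which come from the admissibility condition $f(T)>0$. I do not expect a genuine obstacle; unlike Proposition~\ref{prop:opt-free}, no pruning invariant is needed here, because the set-level ratio is assumed directly.
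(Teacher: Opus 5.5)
Your proposal is correct and follows the paper's own proof: fix an admissible pair $(O^*,i)$, lower-bound the numerator using the CC-curvature inequality for $g$ together with modularity of $\ell$, and write $f(A_i\setminus O^*)=(1-r_i)\,g(A_i\setminus O^*)$. Then use $r_i\le r$ and the monotonicity of $t\mapsto \alpha_g/(1-t)$ before taking the double minimum. Your explicit check that $r<1$ (finitely many admissible pairs, each with ratio below $1$) is a small point the paper leaves implicit.
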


\begin{proof}[Proof of Proposition~\ref{prop:greedy-curv-decomp}]
  Fix any $O^* \in \mathcal{O}$ and any trajectory step~$i$ with
  $\ff{A_i\setminus O^*}>0$.
  Apply the decomposition $f = g - \ell$ with $X = O^*$, $Y = A_i$:
  \begin{align*}
    \ff{O^* \cup A_i} - \ff{O^*}
    &= [g(O^* \cup A_i) - g(O^*)] - \ell(A_i \setminus O^*) \\
    &\ge (1 - \alpha_g)\,g(A_i \setminus O^*) - \ell(A_i \setminus O^*),
  \end{align*}
  where the inequality uses the CC curvature bound on~$g$.
  The denominator $g(A_i\setminus O^*)$ is positive on every qualifying
  step, because
  $g(A_i\setminus O^*)=\ff{A_i\setminus O^*}+\ell(A_i\setminus O^*)$
  and both terms on the right are non-negative with
  $\ff{A_i\setminus O^*}>0$.
  Writing $r_i = \ell(A_i \setminus O^*)/g(A_i \setminus O^*)$ and
  $\ff{A_i \setminus O^*} = (1 - r_i)\,g(A_i \setminus O^*)$:
  \[
    \frac{\ff{O^* \cup A_i} - \ff{O^*}}{\ff{A_i \setminus O^*}}
    \;\ge\; \frac{(1-\alpha_g) - r_i}{1 - r_i}
    \;=\; 1 - \frac{\alpha_g}{1 - r_i}
    \;\ge\; 1 - \frac{\alpha_g}{1 - r},
  \]
  where the last inequality uses $r_i \le r$ by definition of~$r$
  (which maximizes over \emph{all} $O^* \in \mathcal{O}$ and all
  trajectory steps).
  Since this holds for every $O^*$ and every qualifying step~$i$,
  taking the double minimum in Definition~\ref{def:greedy-curv}:
  \[
    c_g \;=\; 1 - \min_{O^* \in \mathcal{O}}\;\min_{i:\,\ff{A_i\setminus O^*}>0}
    \frac{\ff{O^*\cup A_i}-\ff{O^*}}{\ff{A_i\setminus O^*}}
    \;\le\; \frac{\alpha_g}{1 - r}.\qedhere
  \]
\end{proof}

\noindent\textbf{Restatement of Proposition~\ref{prop:movie-curv}.}
For the GCLin objective
\[
  \ff{S}=R(S)-\lambda D(S),
  \qquad
  R(S)=\sum_{i\in\uni}\sum_{j\in S}s_{i,j},
  \qquad
  D(S)=\sum_{\substack{i,j\in S\\i\neq j}}s_{i,j},
\]
with symmetric non-negative similarities $s_{i,j}$, uniform query weights ($w_i=1$),
and $0\le\lambda\le1$, the curvature satisfies $c_f\le2\lambda$.

\begin{proof}[Proof of Proposition~\ref{prop:movie-curv}]
  Let $B=T\setminus S$, and consider only pairs with $\ff{B}>0$, as in
  Definition~\ref{def:curv}.  By symmetry of the similarities,
  for any $S,T\subseteq\uni$:
  \begin{align*}
    \ff{S \cup T} - \ff{S}
    &= \ff{B} - 2\lambda \sum_{i \in S}\sum_{j \in B} s_{i,j} \\
    &\ge \ff{B} - 2\lambda \sum_{i \in \uni \setminus B}\sum_{j \in B} s_{i,j}.
  \end{align*}
  The remaining cross-similarity term is controlled by the standalone
  GCLin value of~$B$ when $\lambda\le1$:
  \[
    \sum_{i\in\uni\setminus B}\sum_{j\in B}s_{i,j}
    =
    \sum_{i\in\uni}\sum_{j\in B}s_{i,j}
    -
    \sum_{i\in B}\sum_{j\in B}s_{i,j}
    \le
    \sum_{i\in\uni}\sum_{j\in B}s_{i,j}
    -
    \lambda\sum_{i\in B}\sum_{j\in B}s_{i,j}
    \le
    \ff{B}.
  \]
  Hence
  \[
    \ff{S\cup T}-\ff{S}\ge (1-2\lambda)\ff{B}
    =(1-2\lambda)\ff{T\setminus S}.
  \]
  Therefore $c_f \le 2\lambda$.

  The monotonicity claim used in Section~\ref{sec:gclin-diversity} for
  $\lambda\le1/2$ follows from the corresponding singleton marginal:
  for $e\notin S$,
  \[
    \ff{S\cup\{e\}}-\ff{S}
    =
    s_{e,e}
    +(1-2\lambda)\sum_{i\in S}s_{i,e}
    +\sum_{i\in\uni\setminus(S\cup\{e\})}s_{i,e}
    \ge0.
  \]
\end{proof}

\begin{remark}
  The restriction $\lambda\le1$ is not a proof artifact.  For
  $\lambda>1$, the positive standalone value $\ff{B}=
  R(B)-\lambda D(B)$ can be arbitrarily small relative to the
  cross-similarity $\sum_{i\in S,j\in B}s_{i,j}$, so the argument above
  gives no uniform $2\lambda$ curvature bound without an additional
  cross-redundancy condition.  The $\lambda=1.5$ experiments in
  Section~\ref{sec:gclin-diversity} are therefore reported only as
  empirical trajectory diagnostics.
\end{remark}

\noindent\textbf{Restatement of Proposition~\ref{prop:qp-curv}.}
For the DR-submodular quadratic objective
\[
  F(\mathbf{x})=\frac12\mathbf{x}^{\top}\mathbf H\mathbf x
  -\beta\,\mathbf u^{\top}\mathbf H\mathbf x,
\]
where $\mathbf H$ is symmetric entry-wise non-positive, $\mathbf x\le\mathbf u$
on the polytope, and $\beta\in(1/2,1)$, the instance is normalized,
non-negative, DR-submodular, and has $c_F\le2/(2\beta-1)$.  It is
non-monotone whenever $(\mathbf H\mathbf u)_j<0$ for some feasible
coordinate~$j$.

\begin{proof}[Proof of Proposition~\ref{prop:qp-curv}]
  \emph{Normalization and DR-submodularity.}
  We have $F(\mathbf0)=0$.  The Hessian of $F$ is exactly
  $\mathbf H$, whose off-diagonal entries are non-positive by assumption,
  so every first partial derivative is coordinate-wise non-increasing in
  the other coordinates.  This is precisely DR-submodularity for smooth
  functions.

  \emph{Non-monotonicity in nondegenerate instances.}
  The gradient is $\nabla F(\mathbf x)=\mathbf H\mathbf x-\beta\mathbf H\mathbf u$.
  At $\mathbf x=\mathbf u$,
  $\nabla F(\mathbf u)=(1-\beta)\mathbf H\mathbf u$.  Thus, if
  $(\mathbf H\mathbf u)_j<0$ for some feasible coordinate~$j$, the
  marginal in that coordinate is negative at~$\mathbf u$, and the
  objective is not monotone on the feasible box.

  \emph{Non-negativity.}
  For $\mathbf{x} \in \polytope$, since $\mathbf{x} \le \mathbf{u}$ and $\mathbf{H}$
  is entry-wise non-positive, we have
  $\mathbf{x}^\top \mathbf{H}\mathbf{x} \ge \mathbf{u}^\top \mathbf{H}\mathbf{x}$
  and $\mathbf{u}^\top \mathbf{H}\mathbf{x} \le 0$.
  Therefore
  $F(\mathbf{x}) = \tfrac{1}{2}\mathbf{x}^\top\mathbf{H}\mathbf{x}
  - \beta\,\mathbf{u}^\top\mathbf{H}\mathbf{x}
  \ge (\tfrac{1}{2} - \beta)\,\mathbf{u}^\top\mathbf{H}\mathbf{x} \ge 0$.

  \emph{Curvature.}
  For $\mathbf{x}, \mathbf{y} \in \polytope$, let $\mathbf{z} = \mathbf{y} - \mathbf{x} \land \mathbf{y}$.
  Then $F(\mathbf{z}) = \tfrac{1}{2}\mathbf{z}^\top\mathbf{H}\mathbf{z}
  - \beta\,\mathbf{u}^\top\mathbf{H}\mathbf{z}
  \ge (\tfrac{1}{2} - \beta)\,\mathbf{u}^\top\mathbf{H}\mathbf{z}
  \ge (\tfrac{1}{2} - \beta)\,\mathbf{x}^\top\mathbf{H}\mathbf{z}$,
  where the last step uses $\mathbf{x} \le \mathbf{u}$ and $\mathbf{H}\mathbf{z} \le \mathbf{0}$.
  Also $F(\mathbf{x} \lor \mathbf{y}) - F(\mathbf{x})
  = F(\mathbf{z}) + \mathbf{z}^\top\mathbf{H}\mathbf{x}
  \ge F(\mathbf{z}) + \tfrac{1}{\tfrac{1}{2}-\beta}\,F(\mathbf{z})
  = (1 + \tfrac{1}{\tfrac{1}{2}-\beta})\,F(\mathbf{z})$.
  Since $\tfrac{1}{2} - \beta < 0$, this gives
  $F(\mathbf{x} \lor \mathbf{y}) - F(\mathbf{x})
  \ge (1 - \tfrac{2}{2\beta - 1})\,F(\mathbf{z})$,
  hence $c_F \le \tfrac{2}{2\beta - 1}$.
\end{proof}

\end{document}

%% file: util.tex
\usepackage[utf8]{inputenc}
\usepackage[T1]{fontenc}
\usepackage{url}
\usepackage{booktabs}
\usepackage{amsfonts}
\usepackage{nicefrac}
\usepackage{microtype}
\pdfoutput=1

\usepackage{amsmath}
 
\usepackage{amsthm}
\usepackage[ruled,linesnumbered,noend]{algorithm2e}
\usepackage{graphicx}
\usepackage{subcaption}
\usepackage{wrapfig}
\usepackage{tikz}
\usetikzlibrary{arrows}
\usepackage{mathrsfs}
\usepackage{amssymb}
\usepackage{changepage}
\usepackage{multirow}
\usepackage{xspace}

\usepackage{xcolor}
\usepackage{hyperref}
\hypersetup{
    colorlinks,
    linkcolor={red!80!black},
    citecolor={blue!60!black},
    urlcolor={blue!80!black}
}

\newtheorem{theorem}{Theorem}
\newtheorem*{theorem*}{Theorem}
\newtheorem{lemma}{Lemma}
\newtheorem{definition}{Definition}

\newtheorem{proposition}{Proposition}

\newtheorem{assumption}{Assumption}
\theoremstyle{definition}
\newtheorem{remark}{Remark}

\newtheorem{example}{Example}

\newcommand{\ex}[1]{ \mathbb{E} \left[ #1 \right] }
\newcommand{\reals}{\mathbb{R}}
\newcommand{\ff}[1]{ f \left( #1 \right) }
\renewcommand{\vec}[1]{ \mathbf{ #1 } }
\newcommand{\marge}[2]{ \Delta \left( #1 | #2 \right) }
\DeclareMathOperator*{\argmax}{arg\,max}

\DeclareMathOperator{\diag}{diag}
\DeclareMathOperator{\tr}{tr}

\newcommand{\uni}{\mathcal{N}}
\newcommand{\greedy}{\textsc{Greedy}\xspace}

\newcommand{\mcg}{\textsc{MeasuredContinuousGreedy}\xspace}
\newcommand{\polytope}{\mathcal{P}}
\providecommand{\dmcgp}{\textsc{DMCG-P}\xspace}
\providecommand{\wdmcgp}{\textsc{wDMCG-P}\xspace}
\providecommand{\coloneqq}{\mathrel{:=}}
\providecommand{\conv}{\operatorname{conv}}
\providecommand{\supp}{\operatorname{supp}}
\providecommand{\bR}{\mathbb{R}}
\providecommand{\bE}{\mathbb{E}}
\providecommand{\bN}{\mathbb{N}}

\providecommand{\cI}{\mathcal{I}}
\providecommand{\cO}{\mathcal{O}}
\providecommand{\cN}{\mathcal{N}}
\providecommand{\vzero}{\mathbf{0}}
\providecommand{\vone}{\mathbf{1}}
\providecommand{\vx}{\mathbf{x}}
\providecommand{\vy}{\mathbf{y}}
\providecommand{\vz}{\mathbf{z}}
\providecommand{\vw}{\mathbf{w}}
\providecommand{\vd}{\mathbf{d}}
\providecommand{\ve}{\mathbf{e}}
\providecommand{\one}{\mathbf{1}}

%% file: cdgCurvature.tex
\section{DMCG-P: Full Proofs}
\label{apx:dmcgp}

This appendix contains the full proofs for
Section~\ref{sec:dmcgp}.  The headline result is
Theorem~\ref{thm:dmcgp-general}: DMCG-P achieves a
$(1-e^{-\bar c_g^F})/\bar c_g^F$ multiplicative guarantee for
\emph{arbitrary} submodular~$f$ with finite trajectory curvature~$c_g^F$,
including functions that take negative values.
For decomposable $f = g - \ell$,
Proposition~\ref{prop:dmcgp-decomp-cert} provides a closed-form
certificate $c_g^F \le \alpha_g/(1-\hat r_F)$ from the pruned
trajectory, making the guarantee concrete without
access to OPT.  This is the fractional
analogue of the discrete decomposable certificate used for the
applications.

We first recall the setup and algorithm reference
(\S\ref{apx:dmcgp-alg}), then establish the structural lemmas needed
for the analysis (\S\ref{apx:dmcgp-lemmas}),
and prove the headline DMCG-P guarantee
$(1-e^{-\bar c_g^F})/\bar c_g^F \cdot f(O^*)$ for \emph{arbitrary} submodular~$f$
with finite trajectory curvature $c_g^F$ via the \emph{fractional
greedy curvature} $c_g^F$ tailored to the
DMCG-P trajectory (\S\ref{apx:dmcgp-general}).  No monotonicity or
non-negativity of~$f$ is required: the analysis is a direct
line-by-line lift of the discrete greedy proof
(Theorem~\ref{thm:greedy}) to the multilinear setting, and each step
uses only submodularity or the pruning slope invariant
(Lemma~\ref{lem:dmcgp-slope}).
Section~\ref{apx:dmcgp-fractional} then establishes the
decomposable-structure \emph{certificate} $c_g^F \le
\alpha_g/(1 - \hat r_F)$, making
\S\ref{apx:dmcgp-general} concrete on negative-valued~$f$, and
Section~\ref{apx:dmcgp-nonneg} handles the complementary
non-negative non-monotone regime via damped weighted DMCG-P.
Throughout this appendix, ratios of the form $(1-e^{-c})/c$ and
$(1-(1-c/T)^T)/c$ are interpreted by continuous extension at $c=0$
when the small-curvature monotone analysis is invoked.

\subsection{Setup and Algorithm Reference}\label{apx:dmcgp-alg}

Throughout \S\S\ref{apx:dmcgp-alg}--\ref{apx:dmcgp-general} we work with
a downward-closed family of feasible sets $\cI \subseteq 2^\cN$ and its
integral hull
$P = \conv\{\vone_B : B \in \cI\}\subseteq[0,1]^\cN$.  We assume exact
linear optimization over $\cI$ (equivalently over $P$) is available.
This includes matroid independent-set polytopes and other explicitly
integral feasible-set hulls, but not an arbitrary fractional
downward-closed relaxation unless it is represented as such a hull.  We
write $b=\max_{B\in\cI}|B|$.
For $f: 2^\cN \to \bR$ submodular, the \emph{multilinear extension}
$F: [0,1]^\cN \to \bR$ is
\[
  F(\vx) \;=\; \bE_{R \sim \vx}[f(R)]
  \;=\; \sum_{R \subseteq \cN} f(R) \prod_{i \in R} x_i \prod_{j \notin R}(1 - x_j),
\]
where $R$ includes each $i$ independently with probability $x_i$.
For a coordinate $j \in \cN$ let
$\partial_j F(\vx) = F(\vx \mid x_j = 1) - F(\vx \mid x_j = 0)$;
by multilinearity, $\partial_j F$ is independent of $x_j$.

Algorithm~\ref{alg:dmcgp-main} gives the DMCG-P procedure used
throughout this appendix.
Three details of Algorithm~\ref{alg:dmcgp-main} are used repeatedly.
\begin{enumerate}
  \item[(i)] The selection rule on Line~\ref{line:dmcgp-select} uses the
    \emph{unweighted} gradient: it maximises
    $\langle \nabla F(\tilde S_i),\,\vone_B\rangle$ over $B\in\cI$, not the
    Measured-Continuous-Greedy weighted form with multiplier
    $(1-(\tilde S_i)_j)$ on each coordinate.  Since
    $P=\conv\{\vone_B:B\in\cI\}$, this is equivalent to linear
    optimization over $P$, and the selected direction is the integral
    vector $v_i=\vone_{B_i}$ used in the analysis.
    The analysis in \S\ref{apx:dmcgp-general} shows that the unweighted
    form suffices: Lemma~\ref{lem:dmcgp-slope} (non-negative surviving
    slopes) is exactly what lets one turn a weighted swap bound into an
    unweighted one.
  \item[(ii)] After the final iteration, $\tilde S_T$ lies in the
    downward-closed polytope $P$ but need not itself be integral;
    pruning may have reduced total mass below $b$.
    If an integral output is required, one may apply a rounding method
    for points in integral hulls, such as swap rounding or pipage rounding,
    to $\tilde S_T \in P$.
  \item[(iii)] The pruning loop is written with a nonstrict inequality
    $\partial_j F \le 0$ on Line~\ref{line:dmcgp-prune}.  As in
    Section~\ref{sec:greedy}, this ensures
    that after pruning every surviving coordinate has strictly positive
    multilinear slope (Lemma~\ref{lem:dmcgp-slope} below), which is what
    drives the multilinear curvature argument.
\end{enumerate}

\subsection{Structural Lemmas}\label{apx:dmcgp-lemmas}

We collect here the facts about DMCG-P that underlie the analysis.
All lemmas hold for any submodular $f$ (monotone or not), unless noted.

\begin{lemma}[Feasibility]\label{lem:dmcgp-feas}
  For every $i \in \{0, 1, \dots, T\}$, the iterate $\tilde S_i$ lies in
  the downward-closed polytope $P$ and satisfies
  $\tilde S_i \in [0,1]^\cN$.  In particular, $(\tilde S_i)_j \le i/T$
  for every coordinate $j$.
\end{lemma}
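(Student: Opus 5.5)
The plan is an induction on $i$ that tracks two facts: $\tilde S_i \in P$, and the coordinate bound $(\tilde S_i)_j \le i/T$. The base case is immediate, since $\tilde S_0 = \vzero = \vone_\emptyset$ and $\emptyset \in \cI$ because $\cI$ is downward-closed.

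For the inductive step, I will maintain a stronger representation. I will claim that $\tilde S_i$ can be written as
\[
  \tilde S_i = \tfrac1T \sum_{r<i} \vone_{B'_r},
\]
where each $B'_r \subseteq B_r$, and hence each $B'_r \in \cI$ by downward-closedness. The update step produces
\[
  S_{i+1} = \tilde S_i + \tfrac1T \vone_{B_i} = \tfrac1T \Bigl(\sum_{r<i} \vone_{B'_r} + \vone_{B_i}\Bigr).
\]
Pruning then sets some coordinates $j$ to zero. Zeroing coordinate $j$ amounts to deleting $j$ from every $B'_r$ and from $B_i$, and each resulting set is still in $\cI$. So after pruning,
\[
  \tilde S_{i+1} = \tfrac1T \sum_{r\le i} \vone_{B''_r},
\]
with every $B''_r \in \cI$ and $B''_r \subseteq B_r$. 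Pruning only ever writes the value $0$, never a partial value, so this representation is exact.

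Both claims follow from this representation.

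\textbf{Coordinate bound.} The coordinate $(\tilde S_{i+1})_j$ equals $\tfrac1T$ times the number of sets $B''_r$ containing $j$. That count is at most $i+1$, which gives $(\tilde S_{i+1})_j \le (i+1)/T \le 1$. Since all entries are also non-negative, $\tilde S_{i+1} \in [0,1]^\cN$.

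\textbf{Membership in $P$.} The point $\tilde S_{i+1}$ equals $\tfrac{i+1}{T}$ times a convex combination of the vectors $\vone_{B''_r}$. Write $\vx$ for that convex combination, so $\vx \in P$. Then
\[
  \tilde S_{i+1} = \tfrac{i+1}{T}\,\vx + \bigl(1-\tfrac{i+1}{T}\bigr)\vone_\emptyset,
\]
which is again a convex combination of points of $P$, because $\tfrac{i+1}{T} \le 1$ and $\vone_\emptyset \in P$. Hence $\tilde S_{i+1} \in P$.

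The only subtle point is that coordinatewise zeroing has to preserve membership in $P$. For a general downward-closed polytope this property also holds, but it is cleanest to argue it through the explicit decomposition above, which is why I carry the stronger inductive representation rather than working with $P$ directly.
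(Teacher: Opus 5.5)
Your proof is correct, and it takes a slightly different route from the paper's.

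The paper tracks only a coordinatewise upper bound, $\tilde S_i \le \frac1T\sum_{\tau<i}\mathbf 1_{B_\tau}$, which survives pruning because pruning can only decrease coordinates. It then invokes down-monotonicity of the polytope $P$ twice:
\begin{itemize}
\item first to pass from the average $\frac{1}{i+1}\sum_{\tau\le i}\mathbf 1_{B_\tau}\in P$ to its rescaling by $\frac{i+1}{T}\le 1$;
\item then to pass from that dominating point down to $\tilde S_{i+1}$.
\end{itemize}

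You instead keep an exact identity $\tilde S_i=\frac1T\sum_{r<i}\mathbf 1_{B'_r}$ with $B'_r\subseteq B_r$. This identity is available precisely because the pruning loop writes the value $0$ rather than shrinking a coordinate partially. It lets you argue at the level of the set family $\mathcal I$, using only downward-closedness of $\mathcal I$, convexity of $P$, and $\mathbf 1_\emptyset\in P$.

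\emph{What your route buys: self-containment.} The paper uses without proof the fact that the convex hull of incidence vectors of a downward-closed family is a down-monotone polytope. Your argument never needs it; in effect, it proves the special case of that fact required here.

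\emph{What the paper's route buys: robustness.} The domination argument survives any post-processing that merely decreases coordinates, including partial decreases, and applies verbatim to an arbitrary down-monotone $P$. That is why the same template is reused for the damped wDMCG-P feasibility lemma. There the iterates have coordinates of the form $1-(1-\delta)^{m}$ and are no longer simple $\frac1k$-weighted sums of incidence vectors, so your exact bookkeeping would not transfer directly.
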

\begin{proof}
  We show $\tilde S_i \le \frac1T\sum_{\tau<i} v_\tau$
  coordinate-wise by induction.  The base case is $\tilde S_0=\vzero$.
  If the bound holds at step~$i$, then before pruning
  $S_{i+1}=\tilde S_i+\frac1T v_i \le \frac1T\sum_{\tau\le i}v_\tau$.
  Pruning only zeroes coordinates, so $\tilde S_{i+1}\le S_{i+1}$
  coordinate-wise, preserving the invariant.
  Since each $v_\tau\in P$ and $P$ is downward-closed, the average
  $\frac1T\sum_{\tau\le i}v_\tau = \frac{i+1}{T}\bigl(\frac{1}{i+1}\sum_{\tau\le i}v_\tau\bigr)\in P$,
  whence $\tilde S_{i+1}\in P$.
  The coordinate bound $(\tilde S_i)_j\le i/T$ follows because
  $\sum_{\tau<i}(v_\tau)_j\le i$ (each vertex is in $\{0,1\}^\cN$),
  and pruning can only decrease coordinates further.
\end{proof}

\begin{lemma}[Pruning is value-improving]\label{lem:dmcgp-prune-up}
  At every iteration, $F(\tilde S_{i+1}) \ge F(S_{i+1})$, with strict
  inequality if any coordinate is pruned.
\end{lemma}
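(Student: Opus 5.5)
The plan is to analyze the pruning loop of Algorithm~\ref{alg:dmcgp-main} one zeroing operation at a time and chain the resulting inequalities. The key identity comes from multilinearity of $F$. For any $\vx\in[0,1]^\cN$ and coordinate $j$, write $\vx^{(j\to0)}$ for $\vx$ with its $j$-th coordinate set to $0$. Since $F$ is affine in $x_j$ with slope $\partial_jF(\vx)$, and this slope does not depend on $x_j$,
\[
  F\bigl(\vx^{(j\to0)}\bigr)-F(\vx) \;=\; -\,x_j\,\partial_jF(\vx).
\]
I will use this identity for every prune.

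\emph{Step 1 (single prune).} Fix one execution of the while-loop body on the current vector $\vx$. The loop condition guarantees $x_j>0$ and $\partial_jF(\vx)\le0$. The identity then gives $F(\vx^{(j\to0)})-F(\vx) = x_j\,|\partial_jF(\vx)|\ge0$. The increase is strictly positive exactly when $\partial_jF(\vx)<0$, because $x_j>0$ is already guaranteed.

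\emph{Step 2 (chaining).} The loop runs finitely many times, since each pass zeroes a positive coordinate and a zeroed coordinate never becomes positive again inside the loop. So $\tilde S_{i+1}$ is obtained from $S_{i+1}$ by a finite sequence of such operations. Telescoping the nonnegative increments from Step~1 gives $F(\tilde S_{i+1})\ge F(S_{i+1})$. If at least one step has a strictly positive increment, the whole chain is strict, since every other term is nonnegative.

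\emph{Step 3 (strictness clause).} This is the delicate point and the expected main obstacle. By Step~2, the claimed strict inequality follows as soon as some pruned coordinate has a strictly negative slope at the moment it is zeroed. A prune triggered by an exact tie $\partial_jF(\vx)=0$ contributes zero change. I would therefore establish the strictness clause in the sense in which a prune actually moves $F$: any prune with $\partial_jF<0$ produces a strict gain $x_j|\partial_jF(\vx)|>0$. I would also record that zero-slope prunings are value-neutral, so they never decrease $F$.

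For the later analysis, the weak inequality together with the slope invariant of Lemma~\ref{lem:dmcgp-slope-main} is what gets used, so Steps~1--2 carry the load. Step~3 only sharpens the conclusion whenever a genuinely value-decreasing coordinate is removed.
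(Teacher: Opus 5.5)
Your proof is correct and is the same as the paper's: the paper also uses multilinearity to write each single prune as a change of $-x_j\,\partial_jF(\vx)\ge 0$ and then chains this over the passes of the while-loop, and your finiteness remark is a harmless addition. Your treatment of the strictness clause also matches what the paper actually proves, namely strictness only when some pruned coordinate has $\partial_jF<0$ at the moment it is zeroed. You are right to flag this, since a prune triggered by an exact zero slope leaves $F$ unchanged, so the literal ``strict if any coordinate is pruned'' needs that qualifier.
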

\begin{proof}
  Consider one pass of the pruning loop, and let $\vx$ be the current
  vector just before coordinate~$j$ is set to zero.  Multilinearity makes
  $F$ affine in coordinate~$j$, and $\partial_jF(\vx)$ is independent of
  $x_j$.  Therefore replacing $x_j>0$ by~$0$ changes the value by
  \[
    F(\vx-x_j\ve_j)-F(\vx)=-x_j\,\partial_jF(\vx).
  \]
  The pruning rule fires only when $\partial_jF(\vx)\le0$, so the displayed
  change is non-negative, and it is strict when the slope is strictly
  negative.  Applying this argument to each pass of the while-loop gives
  $F(\tilde S_{i+1})\ge F(S_{i+1})$.
\end{proof}

\begin{lemma}[Nonneg.\ slope after pruning]\label{lem:dmcgp-slope}
  At every iterate $\tilde S_i$ (for $i \ge 1$) and every $j \in \cN$,
  either $(\tilde S_i)_j = 0$ or $\partial_j F(\tilde S_i) > 0$.
\end{lemma}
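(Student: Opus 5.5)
The plan is to read the claim directly off the termination condition of the pruning while-loop in Algorithm~\ref{alg:dmcgp-main}. For $i\ge1$, the iterate $\tilde S_i$ is exactly the vector produced when the loop on Line~\ref{line:dmcgp-prune} exits during iteration $i-1$. That loop exits precisely when no coordinate $j$ satisfies both $(\tilde S_i)_j>0$ and $\partial_jF(\tilde S_i)\le 0$. Negating this conjunction gives the claim for every $j$: either $(\tilde S_i)_j=0$ or $\partial_jF(\tilde S_i)>0$. Coordinates are non-negative by Lemma~\ref{lem:dmcgp-feas}, so ``not positive'' means ``equal to zero.''

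The one point needing care is that the loop actually terminates, since otherwise there is no terminal state to which the negated condition applies. Each pass sets a strictly positive coordinate to $0$. Nothing inside the loop ever increases a coordinate, so the support of $\tilde S_{i}$ shrinks strictly with each pass. Hence the loop runs at most $n$ passes.

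A second subtlety is that pruning coordinate $j$ changes the slopes $\partial_\ell F$ of the other coordinates, because $\partial_{j\ell}F\neq0$ in general. As a result, a coordinate that was safe earlier in the loop may acquire a non-positive slope later. This causes no difficulty, because the while-condition is re-evaluated against the current vector on every pass. At exit, the condition therefore holds for the final vector $\tilde S_i$ and all of its coordinates simultaneously. For the same reason, the slope used in the claim is the slope at the returned iterate itself. This is well defined because $\partial_jF$ does not depend on $x_j$.

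The case $i=0$ is excluded, as the statement indicates: $\tilde S_0=\mathbf 0$ never passes through the pruning loop. Even so, it satisfies the claim trivially, since every coordinate is zero. I expect no real obstacle here. The only substantive step is the finite-termination argument above, and that follows from the strictly decreasing support.
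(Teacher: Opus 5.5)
Your proposal is correct and matches the paper's proof: both read the claim directly off the exit condition of the pruning while-loop on Line~\ref{line:dmcgp-prune}. Your finite-termination argument (each pass zeroes a positive coordinate, so the support strictly shrinks and there are at most $n$ passes) is a small addition that the paper leaves implicit.
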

\begin{proof}
  Direct from the pruning loop termination condition
  (Line~\ref{line:dmcgp-prune}): the loop exits exactly when no coordinate
  has both positive value and non-positive slope.
\end{proof}

Lemma~\ref{lem:dmcgp-slope} is the structural property that replaces the
pointwise curvature argument when $f$ is allowed to be negative.  In
ordinary MCG, $c_F$ can diverge because the definition of $c_F$ ranges
over \emph{all} $\vx \in [0,1]^\cN$, in particular over witnesses of
the form $\vx_\varepsilon = (1-\varepsilon)\vone_{\cN' \setminus \{e'\}}$ from
Proposition~\ref{prop:cF-infty}, which
concentrate mass on inclusion-minimal sets with $f < 0$.
Lemma~\ref{lem:dmcgp-slope} shows that DMCG-P's iterates never lie at such
witnesses: at each $\tilde S_i$, every active coordinate has positive
multilinear slope, so the iterate avoids the neighbourhood where
multilinear curvature blows up.

\begin{remark}[Slope signs after pruning]
  \label{rem:dmcgp-stable}
  A coordinate whose slope is non-positive at one iterate need not keep a
  non-positive slope forever.  Pruning may reduce
  $(\tilde S_i)_k$ for some $k \ne j$, which by submodularity can
  \emph{increase} $\partial_j F$ and flip its sign.  The useful invariant is
  the weaker one stated in Lemma~\ref{lem:dmcgp-slope}: after each pruning
  loop terminates, every coordinate with positive mass has positive slope.
  Within a stretch of iterations without pruning, slopes are monotone
  non-increasing, but the proof below uses only the post-pruning invariant.
\end{remark}

\subsection{General DMCG-P Guarantee}\label{apx:dmcgp-general}

We now prove the headline analytic result: unweighted DMCG-P
(Algorithm~\ref{alg:dmcgp-main}) on an \emph{arbitrary} submodular
objective $f$ over the integral feasible-set hull
$P=\conv\{\vone_B:B\in\cI\}$ matches the classical continuous-greedy guarantee
$(1 - e^{-\bar c_g^F})/\bar c_g^F \cdot f(O^*)$ in the $T \to \infty$ limit,
with $c_g^F$ a trajectory-restricted multilinear analogue of the
greedy curvature $c_g$ of Definition~\ref{def:greedy-curv}.  No
monotonicity or non-negativity of $f$ is assumed; the only
hypothesis is that the trajectory admits a finite curvature
certificate.  Lemma~\ref{lem:dmcgp-slope} supplies the pruning
invariant used in the descent proof, but it is not by itself a
global finiteness theorem for all possible denominators.
The proof is a direct line-by-line lift of the discrete argument for
Theorem~\ref{thm:greedy}: each discrete inequality---pruning, greedy
selection, submodularity, curvature, and the \emph{slope invariant}
on the active set (Lemma~\ref{lem:dmcgp-slope})---has a multilinear
analogue spelled out below, requiring only submodularity of~$f$ or
finite on-trajectory slope, never monotonicity of~$f$.  The
per-step recurrence closes in exactly the same form, modulo a
discretisation correction of size $O(1/T)$.  We state the finite-$T$
bound in the regime $\bar c_g^F\le T$ (equivalently $\delta\bar c_g^F\le 1$
for step size $\delta=1/T$), which is always achievable by choosing
$T$ large enough for any fixed finite trajectory curvature.
This is deliberately different from the conditional MCG comparison in
Appendix~\ref{apx:mcg}: DMCG-P does not assume a global coordinate-zeroing
monotonicity lemma, but enforces the local positive-slope condition needed
for Step~7 by pruning.

\noindent\textbf{Key inequalities.}
The analysis relies on three elementary properties of the multilinear
extension of a submodular $f$.  Let $\vx, \vy \in [0,1]^\cN$ and
$\vd \in \bR_{\ge 0}^\cN$ with $\vx, \vx+\vd \in [0,1]^\cN$.

\medskip
\noindent \emph{(P1) Concavity along non-negative directions.}
The map $t \mapsto F(\vx + t\vd)$ is concave on
$[0, \min_j \{(1-x_j)/d_j : d_j > 0\}]$~\cite[\S2.3]{calinescu2011maximizing}.

\medskip
\noindent \emph{(P2) Tangent bounds.}
By (P1), $F(\vx + \vd) - F(\vx) \le \langle \nabla F(\vx),\,\vd\rangle$
(upper tangent at the left endpoint), and equivalently
$F(\vx + \vd) - F(\vx) \ge \langle \nabla F(\vx + \vd),\,\vd\rangle$
(lower tangent at the right endpoint).

\medskip
\noindent \emph{(P3) Submodular union bound.}
$F(\vx \lor \vy) - F(\vx) \le \sum_{j}(1 - x_j)\,y_j \,\partial_j F(\vx).$
\emph{Proof.}  Telescope coordinate-wise from $\vx$ to $\vx \lor \vy$,
changing coordinate $j$ (for which $y_j > x_j$) from $x_j$ to
$y_j \lor x_j$; the per-coordinate gain is
$(y_j - x_j)_+ \,\partial_j F(\cdot)$ evaluated at the intermediate
point, bounded above by $(y_j - x_j)_+ \,\partial_j F(\vx)$ by
submodularity (cross-partials $\le 0$).  Summing and using
$(y_j - x_j)_+ \le (1 - x_j) y_j$ for $y_j \in [0,1]$ gives (P3).

\noindent\textbf{Fractional greedy curvature.}
The relevant multilinear curvature is not $c_F$ (which can be $+\infty$
by Proposition~\ref{prop:cF-infty})
but its trajectory-restricted analogue, mirroring the discrete greedy
curvature $c_g$ of Definition~\ref{def:greedy-curv}.

\begin{definition}[Fractional greedy curvature along a trajectory;
  restated from Definition~\ref{def:cgF}]
  \label{defn:fractional-curvature}
  Let $F$ be the multilinear extension of a submodular $f$ and let
  $\cO = \argmax_{B \in \cI} f(B)$ denote the set of optimal feasible
  sets.
  For the sequence of iterates $\tilde S_0, \dots, \tilde S_{T-1} \in
  P$ produced by $\dmcgp(F, P, T)$, the \emph{fractional greedy
  curvature} is
  \[
    c_g^F \;=\; 1 \;-\; \min_{O^* \in \cO}\;
      \min_{\substack{0 \le i < T \\ F(\tilde S_i - \tilde S_i \land \vone_{O^*}) > 0}}\;
      \frac{F(\tilde S_i \lor \vone_{O^*}) - F(\vone_{O^*})}
           {F(\tilde S_i - \tilde S_i \land \vone_{O^*})}.
  \]
\end{definition}

\begin{remark}[Finiteness on DMCG-P trajectories]\label{rem:cgF-finite}
  Definition~\ref{defn:fractional-curvature} restricts the minimisation
  to the $T$ trajectory points $\{\tilde S_i\}_{i=0}^{T-1}$, not the
  whole cube, and only positive-denominator pairs are admissible in the
  inner minimum.  Lemma~\ref{lem:dmcgp-slope} ensures that active
  coordinates have strictly positive multilinear slope after pruning,
  which is the structural reason the algorithm avoids the particular
  concentration witnesses used to prove $c_F = \infty$ in
  Proposition~\ref{prop:cF-infty}.  The theorem below therefore keeps
  $c_g^F<\infty$ as an explicit trajectory hypothesis.  In applications
  one proves this hypothesis, or upper-bounds $c_g^F$, using a certificate
  such as Proposition~\ref{prop:dmcgp-decomp-cert}.
\end{remark}

\noindent\textbf{Per-step descent lemma.}
\begin{lemma}[Per-step descent]\label{lem:dmcgp-descent}
  Let $f$ be submodular on $P$ (not necessarily monotone or
  non-negative), fix $O^* \in \cO$, and let $c_g^F \in [0, \infty)$
  be a finite fractional greedy curvature value from
  Definition~\ref{defn:fractional-curvature}. Put
  $\bar c_g^F=\max\{1,c_g^F\}$. Then for every
  $i \in \{0, \dots, T-1\}$,
  \begin{equation}\label{eq:dmcgp-descent}
    F(\tilde S_{i+1}) - F(\tilde S_i)
    \;\ge\; \tfrac{1}{T}\bigl[\,F(\vone_{O^*}) \;-\; \bar c_g^F\,F(\tilde S_i)\,\bigr]
         \;-\; \tfrac{1}{T^2}\,E_i,
  \end{equation}
  where the per-step discretisation error satisfies
  $0 \le E_i \le n(n-1)\,C_F$ and
  $\sum_{i=0}^{T-1}(1/T^2) E_i \le n(n-1) C_F / T$.
\end{lemma}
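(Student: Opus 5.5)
The plan is to mirror the discrete proof of Theorem~\ref{thm:greedy} one inequality at a time, writing $\vx=\tilde S_i$, $\delta=1/T$, $v_i=\vone_{B_i}$ and $\vz=\vx-\vx\land\vone_{O^*}$. I will prove the chain
\[
F(\tilde S_{i+1})-F(\vx)\;\ge\;F(S_{i+1})-F(\vx)\;\ge\;\delta\langle\nabla F(\vx),v_i\rangle-\delta^2E_i/1\;\ge\;\delta\bigl[F(\vone_{O^*})-\bar c_g^F F(\vx)\bigr]-\delta^2E_i .
\]
The first inequality is exactly Lemma~\ref{lem:dmcgp-prune-up}.

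\textbf{Discretization step.} For the second inequality I expand $t\mapsto F(\vx+t v_i)$ to second order on $[0,\delta]$. By multilinearity the diagonal second derivatives $\partial_{jj}F$ vanish. Every off-diagonal term is bounded by $C_F\,(v_i)_j(v_i)_\ell\le C_F$. Hence
\[
F(\vx+\delta v_i)-F(\vx)\;\ge\;\delta\langle\nabla F(\vx),v_i\rangle-\tfrac{\delta^2}{2}\sum_{j\ne\ell}|\partial_{j\ell}F|
\;\ge\;\delta\langle\nabla F(\vx),v_i\rangle-\delta^2 E_i ,
\]
where I define $E_i$ as the nonnegative remainder, so that $0\le E_i\le n(n-1)C_F$. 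Summing $\delta^2E_i$ over the $T$ steps gives the stated aggregate bound $n(n-1)C_F/T$.

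\textbf{Greedy selection and the union bound.} Since $O^*\in\cI$, the selection rule gives $\langle\nabla F(\vx),v_i\rangle\ge\sum_{j\in O^*}\partial_jF(\vx)$. I then compare this with (P3), which bounds $F(\vx\lor\vone_{O^*})-F(\vx)$ by $\sum_{j\in O^*}(1-x_j)\partial_jF(\vx)$. The two sums differ by $\sum_{j\in O^*}x_j\,\partial_jF(\vx)$. Every nonzero term there has $x_j>0$, and hence $\partial_jF(\vx)>0$ by Lemma~\ref{lem:dmcgp-slope}. For $i=0$ we have $\vx=\vzero$, so the difference is trivially zero. This yields $\langle\nabla F(\vx),v_i\rangle\ge F(\vx\lor\vone_{O^*})-F(\vx)$, which is the multilinear analogue of the ``greedy selection + submodularity'' lines of the discrete proof.

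\textbf{Curvature and local monotonicity.} First I need $F(\vz)\le F(\vx)$. Since $\vz\le\vx$, the lower tangent bound (P2) gives
\[
F(\vx)-F(\vz)\;\ge\;\langle\nabla F(\vx),\vx-\vz\rangle\;\ge\;0 ,
\]
because $\vx-\vz$ is supported on coordinates with $x_j>0$, and all of these have positive slope by Lemma~\ref{lem:dmcgp-slope}. Next, when $F(\vz)>0$, Definition~\ref{defn:fractional-curvature} (with $\bar c_g^F\ge c_g^F$) gives $F(\vx\lor\vone_{O^*})\ge F(\vone_{O^*})+(1-\bar c_g^F)F(\vz)$. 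Since $1-\bar c_g^F\le0$ and $F(\vz)\le F(\vx)$, this is at least $F(\vone_{O^*})+(1-\bar c_g^F)F(\vx)$. Subtracting $F(\vx)$ closes the bracket.

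\textbf{Main obstacle.} The step I expect to be hardest is the inadmissible case $F(\vz)\le0$, where the definition says nothing. In the discrete proof this case cannot occur except trivially, because every nonempty active subset has positive value. Fractionally, however, $F(\vz)$ could be nonpositive. My first attempt would be the following. Submodularity of $F$ (the cross-partials are nonpositive) gives $F(\vx\lor\vone_{O^*})+F(\vx\land\vone_{O^*})\le F(\vx)+F(\vone_{O^*})$. That inequality runs the wrong way, so instead I would apply (P3) and (P2) along the segment from $\vone_{O^*}$ to $\vx\lor\vone_{O^*}$. The aim is to show $F(\vx\lor\vone_{O^*})-F(\vone_{O^*})\ge(1-\bar c_g^F)\max\{F(\vz),0\}$, which would suffice because $F(\vx)\ge0$. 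Here $F(\vx)\ge0$ follows by integrating the positive active slopes from $\vzero$, using the same (P2) argument as for $F(\vz)\le F(\vx)$. If this direct bound fails, I would fall back on interpreting the positive-denominator convention so that the inequality is imposed vacuously, and record that as a hypothesis.
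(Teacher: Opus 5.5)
\paragraph{Verdict.} There is a genuine gap, at exactly the step you flag as the main obstacle, and it is fixable with tools you already use.

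\paragraph{What is fine.} Your second-order Taylor bound at the left endpoint is a valid replacement for the paper's right-endpoint tangent plus gradient substitution. Take $E_i$ to be the exact remainder; it is nonnegative by the upper tangent in (P2) and at most $\tfrac12 n(n-1)C_F$. Your (P2) lower-tangent proof of $F(\vz)\le F(\vx)$ is a valid and slicker substitute for the paper's sequential zeroing in Step~7. Your greedy-selection and union-bound step matches the paper's Steps~4--5.

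\paragraph{The gap.} You leave the case $F(\vz)\le 0$ with $\vz\neq\vzero$ unresolved, and neither remedy works.
\begin{itemize}
\item The fallback of treating the inequality as vacuous and recording a hypothesis changes the lemma. Its only hypothesis is that $c_g^F$, defined with the positive-denominator convention, is finite.
\item The ``direct bound'' would need $F(\vx\lor\vone_{O^*})\ge F(\vone_{O^*})$ whenever $F(\vz)\le0$. Nothing you have implies this. The pruning invariant controls slopes only at $\vx$, not at points whose $O^*$-coordinates have been raised to~$1$, where slopes can be negative.
\end{itemize}

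\paragraph{The missing idea.} The inadmissible case never occurs except when $\vz=\vzero$. Your remark that ``fractionally $F(\vz)$ could be nonpositive'' is false: the fractional analogue of ``every nonempty subset of the active set has positive value'' holds, by the same tangent argument you used for $F(\vx)\ge0$.
\begin{itemize}
\item Let $\vz\le\vx$ be nonzero. The lower tangent (P2) from $\vzero$, with $F(\vzero)=0$, gives $F(\vz)\ge\langle\nabla F(\vz),\vz\rangle$.
\item $\partial_jF$ does not depend on coordinate $j$, and cross-partials are nonpositive. Since $\vz\le\vx$, this gives $\partial_jF(\vz)\ge\partial_jF(\vx)$.
\item Every $j\in\supp(\vz)$ has $x_j=z_j>0$, so Lemma~\ref{lem:dmcgp-slope} gives $\partial_jF(\vx)>0$. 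For $i=0$ one has $\vx=\vzero$, hence $\vz=\vzero$, so the lemma is not needed there.
\item Therefore $F(\vz)\ge\langle\nabla F(\vx),\vz\rangle>0$, and the pair $(O^*,\tilde S_i)$ is admissible in Definition~\ref{defn:fractional-curvature}.
\end{itemize}
This is exactly the paper's Step~6, which integrates along the ray $t\vw_i$.

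\paragraph{The remaining case.} If $\vz=\vzero$, then $\vx$ is supported inside $O^*$, so $\vx\lor\vone_{O^*}=\vone_{O^*}$. The target inequality then reduces to $(\bar c_g^F-1)F(\vx)\ge0$, which your observation $F(\vx)\ge0$ supplies. With these two cases settled, the rest of your chain closes the proof.
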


\begin{proof}
  We mirror the five-line discrete proof of
  Theorem~\ref{thm:greedy}, with each
  step justified by the multilinear analogue.  Throughout, write
  $v_i \in P$ for the LP maximiser selected on
  Line~\ref{line:dmcgp-select}; by construction
  $v_i = \vone_{B_i}$ for some feasible set $B_i \in \cI$, and we use
  this integral representation in Steps~2--3.

  \smallskip
  \noindent \emph{Step 1 (pruning).}
  By Lemma~\ref{lem:dmcgp-prune-up}, $F(\tilde S_{i+1}) \ge F(S_{i+1}) =
  F(\tilde S_i + (1/T)\,v_i)$.

  \smallskip
  \noindent \emph{Step 2 (concavity, right-endpoint tangent).}
  By (P2) applied with $\vx = \tilde S_i$, $\vd = (1/T)\,v_i$
  (both endpoints in $[0,1]^\cN$ by Lemma~\ref{lem:dmcgp-feas}):
  \[
    F\bigl(\tilde S_i + \tfrac{1}{T}\,v_i\bigr) - F(\tilde S_i)
    \;\ge\; \tfrac{1}{T}\,\langle \nabla F(S_{i+1}),\,v_i\rangle.
  \]

  \smallskip
  \noindent \emph{Step 3 (left-to-right gradient substitution).}
  Define the per-step discretisation error
  \begin{equation}\label{eq:dmcgp-Ei}
    E_i \;\triangleq\; T\!\sum_{j \in B_i}
      \bigl[\partial_j F(\tilde S_i) - \partial_j F(S_{i+1})\bigr].
  \end{equation}
  Since $v_i = \vone_{B_i}$, this gives the \emph{exact} identity
  \begin{equation}\label{eq:dmcgp-cross-term}
    \langle \nabla F(S_{i+1}),\,v_i\rangle
    \;=\; \sum_{j \in B_i}\partial_j F(S_{i+1})
    \;=\; \sum_{j \in B_i}\partial_j F(\tilde S_i) \;-\; \tfrac{1}{T} E_i
    \;=\; \langle \nabla F(\tilde S_i),\,v_i\rangle \;-\; \tfrac{1}{T} E_i.
  \end{equation}
  Non-negativity $E_i \ge 0$ holds because submodularity
  ($\partial_{j\ell}F \le 0$ for $j \ne \ell$) makes each
  $\partial_j F$ non-increasing in every other coordinate, so
  $\partial_j F(S_{i+1}) \le \partial_j F(\tilde S_i)$ for every
  $j \in B_i$.  For the upper bound, write
  $S_{i+1} = \tilde S_i + (1/T)\vone_{B_i}$ and apply the
  mean-value theorem coordinate-by-coordinate:
  \[
    \partial_j F(\tilde S_i) - \partial_j F(S_{i+1})
    \;=\; -\sum_{\ell \in B_i \setminus j}
      \frac{1}{T}\,\partial_{j\ell}F(\xi_{j\ell})
    \;\le\; \frac{|B_i|-1}{T}\,C_F,
  \]
  where $\xi_{j\ell}$ lies on the segment $[\tilde S_i, S_{i+1}]$ and
  $C_F \triangleq \max_{j \ne \ell,\,\vx \in [0,1]^\cN}
  |\partial_{j\ell}F(\vx)|$ is finite for any set function on~$\cN$.
  Multiplying by $T$ and summing over $j \in B_i$:
  \[
    E_i \;\le\; |B_i|(|B_i|-1)\,C_F \;\le\; n(n-1)\,C_F.
  \]
  Since $|B_i| \le n$, the total discretisation error is
  $\sum_{i=0}^{T-1}(1/T^2)\,E_i \le n(n-1)\,C_F / T$.

  \smallskip
  \noindent \emph{Step 4 (linear selection, unweighted).}
  Since Line~\ref{line:dmcgp-select} picks $v_i$ to maximise
  $\langle \nabla F(\tilde S_i),\,x\rangle$ over all $x \in P$ and
  $\vone_{O^*} \in P$ (as $O^* \in \cO \subseteq \cI$),
  \[
    \langle \nabla F(\tilde S_i),\,v_i\rangle
    \;\ge\; \langle \nabla F(\tilde S_i),\,\vone_{O^*}\rangle
    \;=\; \sum_{j \in O^*}\partial_j F(\tilde S_i).
  \]
  This is a direct consequence of LP optimality and does not require
  matroid structure.

  \smallskip
  \noindent \emph{Step 5 (submodular union bound, with slope monotonicity).}
  By (P3) applied with $\vy = \vone_{O^*}$:
  \[
    F(\tilde S_i \lor \vone_{O^*}) - F(\tilde S_i)
    \;\le\; \sum_{j \in O^*}(1 - (\tilde S_i)_j)\,\partial_j F(\tilde S_i)
    \;\le\; \sum_{j \in O^*}\partial_j F(\tilde S_i),
  \]
  where the second inequality uses $(\tilde S_i)_j \in [0,1]$ together
  with Lemma~\ref{lem:dmcgp-slope}, which guarantees
  $\partial_j F(\tilde S_i) \ge 0$ whenever $(\tilde S_i)_j > 0$
  (and trivially whenever $(\tilde S_i)_j = 0$ then
  $1 - (\tilde S_i)_j = 1$).  Combining Steps~4 and~5:
  \begin{equation}\label{eq:dmcgp-swap-bound}
    \langle \nabla F(\tilde S_i),\,v_i\rangle
    \;\ge\; F(\tilde S_i \lor \vone_{O^*}) - F(\tilde S_i).
  \end{equation}

  \smallskip
  \noindent \emph{Step 6 (fractional curvature).}
  Let
  \[
    \vw_i \;\triangleq\; \tilde S_i-\tilde S_i\land \vone_{O^*}.
  \]
  If $\vw_i=0$, then
  $\tilde S_i\lor \vone_{O^*}=\vone_{O^*}$ and the displayed inequality
  below is an equality.  Otherwise, every coordinate in $\supp(\vw_i)$
  is active in $\tilde S_i$, so Lemma~\ref{lem:dmcgp-slope} gives
  $\partial_jF(\tilde S_i)>0$ for each $j\in\supp(\vw_i)$.  Since
  $t\vw_i\le \tilde S_i$ for $t\in[0,1]$, submodularity of the
  multilinear extension gives
  $\partial_jF(t\vw_i)\ge \partial_jF(\tilde S_i)>0$ on this support.
  With $F(\mathbf 0)=0$,
  \[
    F(\vw_i)
    = \int_0^1 \langle \nabla F(t\vw_i),\vw_i\rangle\,dt
    >0.
  \]
  Thus, in the nonzero case, the present pair appears in
  Definition~\ref{defn:fractional-curvature}.  Since
  $\bar c_g^F\ge c_g^F$, the curvature inequality is also valid with
  $\bar c_g^F$ in place of $c_g^F$; in the zero case, the same display
  holds trivially.  Hence
  \[
    F(\tilde S_i \lor \vone_{O^*}) - F(\vone_{O^*})
    \;\ge\; (1 - \bar c_g^F)\, F(\vw_i),
  \]
  so
  \[
    F(\tilde S_i \lor \vone_{O^*}) - F(\tilde S_i)
    \;\ge\; F(\vone_{O^*}) + (1 - \bar c_g^F)\,F(\tilde S_i - \tilde S_i \land \vone_{O^*}) - F(\tilde S_i).
  \]

  \smallskip
  \noindent \emph{Step 7 (zeroing the $O^*$-coordinates of $\tilde S_i$ only decreases $F$).}
  The vector $\tilde S_i - \tilde S_i \land \vone_{O^*}$ is obtained
  from $\tilde S_i$ by zeroing every coordinate $j \in O^*$.  Fix any
  order for this sequential zeroing.  When coordinate $j$ is reached, all
  other changes made so far have only decreased coordinates.  By
  submodularity of the multilinear extension, decreasing other coordinates
  can only increase the remaining slope $\partial_jF$; hence
  Lemma~\ref{lem:dmcgp-slope} implies that either the current value of
  coordinate $j$ is zero, or the current slope in direction~$j$ is still
  nonnegative.  Multilinearity in coordinate~$j$ therefore implies that
  zeroing $j$ changes $F$ by
  $-(\text{current }j\text{-mass})\,\partial_jF(\cdot)\le0$.  Iterating
  over $j \in O^*$ gives
  \[
    F(\tilde S_i - \tilde S_i \land \vone_{O^*}) \;\le\; F(\tilde S_i).
  \]
  Substituting this into the previous display and expanding
  $(1 - \bar c_g^F)$:
  \[
    F(\tilde S_i \lor \vone_{O^*}) - F(\tilde S_i)
    \;\ge\; F(\vone_{O^*}) \;-\; \bar c_g^F\,F(\tilde S_i).
  \]

  \smallskip
  \noindent \emph{Assembly.}
  Chaining Steps 1--3, 4--5 (as~\eqref{eq:dmcgp-swap-bound}), and 6--7:
  \begin{align*}
    F(\tilde S_{i+1}) - F(\tilde S_i)
    &\ge\; \tfrac{1}{T}\,\langle \nabla F(S_{i+1}),\,v_i\rangle
      &&\text{(Steps 1--2)}\\
    &\ge\; \tfrac{1}{T}\,\langle \nabla F(\tilde S_i),\,v_i\rangle \;-\; \tfrac{1}{T^2} E_i
      &&\text{(Step 3)}\\
    &\ge\; \tfrac{1}{T}\bigl[F(\tilde S_i \lor \vone_{O^*}) - F(\tilde S_i)\bigr] \;-\; \tfrac{1}{T^2} E_i
      &&\text{(Steps 4--5)}\\
    &\ge\; \tfrac{1}{T}\bigl[F(\vone_{O^*}) - \bar c_g^F\,F(\tilde S_i)\bigr] \;-\; \tfrac{1}{T^2} E_i
      &&\text{(Steps 6--7)}.
  \end{align*}
  This is~\eqref{eq:dmcgp-descent}.
  Finally, $\sum_{i=0}^{T-1}(1/T^2) E_i \le n(n-1)\,C_F/T$
  by the bound on $E_i$ from~\eqref{eq:dmcgp-Ei}; hence the aggregate
  discretisation error is $O(1/T)$ and vanishes as $T \to \infty$.
\end{proof}

\begin{remark}[On the discretisation error]\label{rem:discretisation}
  It is tempting to claim that the per-step equality
  \[
    F\bigl(\tilde S_i + \tfrac{1}{T}\,v_i\bigr) - F(\tilde S_i)
    \;=\; \tfrac{1}{T}\,\langle \nabla F(\tilde S_i),\,v_i\rangle
  \]
  holds exactly, ``because $F$ is multilinear.''  This is false in
  general: multilinearity makes $F$ linear in \emph{each coordinate
  separately}, but along a direction $v_i$ with multiple non-zero
  entries, $t \mapsto F(\vx + t\,v_i)$ is a polynomial of degree up to
  $|\supp(v_i)|$ in $t$, with quadratic coefficients
  $\partial_{j\ell}F \le 0$ (submodularity) and bounded higher-order
  cross-terms.
  The cleanest exact statement is (P2): the secant slope exceeds the
  right-endpoint gradient slope, so
  $F(\vx + t\,v) - F(\vx) \ge t\,\langle \nabla F(\vx + t\,v),\,v\rangle$.
  Replacing the right-endpoint gradient with the left-endpoint one costs
  $O(t) = O(1/T)$ per step via~\eqref{eq:dmcgp-cross-term}, which sums to
  $O(1/T)$ total---vanishing in the limit but not zero at finite $T$.
  The $T \to \infty$ bound is unaffected; the finite-$T$ bound carries
  this additive $O(1/T)$ penalty.
\end{remark}

\begin{theorem}[General DMCG-P guarantee]
  \label{thm:dmcgp-general}
  Let $\cI\subseteq2^\cN$ be downward-closed, let
  $P=\conv\{\vone_B:B\in\cI\}$, and assume exact linear optimization
  over $\cI$.  Let $f : 2^\cN \to \bR$ be submodular (not necessarily
  monotone or non-negative) and let $O^* \in \argmax_{B \in \cI} f(B)$.
  Let $n=|\cN|$, and let $c_g^F \in [0,\infty)$ be a finite fractional greedy curvature
  value for the trajectory of $\dmcgp(F,P,T)$ from
  Definition~\ref{defn:fractional-curvature}.
  Put $\bar c_g^F=\max\{1,c_g^F\}$. If $\bar c_g^F \le T$
  (equivalently $\delta\bar c_g^F\le 1$ for $\delta=1/T$),
  then the output $\tilde S_T$ satisfies
      \[
        F(\tilde S_T)
        \;\ge\; \frac{1 - (1 - \bar c_g^F/T)^T}{\bar c_g^F}\,f(O^*) \;-\; \frac{n(n-1)\,C_F}{T}
        \;\xrightarrow[T \to \infty]{}\; \frac{1 - e^{-\bar c_g^F}}{\bar c_g^F}\,f(O^*).
      \]
  The step count $T$ is independent of the cardinality or matroid-rank
  constraint; choosing $T$ large reduces the discretization error
  at the cost of $O(Tn)$ oracle calls.
  The sharper small-curvature formula is recovered in
  the monotone setting by the classical analysis.
\end{theorem}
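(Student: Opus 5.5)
The plan is to unroll the per-step descent inequality of Lemma~\ref{lem:dmcgp-descent} into a scalar linear recurrence and solve it exactly, as in the discrete proof of Theorem~\ref{thm:greedy}. Write $a_i \triangleq F(\tilde S_i)$, $\beta \triangleq 1-\bar c_g^F/T$ and $\mathrm{OPT}\triangleq f(O^*)=F(\vone_{O^*})$. Rearranging~\eqref{eq:dmcgp-descent} gives
\[
  a_{i+1} \;\ge\; \beta\, a_i \;+\; \tfrac{1}{T}\,\mathrm{OPT} \;-\; \tfrac{1}{T^2}E_i,
  \qquad a_0 = F(\vzero)=0 .
\]
The hypothesis $\bar c_g^F\le T$ gives $\beta\in[0,1)$ (strictly below $1$ since $\bar c_g^F\ge 1$). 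This non-negativity is what allows the lower bound on $a_i$ to be multiplied through at each step without reversing the inequality. This is exactly why the step-size condition $\delta\bar c_g^F\le1$ is needed.

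Next I would induct to obtain
\[
  a_T \;\ge\; \tfrac{1}{T}\mathrm{OPT}\sum_{r=0}^{T-1}\beta^r \;-\; \tfrac{1}{T^2}\sum_{i=0}^{T-1}\beta^{T-1-i}E_i .
\]
The geometric sum equals $(1-\beta^T)/(1-\beta)=T(1-(1-\bar c_g^F/T)^T)/\bar c_g^F$, so the leading term is exactly the claimed $\frac{1-(1-\bar c_g^F/T)^T}{\bar c_g^F}\mathrm{OPT}$. For the error term, $0\le\beta^{T-1-i}\le1$ and $E_i\ge0$ (both from Lemma~\ref{lem:dmcgp-descent}), so the weighted error is at most $\sum_i E_i/T^2\le n(n-1)C_F/T$. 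Note that no sign assumption on $\mathrm{OPT}$ is needed here, because the coefficient of $\mathrm{OPT}$ is produced exactly by the recurrence. Still, $\mathrm{OPT}\ge f(\emptyset)=0$ holds by downward-closedness, which I would remark upon.

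For the limit, $(1-c/T)^T\to e^{-c}$ for fixed $c=\bar c_g^F$, and the error term is $O(1/T)$. One subtlety I would flag explicitly is that $c_g^F$ is defined relative to the trajectory, which itself depends on $T$. The limit statement should therefore be read for a fixed certified upper bound $\bar c$ on the curvature. Any such $\bar c\ge \bar c_g^F$ is admissible in Lemma~\ref{lem:dmcgp-descent}, since the curvature inequality only weakens as the parameter increases. Also, $(1-(1-c/T)^T)/c\ge(1-e^{-c})/c$ because $(1-c/T)^T\le e^{-c}$, so the finite-$T$ bound already dominates the displayed asymptotic form. The oracle count $O(Tn)$ follows from evaluating $n$ slopes per iteration, with slope estimation and pruning passes counted per step.

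The main obstacle is essentially absent here, since Lemma~\ref{lem:dmcgp-descent} carries all the analytic weight. The only delicate point is handling the $E_i$ terms inside the recurrence. They must be weighted by $\beta^{T-1-i}\in[0,1]$ rather than simply summed, and that requires $\beta\ge0$, hence the hypothesis $\bar c_g^F\le T$.
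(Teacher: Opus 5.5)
Your proposal is correct and follows essentially the same route as the paper: you unroll the recurrence from Lemma~\ref{lem:dmcgp-descent} using $1-\bar c_g^F/T\ge 0$, evaluate the geometric sum, and bound the error terms $E_i$ using the weights $(1-\bar c_g^F/T)^{T-1-i}\in[0,1]$. Your added point that the trajectory, and hence $c_g^F$, depends on $T$ is a valid sharpening of the paper's informal ``fixed finite $\bar c_g^F$'' limit: the limit is best read for a fixed upper bound $\bar c\ge\bar c_g^F$, which is still admissible in the descent lemma, and since $f(O^*)\ge 0$ this gives the uniform bound $\frac{1-e^{-\bar c}}{\bar c}f(O^*)-n(n-1)C_F/T$.
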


\begin{proof}
  Let $a_i \triangleq F(\tilde S_i)$, $A \triangleq F(\vone_{O^*}) = f(O^*)$,
  put $\bar c_g^F=\max\{1,c_g^F\}$, and let
  $\delta_i \triangleq (1/T^2) E_i$ from
  Lemma~\ref{lem:dmcgp-descent}.  Equation~\eqref{eq:dmcgp-descent}
  (which holds for arbitrary submodular $f$, without monotonicity) gives
  the linear recurrence
  \[
    a_{i+1} \;\ge\; \tfrac{1}{T} A \;+\; \bigl(1 - \bar c_g^F/T\bigr)\,a_i \;-\; \delta_i,
    \qquad a_0 = 0.
  \]

  Because $\bar c_g^F \le T$, the recurrence coefficient
  $1 - \bar c_g^F/T \ge 0$.  Iterating:
  \[
    a_T \;\ge\; \tfrac{A}{T}\sum_{i=0}^{T-1}\bigl(1 - \bar c_g^F/T\bigr)^i
          \;-\; \sum_{i=0}^{T-1}\bigl(1 - \bar c_g^F/T\bigr)^{T-1-i}\,\delta_i.
  \]
  The geometric sum evaluates to
  $\tfrac{A}{T}\cdot \tfrac{1 - (1 - \bar c_g^F/T)^T}{\bar c_g^F/T}
   = \tfrac{A}{\bar c_g^F}\bigl(1 - (1 - \bar c_g^F/T)^T\bigr)$,
  and each factor $(1 - \bar c_g^F/T)^{T-1-i} \in [0, 1]$ (since
  $\bar c_g^F/T \in [0, 1]$), so the error term is bounded by
  $\sum_i \delta_i = O(1/T)$.  This gives the stated finite-$T$
  bound with $\sum_i\delta_i\le n(n-1)\,C_F/T$.  Sending $T \to \infty$ for
  fixed finite $\bar c_g^F$,
  $(1 - \bar c_g^F/T)^T \to e^{-\bar c_g^F}$ by the elementary exponential limit,
  the error vanishes, and the limit is
  $(1 - e^{-\bar c_g^F})/\bar c_g^F \cdot f(O^*)$.
\end{proof}

\begin{remark}[Reduction to classical CC curvature]\label{rem:classical-reduction}
  When $f$ is monotone submodular (so $c_g^F \le c_F = c_f = \alpha_g
  \in [0,1]$ by Theorem~\ref{thm:nn-characterization}(a), with
  $\alpha_g$ the classical Conforti--Cornu\'ejols curvature),
  the classical continuous-greedy analysis yields
  $F(\tilde S_T) \ge (1 - e^{-\alpha_g})/\alpha_g \cdot f(O^*)$ in the
  limit $T \to \infty$.  The non-monotone DMCG-P recurrence above uses
  $\bar c_g^F=\max\{1,c_g^F\}$; the sharper small-curvature formula
  comes from monotonicity, not from Step~7 of the non-monotone proof.
  When $f$ takes negative values, $c_F = \infty$ globally by
  Proposition~\ref{prop:cF-infty}; Theorem~\ref{thm:dmcgp-general}
  still applies whenever its finite trajectory-curvature
  hypothesis is verified.  For decomposable objectives,
  \S\ref{apx:dmcgp-fractional} supplies a closed-form certificate for
  that hypothesis.
\end{remark}

\noindent\textbf{Why this works for unweighted greedy.}
The unweighted selection rule is sufficient because the proof needs an
upper bound on the union gain, not a lower bound on a weighted gradient.
(P3) gives
$F(\tilde S_i \lor \vone_{O^*}) - F(\tilde S_i) \le \sum_{j \in O^*}(1 - (\tilde S_i)_j)\partial_j F(\tilde S_i)$,
and the analysis then upper-bounds this weighted sum by
the unweighted sum $\sum_{j \in O^*}\partial_j F(\tilde S_i) = \langle
\nabla F(\tilde S_i),\,\vone_{O^*}\rangle$.  The latter is free when
$\partial_j F(\tilde S_i) \ge 0$ for every $j \in O^*$---which is
exactly what Lemma~\ref{lem:dmcgp-slope} plus the $O^* \subseteq \cN$
case analysis in Step 5 of the proof provides (coordinates $j \in O^*$
with $(\tilde S_i)_j = 0$ have $1 - (\tilde S_i)_j = 1$ trivially;
coordinates with $(\tilde S_i)_j > 0$ have $\partial_j F \ge 0$ by
Lemma~\ref{lem:dmcgp-slope}, so
$(1 - (\tilde S_i)_j)\partial_j F \le \partial_j F$ regardless).
Thus unweighted DMCG-P---the feasible-set form
$B_i \in \arg\max_{B\in\cI}\sum_{j\in B}\partial_jF(\tilde S_i)$ with
$v_i=\vone_{B_i}$---is the correct clean algorithm and the discrete
proof lifts directly.

\subsection{Decomposable Certificate for \texorpdfstring{$c_g^F$}{cgF}}\label{apx:dmcgp-fractional}

Theorem~\ref{thm:dmcgp-general} delivers the headline DMCG-P
guarantee $(1 - e^{-\bar c_g^F})/\bar c_g^F \cdot f(O^*)$ conditional on
$c_g^F < \infty$ and $\delta\bar c_g^F \le 1$.  Lemma~\ref{lem:dmcgp-slope}
provides the positive-slope invariant used in the proof, but the
theorem is most useful when
$c_g^F$ can be \emph{bounded} in closed form.  This subsection
provides such a bound for the \emph{decomposable} case $f = g -
\ell$, mirroring the discrete removal-marginal certificate
$c_g \le \alpha_g/(1 - \hat r)$
of Proposition~\ref{prop:opt-free}.  The content here is a
\emph{computational certificate for $c_g^F$}, not a separate
approximation guarantee: plugging the certificate into
Theorem~\ref{thm:dmcgp-general} recovers the $(1-e^{-\bar c_g^F})/\bar c_g^F$
bound for negative-valued~$f$.
We first fix the setting
(\S\ref{apx:dmcgp-decomp-setup}), then prove the multilinear lift of
the CC-curvature inequality that the analysis requires
(\S\ref{apx:dmcgp-multilinear-cc}), state the OPT-aware comparison bound
(\S\ref{apx:dmcgp-opt-aware}), obtain the OPT-free certificate
$c_g^F \le \alpha_g/(1 - \hat r_F)$ from the pruned trajectory
(\S\ref{apx:dmcgp-opt-free}), and assemble the decomposable-certificate
proposition (\S\ref{apx:dmcgp-frac-main}).

\subsubsection{Setting}\label{apx:dmcgp-decomp-setup}

Throughout this subsection, $f = g - \ell$ where
\begin{itemize}
  \item $g : 2^\cN \to \bR_{\ge 0}$ is monotone submodular with
    Conforti--Cornu\'ejols curvature $\alpha_g \in [0, 1]$,
    characterised
    (Definition~\ref{def:curv} and equation~\eqref{eq:curvature-ineq})
    by
    \begin{equation}\label{eq:dmcgp-cc-set}
      g(X \cup Y) - g(Y) \;\ge\; (1 - \alpha_g)\,g(X \setminus Y)
      \qquad \forall\, X, Y \subseteq \cN;
    \end{equation}
  \item $\ell : 2^\cN \to \bR_{\ge 0}$ is non-negative modular with
    per-element costs $\ell_j \triangleq \ell(\{j\})$, so
    $\ell(A) = \sum_{j \in A}\ell_j$ for every $A \subseteq \cN$.
\end{itemize}
The multilinear extensions decompose as $F = G - L$, with $G$ the
multilinear extension of~$g$ (monotone submodular) and
\begin{equation}\label{eq:dmcgp-Ldef}
  L(\vx) \;=\; \sum_{j \in \cN}\ell_j x_j
\end{equation}
exactly linear in $\vx$.  Note that $f$ itself may take negative
values, so $c_F = \infty$ by
Proposition~\ref{prop:cF-infty}; the
analysis below operates entirely through the trajectory-restricted
curvature $c_g^F$ of Definition~\ref{defn:fractional-curvature}, which
is bounded by the decomposable certificates proved below.

\subsubsection{Multilinear CC-curvature inequality for \texorpdfstring{$G$}{G}}
\label{apx:dmcgp-multilinear-cc}

The multilinear extension $G$ of a monotone submodular $g$ with CC
curvature $\alpha_g$ inherits a fractional form of the set
inequality~\eqref{eq:dmcgp-cc-set}.  We state and prove the precise
form required for the decomposable analysis.  Related multilinear
inequalities are standard
(e.g.~\cite{calinescu2011maximizing}),
but the statement below---comparing $G(\vx \lor \vy)$ with
$G(\vx - \vx \land \vy)$---does not seem to appear in this form in
the literature, so we include a self-contained proof.

\begin{lemma}[Multilinear CC-curvature inequality, elementwise form]
  \label{lem:dmcgp-multilinear-cc}
  Let $g : 2^\cN \to \bR_{\ge 0}$ be monotone submodular with CC
  curvature $\alpha_g \in [0,1]$, and let $G$ be its multilinear
  extension.  For all $\vx, \vy \in [0,1]^\cN$, writing
  $\vw \triangleq \vx - \vx \land \vy$ (so $w_j = (x_j - y_j)_+$),
  \begin{equation}\label{eq:dmcgp-mlcc-elem}
    G(\vx \lor \vy) - G(\vy)
    \;\ge\; (1 - \alpha_g)\sum_{j \in \cN} w_j\,g(\{j\}).
  \end{equation}
  In particular, by $g$-subadditivity
  $G(\vw) \le \sum_j w_j\,g(\{j\})$, the weaker set-style inequality
  \begin{equation}\label{eq:dmcgp-mlcc}
    G(\vx \lor \vy) - G(\vy) \;\ge\; (1 - \alpha_g)\,G(\vw)
  \end{equation}
  follows by combining the elementwise bound with subadditivity.
\end{lemma}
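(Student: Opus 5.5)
Moving from $\vy$ to $\vx\lor\vy$ raises only the coordinates in $\supp(\vw)$, and each one rises by exactly $w_j=(x_j-y_j)_+$. The plan is to walk along this path one coordinate at a time and bound every increment below using the Conforti--Cornu\'ejols curvature of~$g$. Fix an order $j_1,\dots,j_m$ of $\supp(\vw)$. Let $\vz^{(0)}=\vy$, and let $\vz^{(t)}$ be $\vz^{(t-1)}$ with coordinate $j_t$ raised from $y_{j_t}$ to $x_{j_t}$. Then $\vz^{(m)}=\vx\lor\vy$, and
\[
  G(\vx\lor\vy)-G(\vy)=\sum_{t=1}^m \bigl[G(\vz^{(t)})-G(\vz^{(t-1)})\bigr].
\]
Since $G$ is affine in each coordinate, the $t$-th increment equals $w_{j_t}\,\partial_{j_t}G(\vz^{(t-1)})$. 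This is the same device used in the proof of Lemma~\ref{lem:dmcgp-prune-up}.

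Next we bound each slope. For any point $\vz$ and coordinate $j$, $\partial_j G(\vz)=\bE_{R}[g(R\cup\{j\})-g(R\setminus\{j\})]$, where $R$ is drawn from $\vz$. Each realization is a marginal $\Delta_g(j\mid S)$ with $S=R\setminus\{j\}\subseteq\cN\setminus\{j\}$. Submodularity gives $\Delta_g(j\mid S)\ge\Delta_g(j\mid\cN\setminus\{j\})$. The CC definition (Proposition~\ref{prop:classical}) gives $\Delta_g(j\mid\cN\setminus\{j\})\ge(1-\alpha_g)g(\{j\})$; when $g(\{j\})=0$ both sides are $0$ by monotonicity, so the bound still holds. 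Taking expectations yields $\partial_jG(\vz)\ge(1-\alpha_g)g(\{j\})$ pointwise. Multiplying by $w_{j_t}\ge0$ and summing over $t$ gives \eqref{eq:dmcgp-mlcc-elem}. Note that this argument uses no coupling between $\vx$ and $\vy$; it needs only the pointwise slope bound.

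For the ``in particular'' part it suffices to show $G(\vw)\le\sum_j w_j g(\{j\})$. We have $G(\vw)=\bE_{R\sim\vw}[g(R)]$. Subadditivity of a normalized submodular $g$ gives $g(R)\le\sum_{j\in R}g(\{j\})$, and by linearity of expectation the right side has mean $\sum_j w_j g(\{j\})$. Since $1-\alpha_g\ge0$, combining this with \eqref{eq:dmcgp-mlcc-elem} gives \eqref{eq:dmcgp-mlcc}.

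The main obstacle is bookkeeping rather than analysis. First, the increment must be taken with respect to the correct base point: the telescoping starts at $\vy$, not at $\vx$, which matches the statement's $G(\vx\lor\vy)-G(\vy)$. Second, normalization $g(\emptyset)=0$ must be assumed, since subadditivity requires it. Third, coordinates with $g(\{j\})=0$ need the separate handling noted above; Proposition~\ref{prop:classical} takes its minimum only over elements with positive singleton value, so these coordinates are not covered by it directly.
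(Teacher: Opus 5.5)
Your proposal is correct and follows the paper's proof essentially step for step. Both arguments telescope coordinatewise from $\vy$ to $\vx\lor\vy$ using multilinearity, bound each slope by $\partial_j G(\vz)\ge \Delta_g(j\mid \cN\setminus\{j\})\ge(1-\alpha_g)\,g(\{j\})$, and finish with subadditivity under the product distribution. The only cosmetic difference is in the slope bound: you derive it realization by realization and handle $g(\{j\})=0$ explicitly, whereas the paper compares $\partial_j G(\vz)$ with $\partial_j G(\vone_\cN)$ using nonpositive mixed partials.
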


\begin{proof}
  The elementwise bound~\eqref{eq:dmcgp-mlcc-elem} follows from the
  pointwise gradient inequality
  \begin{equation}\label{eq:dmcgp-partial-lb-global}
    \partial_j G(\vz) \;\ge\; (1 - \alpha_g)\,g(\{j\})
    \qquad \text{for every } \vz \in [0,1]^\cN \text{ and every } j \in \cN,
  \end{equation}
  integrated along the straight-line interpolation from $\vy$ to
  $\vx \lor \vy$.

  \emph{Step 1 (pointwise gradient lower bound).}
  Submodularity of $g$ is equivalent to the mixed-partial condition
  $\partial_{j\ell}G \le 0$ for $\ell \ne j$, i.e.\ $\partial_j G$ is
  coordinate-wise non-increasing in every other coordinate.  Hence for
  any $\vz \in [0,1]^\cN$,
  \[
    \partial_j G(\vz)
    \;\ge\; \partial_j G(\vone_\cN)
    \;=\; g(\cN) - g(\cN \setminus \{j\})
    \;\ge\; (1 - \alpha_g)\,g(\{j\}),
  \]
  where the final inequality is the CC-curvature definition
  $\alpha_g = 1 - \min_{e, S\not\ni e}\Delta_g(e \mid S)/g(\{e\})$
  specialised to $S = \cN \setminus \{j\}$.  This
  gives~\eqref{eq:dmcgp-partial-lb-global} (normalisation of $g$ is not
  needed).

  \emph{Step 2 (telescoping via multilinearity).}
  Order $\cN = \{1, \dots, n\}$ arbitrarily and interpolate from $\vy$
  to $\vx \lor \vy$ one coordinate at a time: set $\vz^{(0)} = \vy$
  and $\vz^{(j)} = \vz^{(j-1)} + w_j\,\ve_j$, so
  $\vz^{(n)} = \vy + \vw = \vx \lor \vy$ (using
  $\max(x_j, y_j) = y_j + (x_j-y_j)_+$).  By multilinearity of $G$ in
  each coordinate separately,
  \[
    G(\vz^{(j)}) - G(\vz^{(j-1)}) \;=\; w_j\,\partial_j G(\vz^{(j-1)}),
  \]
  and summing,
  \begin{equation}\label{eq:dmcgp-telescope}
    G(\vx \lor \vy) - G(\vy) \;=\; \sum_{j=1}^{n} w_j\,\partial_j G(\vz^{(j-1)}).
  \end{equation}
  Applying~\eqref{eq:dmcgp-partial-lb-global} at each
  $\vz^{(j-1)} \in [0,1]^\cN$ gives~\eqref{eq:dmcgp-mlcc-elem}.

  \emph{Step 3 (set-style corollary).}
  By $g \ge 0$ and submodularity, $g$ is subadditive:
  $g(A) \le \sum_{e \in A} g(\{e\})$.  Taking expectations under
  $R \sim \vw$ (coordinate-wise independent Bernoullis),
  $G(\vw) = \bE[g(R)] \le \bE\bigl[\sum_{j \in R}g(\{j\})\bigr]
  = \sum_j w_j\,g(\{j\})$.  Combining with~\eqref{eq:dmcgp-mlcc-elem}
  yields~\eqref{eq:dmcgp-mlcc}.
\end{proof}

\begin{remark}[Strength of the elementwise form]\label{rem:dmcgp-mlcc-tight}
  The elementwise inequality~\eqref{eq:dmcgp-mlcc-elem} is strictly
  stronger than the set-style form~\eqref{eq:dmcgp-mlcc}, since the
  subadditivity step $G(\vw) \le \sum_j w_j\,g(\{j\})$ can be strict.
  Both forms are used downstream: \eqref{eq:dmcgp-mlcc} is the clean
  multilinear analogue of~\eqref{eq:dmcgp-cc-set}, while
  \eqref{eq:dmcgp-mlcc-elem} is what drives the fractional OPT-free
  certificate in~\S\ref{apx:dmcgp-opt-free}.
\end{remark}

\subsubsection{OPT-aware decomposable bound}\label{apx:dmcgp-opt-aware}

For each iterate write
\begin{equation}\label{eq:dmcgp-wi}
  \vw_i \;\triangleq\; \tilde S_i - \tilde S_i \land \vone_{O^*}
  \qquad \text{(coordinate $j$: $(\vw_i)_j = (\tilde S_i)_j\,\one[j \notin O^*]$),}
\end{equation}
the ``outside-OPT'' fractional mass.  Since $O^*$ is a set and
$\vone_{O^*}$ is $\{0,1\}$-valued, $\tilde S_i \land \vone_{O^*}$ zeros
out the coordinates $j \in \cN \setminus O^*$, so
$\vw_i = \tilde S_i \odot \one[\cdot \notin O^*]$ is literally
$\tilde S_i$ restricted to $\cN \setminus O^*$.  Define the
trajectory-level cost ratio
\begin{equation}\label{eq:dmcgp-rhoF}
  \rho_F \;\triangleq\; \max_{\substack{O^* \in \cO \\ 0 \le i < T \\ G(\vw_i) > 0}}\,
    \frac{L(\vw_i)}{G(\vw_i)}.
\end{equation}
(If $G(\vw_i) = 0$ for every $i, O^*$, set $\rho_F = 0$; the curvature
definition then does not involve any active constraint from the
decomposable side.)

\begin{proposition}[OPT-aware fractional curvature bound]
  \label{prop:dmcgp-frac-opt-aware}
  Let $f = g - \ell$ be decomposable as
  in~\S\ref{apx:dmcgp-decomp-setup}, and consider the trajectory of
  $\dmcgp(F, P, T)$.  If $\rho_F < 1$, then the fractional greedy
  curvature of Definition~\ref{defn:fractional-curvature} satisfies
  \begin{equation}\label{eq:dmcgp-frac-opt-aware}
    c_g^F \;\le\; \frac{\alpha_g}{1 - \rho_F}.
  \end{equation}
\end{proposition}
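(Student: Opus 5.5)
We follow the template of the discrete OPT-aware bound (Proposition~\ref{prop:greedy-curv-decomp}), replacing each set-level fact by its multilinear analogue. Fix $O^*\in\cO$ and an index $0\le i<T$ whose pair is admissible in Definition~\ref{defn:fractional-curvature}, i.e.\ $F(\vw_i)>0$ with $\vw_i$ as in~\eqref{eq:dmcgp-wi}. The goal is to lower-bound the ratio
\[
  \frac{F(\tilde S_i\lor\vone_{O^*})-F(\vone_{O^*})}{F(\vw_i)}
\]
by $1-\alpha_g/(1-\rho_F)$. Once this holds for every admissible pair, taking the double minimum in Definition~\ref{defn:fractional-curvature} gives~\eqref{eq:dmcgp-frac-opt-aware}.

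\emph{Numerator.} Since $F=G-L$ with $L$ linear, the numerator equals $[G(\tilde S_i\lor\vone_{O^*})-G(\vone_{O^*})]-[L(\tilde S_i\lor\vone_{O^*})-L(\vone_{O^*})]$. Coordinatewise, $\tilde S_i\lor\vone_{O^*}-\vone_{O^*}=\vw_i$, because on $O^*$ both vectors equal~$1$ and off $O^*$ the difference is $(\tilde S_i)_j$. Hence the $L$-difference is exactly $L(\vw_i)$. For the $G$-difference, apply the set-style form~\eqref{eq:dmcgp-mlcc} of Lemma~\ref{lem:dmcgp-multilinear-cc} with $\vx=\tilde S_i$ and $\vy=\vone_{O^*}$. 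Its $\vw$ is $\tilde S_i-\tilde S_i\land\vone_{O^*}=\vw_i$, so
\[
  G(\tilde S_i\lor\vone_{O^*})-G(\vone_{O^*})\ge(1-\alpha_g)\,G(\vw_i).
\]
Therefore the numerator is at least $(1-\alpha_g)G(\vw_i)-L(\vw_i)$.

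\emph{Denominator and ratio.} We have $F(\vw_i)=G(\vw_i)-L(\vw_i)>0$ and $L(\vw_i)\ge0$ (since $\ell\ge0$), so $G(\vw_i)>0$. This means index $i$ enters the maximum defining $\rho_F$ in~\eqref{eq:dmcgp-rhoF}. Set $\tau_i=L(\vw_i)/G(\vw_i)\in[0,\rho_F]$; note $\tau_i<1$ because $F(\vw_i)>0$. Dividing numerator and denominator by $G(\vw_i)>0$, the ratio is at least
\[
  \frac{(1-\alpha_g)-\tau_i}{1-\tau_i}=1-\frac{\alpha_g}{1-\tau_i}.
\]
This expression is nonincreasing in $\tau_i$ on $[0,1)$ because $\alpha_g\ge0$. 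Since $\tau_i\le\rho_F<1$, the ratio is at least $1-\alpha_g/(1-\rho_F)$.

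\emph{Main obstacle.} The only step needing care is the multilinear CC inequality with the roles arranged correctly. Lemma~\ref{lem:dmcgp-multilinear-cc} is stated as $G(\vx\lor\vy)-G(\vy)$, so we must take $\vy=\vone_{O^*}$ rather than the iterate. Two checks are also needed: that $\vw$ coincides with $\vw_i$, and that the $L$-increment is exactly $L(\vw_i)$. A secondary point is the admissibility bookkeeping, namely that positivity of $F(\vw_i)$ forces $G(\vw_i)>0$, so that the definition of $\rho_F$ (including its convention when $G(\vw_i)=0$ for all pairs) controls every qualifying pair.
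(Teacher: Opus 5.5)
Your proposal is correct and follows the paper's proof essentially step for step. Both arguments decompose $F=G-L$, bound the $G$-increment by the set-style multilinear CC inequality with $\vx=\tilde S_i$ and $\vy=\vone_{O^*}$, identify the $L$-increment as exactly $L(\vw_i)$, and conclude by monotonicity of $\tau\mapsto 1-\alpha_g/(1-\tau)$ together with $\tau_i\le\rho_F<1$. The only cosmetic difference is that you check the $L$-identity coordinatewise, whereas the paper uses $L(\vx\lor\vy)+L(\vx\land\vy)=L(\vx)+L(\vy)$.
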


\begin{proof}
  Fix $O^* \in \cO$ and $i \in \{0, \dots, T-1\}$ with
  $F(\vw_i) > 0$ (the only indices active in
  Definition~\ref{defn:fractional-curvature}); write $\vx = \tilde S_i$
  and $\vy = \vone_{O^*}$, so $\vw_i = \vx - \vx \land \vy$.  Decompose
  $F = G - L$ and expand the numerator of the curvature ratio:
  \begin{align*}
    F(\vx \lor \vy) - F(\vy)
    &\;=\; \bigl[\,G(\vx \lor \vy) - G(\vy)\,\bigr]
         \;-\; \bigl[\,L(\vx \lor \vy) - L(\vy)\,\bigr]\\
    &\;\ge\; (1 - \alpha_g)\,G(\vw_i)
         \;-\; \bigl[\,L(\vx \lor \vy) - L(\vy)\,\bigr]
      && \text{by Lemma~\ref{lem:dmcgp-multilinear-cc}.}
  \end{align*}
  For the $L$ side, linearity of $L$ gives
  $L(\vx \lor \vy) + L(\vx \land \vy) = L(\vx) + L(\vy)$
  (coordinate-wise: $\max(x_j, y_j) + \min(x_j, y_j) = x_j + y_j$), so
  \begin{equation}\label{eq:dmcgp-L-gap}
    L(\vx \lor \vy) - L(\vy) \;=\; L(\vx) - L(\vx \land \vy)
    \;=\; L(\vx - \vx \land \vy) \;=\; L(\vw_i).
  \end{equation}
  Let $\rho_i \triangleq L(\vw_i)/G(\vw_i)$ (finite by
  $G(\vw_i) > 0$, since $F(\vw_i) > 0$ and $L \ge 0$ together imply
  $G(\vw_i) > 0$).  Then
  $F(\vw_i) = G(\vw_i) - L(\vw_i) = (1 - \rho_i)\,G(\vw_i)$, and
  \[
    F(\vx \lor \vy) - F(\vy)
    \;\ge\; \bigl[(1-\alpha_g) - \rho_i\bigr]\,G(\vw_i).
  \]
  Dividing by $F(\vw_i) = (1-\rho_i)\,G(\vw_i) > 0$,
  \begin{equation}\label{eq:dmcgp-ratio-per-i}
    \frac{F(\vx \lor \vy) - F(\vy)}{F(\vw_i)}
    \;\ge\; \frac{(1-\alpha_g) - \rho_i}{1 - \rho_i}
    \;=\; 1 - \frac{\alpha_g}{1 - \rho_i}
    \;\ge\; 1 - \frac{\alpha_g}{1 - \rho_F},
  \end{equation}
  using $\rho_i \le \rho_F < 1$ and monotonicity of
  $t \mapsto \alpha_g/(1-t)$ on $[0, 1)$.  Taking the infimum over
  $i$ and $O^* \in \cO$ and applying
  Definition~\ref{defn:fractional-curvature} gives
  $c_g^F \le \alpha_g/(1 - \rho_F)$.
\end{proof}

\subsubsection{OPT-free fractional certificate}
\label{apx:dmcgp-opt-free}

Proposition~\ref{prop:dmcgp-frac-opt-aware} bounds $c_g^F$ in terms of
the OPT-dependent vectors $\vw_i$.  For a deployable certificate we
want an OPT-free quantity computed directly from the DMCG-P trajectory.
The right fractional analogue of the discrete removal marginal
\(\Delta_g(e\mid A_i\setminus\{e\})\) is the current multilinear
slope \(\partial_jG(\tilde S_i)\).

Define the removal-slope ratio
\begin{equation}\label{eq:dmcgp-rhatF}
  \hat r_F \;\triangleq\;
  \max_{\substack{0\le i<T\\ j:\,(\tilde S_i)_j>0}}
    \frac{\ell_j}{\partial_jG(\tilde S_i)},
\end{equation}
with value~$0$ if the index set is empty.
Lemma~\ref{lem:dmcgp-slope} gives
\(\partial_jF(\tilde S_i)=\partial_jG(\tilde S_i)-\ell_j>0\)
on every active coordinate, so every denominator in
\eqref{eq:dmcgp-rhatF} is positive and \(\hat r_F<1\).

\begin{lemma}[Removal-slope domination]\label{lem:dmcgp-removal-domination}
  Let $\vx=\tilde S_i$ be a DMCG-P iterate and let
  $\vw\in[0,1]^\cN$ satisfy \(0\le \vw\le \vx\) coordinate-wise and
  \(\supp(\vw)\subseteq\supp(\vx)\).  Then
  \[
    L(\vw) \;\le\; \hat r_F\,G(\vw).
  \]
\end{lemma}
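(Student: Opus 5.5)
The plan is to mirror the discrete proof of Proposition~\ref{prop:opt-free}: first bound each cost $\ell_j$ by $\hat r_F$ times the current removal slope $\partial_jG(\vx)$, then show that the sum of the removal slopes, weighted by $\vw$, is at most $G(\vw)$. This is the fractional analogue of the chain $\ell(T)\le \hat r\sum_q\Delta_g(e_q\mid A_i\setminus\{e_q\})\le \hat r\,g(T)$.

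\emph{Step 1 (costs versus slopes).} If $\vx=\tilde S_0=\vzero$, then $\vw=\vzero$ and both sides vanish: $L(\vzero)=0$, and $G(\vzero)=g(\emptyset)=f(\emptyset)+\ell(\emptyset)=0$. Otherwise fix $j\in\supp(\vw)\subseteq\supp(\vx)$. By Lemma~\ref{lem:dmcgp-slope}, $\partial_jG(\vx)-\ell_j=\partial_jF(\vx)>0$, so $\partial_jG(\vx)>0$. The pair $(i,j)$ lies in the index set of \eqref{eq:dmcgp-rhatF}, so the definition of $\hat r_F$ gives $\ell_j\le \hat r_F\,\partial_jG(\vx)$. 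Multiplying by $w_j\ge0$ and summing, using linearity \eqref{eq:dmcgp-Ldef}, I get
\[
L(\vw)=\sum_{j}w_j\ell_j\;\le\;\hat r_F\sum_{j\in\supp(\vw)} w_j\,\partial_jG(\vx).
\]

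\emph{Step 2 (slopes versus $G(\vw)$).} I write $G(\vw)=G(\vw)-G(\vzero)=\int_0^1\langle\nabla G(t\vw),\vw\rangle\,dt$, exactly as in Step~6 of Lemma~\ref{lem:dmcgp-descent}. For $t\in[0,1]$ we have $t\vw\le\vw\le\vx$ coordinatewise. Submodularity of $g$ gives nonpositive cross-partials of $G$, and $\partial_jG$ does not depend on coordinate $j$; hence $\partial_jG(t\vw)\ge\partial_jG(\vx)$ for every $j$. Since $w_j\ge0$, integrating gives $G(\vw)\ge\sum_j w_j\,\partial_jG(\vx)$. This is the multilinear counterpart of the discrete bound $\Delta_g(e_q\mid T_{<q})\ge\Delta_g(e_q\mid A_i\setminus\{e_q\})$. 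Because $\hat r_F\ge0$ (costs are nonnegative and the slopes are positive), I can multiply this inequality by $\hat r_F$ and chain it with Step~1 to obtain $L(\vw)\le\hat r_F\,G(\vw)$.

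The step needing most care is Step~2, specifically the justification of the comparison $\partial_jG(t\vw)\ge\partial_jG(\vx)$. I would argue it by moving from $t\vw$ to $\vx$ one coordinate at a time. Along each such move, $\partial_jG$ changes by an integral of $\partial_{jk}G\le0$ for $k\ne j$, and it is constant when $k=j$. This is the same fact used in (P3) and in Step~7 of Lemma~\ref{lem:dmcgp-descent}. The support hypothesis is essential only in Step~1: it guarantees that every coordinate carrying $\vw$-mass is active in $\vx$, so that its cost ratio is controlled by $\hat r_F$.
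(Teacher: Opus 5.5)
Your proposal is correct and matches the paper's proof essentially step for step. You bound $\ell_j\le\hat r_F\,\partial_jG(\vx)$ on $\supp(\vw)$ using the definition of $\hat r_F$, then write $G(\vw)=\int_0^1\langle\nabla G(t\vw),\vw\rangle\,dt$ and use $\partial_jG(t\vw)\ge\partial_jG(\vx)$, which follows from $t\vw\le\vx$ and the nonpositive cross-partials. As in the paper, Step~1 tacitly needs $i<T$ so that $(i,j)$ lies in the index set defining $\hat r_F$; this is exactly the range in which the lemma is used in Theorem~\ref{thm:dmcgp-frac-opt-free}.
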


\begin{proof}
  By definition of \(\hat r_F\), for every coordinate with \(w_j>0\),
  \(\ell_j\le \hat r_F\,\partial_jG(\vx)\).  Hence
  \[
    L(\vw)=\sum_j w_j\ell_j
    \;\le\; \hat r_F\sum_j w_j\partial_jG(\vx).
  \]
  It remains to compare the weighted slope sum with \(G(\vw)\).
  Since \(G\) is the multilinear extension of a monotone submodular
  function, it is DR-submodular: each partial derivative is
  coordinate-wise non-increasing.  For \(t\in[0,1]\), we have
  \(t\vw\le \vw\le \vx\), so
  \(\partial_jG(t\vw)\ge \partial_jG(\vx)\).  Using normalization
  \(G(\vzero)=0\) and integrating along the ray \(t\vw\),
  \[
    G(\vw)
    =\int_0^1 \langle\nabla G(t\vw),\vw\rangle\,dt
    \;\ge\; \sum_j w_j\partial_jG(\vx).
  \]
  Combining the two displays gives the claim.
\end{proof}

\begin{theorem}[OPT-free fractional curvature certificate]
  \label{thm:dmcgp-frac-opt-free}
  Let $f = g - \ell$ be decomposable as in~\S\ref{apx:dmcgp-decomp-setup}
  and consider the trajectory of $\dmcgp(F, P, T)$.  The fractional greedy curvature of
  Definition~\ref{defn:fractional-curvature} satisfies
  \begin{equation}\label{eq:dmcgp-frac-opt-free}
    c_g^F \;\le\; \frac{\alpha_g}{1 - \hat r_F}.
  \end{equation}
\end{theorem}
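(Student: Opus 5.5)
The plan is to reuse the proof of Proposition~\ref{prop:dmcgp-frac-opt-aware} essentially verbatim, with the OPT-dependent ratio $\rho_i=L(\vw_i)/G(\vw_i)$ controlled by the OPT-free quantity $\hat r_F$ through Lemma~\ref{lem:dmcgp-removal-domination}. Fix $O^*\in\cO$ and a trajectory index $0\le i<T$ with $F(\vw_i)>0$, where $\vw_i=\tilde S_i-\tilde S_i\land\vone_{O^*}$; only such pairs enter Definition~\ref{defn:fractional-curvature}. If no pair qualifies, the inner minimum is empty. By the positive-denominator convention the certificate then holds vacuously, so we may restrict attention to qualifying pairs.

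The first step is to express the curvature numerator through $G$ and $L$. Exactly as in the OPT-aware proof, Lemma~\ref{lem:dmcgp-multilinear-cc} and the linear identity \eqref{eq:dmcgp-L-gap} give
\[
F(\tilde S_i\lor\vone_{O^*})-F(\vone_{O^*})\ge(1-\alpha_g)\,G(\vw_i)-L(\vw_i).
\]
Since $F(\vw_i)>0$ and $L\ge0$, we also have $G(\vw_i)>0$. Put $\rho_i=L(\vw_i)/G(\vw_i)\in[0,1)$. The ratio is then at least $1-\alpha_g/(1-\rho_i)$.

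The second step is to bound $\rho_i\le\hat r_F$. This is where Lemma~\ref{lem:dmcgp-removal-domination} applies with $\vx=\tilde S_i$ and $\vw=\vw_i$. Its hypotheses hold because $\vw_i$ is $\tilde S_i$ restricted to $\cN\setminus O^*$, so $0\le\vw_i\le\tilde S_i$ and $\supp(\vw_i)\subseteq\supp(\tilde S_i)$. The lemma gives $L(\vw_i)\le\hat r_F\,G(\vw_i)$, that is, $\rho_i\le\hat r_F$. We already have $\hat r_F<1$ from the slope invariant (Lemma~\ref{lem:dmcgp-slope}). Monotonicity of $t\mapsto\alpha_g/(1-t)$ on $[0,1)$ then yields that each admissible ratio is at least $1-\alpha_g/(1-\hat r_F)$. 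Taking the double minimum over $O^*$ and $i$ gives \eqref{eq:dmcgp-frac-opt-free}.

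The main obstacle is the index $i=0$ and, more generally, any iterate where the slope invariant is not available. Lemma~\ref{lem:dmcgp-slope} is stated for $i\ge1$, but $\tilde S_0=\vzero$ gives $\vw_0=\vzero$ and $F(\vw_0)=0$, so that pair is never admissible. For $i\ge1$, Lemma~\ref{lem:dmcgp-removal-domination} needs the denominators $\partial_jG(\tilde S_i)$ to be positive on $\supp(\vw_i)$, which Lemma~\ref{lem:dmcgp-slope} guarantees, and its proof relies on DR-submodularity of $G$ together with $G(\vzero)=0$. I would double-check that normalization of $g$ is assumed, since it is inherited from $f(\emptyset)=0$ and $\ell(\emptyset)=0$. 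Beyond these checks the argument is a two-line substitution into the OPT-aware proof.
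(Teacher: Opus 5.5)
Your proposal is correct and follows the paper's proof essentially step for step. Lemma~\ref{lem:dmcgp-removal-domination} gives $L(\vw_i)\le\hat r_F\,G(\vw_i)$, the set-style multilinear CC inequality and the linear identity for $L$ bound the numerator below by $[(1-\alpha_g)-\tau]\,G(\vw_i)$, and monotonicity of $\tau\mapsto 1-\alpha_g/(1-\tau)$ finishes; your extra checks (the index $i=0$ being inadmissible, $G(\vw_i)>0$, and normalization of $g$) are sound and only make explicit what the paper leaves implicit.
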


\begin{proof}
  Fix $O^* \in \cO$ and $i \in \{0, \dots, T-1\}$ with $F(\vw_i) > 0$,
  and write $\vx = \tilde S_i$, $\vy = \vone_{O^*}$, so
  $\vw_i = \vx - \vx \land \vy$.  Since \(0\le \vw_i\le \vx\) and
  \(\supp(\vw_i)\subseteq\supp(\vx)\),
  Lemma~\ref{lem:dmcgp-removal-domination} gives
  \begin{equation}\label{eq:dmcgp-L-removal-bound}
    L(\vw_i)\le \hat r_F\,G(\vw_i).
  \end{equation}
  The set-style multilinear CC
  inequality~\eqref{eq:dmcgp-mlcc} applied to $\vx, \vy$ gives
  \begin{equation}\label{eq:dmcgp-G-removal-lb}
    G(\vx \lor \vy) - G(\vy)
    \;\ge\; (1 - \alpha_g)G(\vw_i),
  \end{equation}
  and modularity of $\ell$ (as in~\eqref{eq:dmcgp-L-gap}) gives
  $L(\vx \lor \vy) - L(\vy) = L(\vw_i)$.
  Let \(\tau \triangleq L(\vw_i)/G(\vw_i)\).  The positive-denominator
  assumption \(F(\vw_i)>0\) implies \(\tau<1\), and
  \eqref{eq:dmcgp-L-removal-bound} implies \(0\le\tau\le\hat r_F\).
  Then
  \begin{align*}
    F(\vx \lor \vy) - F(\vy)
    &\;=\; \bigl[G(\vx \lor \vy) - G(\vy)\bigr] - L(\vw_i) \\
    &\;\ge\; (1 - \alpha_g)\,G(\vw_i) - L(\vw_i)
    \;=\; \bigl[(1-\alpha_g) - \tau\bigr]\,G(\vw_i),
  \end{align*}
  while $F(\vw_i) = G(\vw_i) - L(\vw_i) = (1-\tau)G(\vw_i)$.
  Dividing,
  \[
    \frac{F(\vx \lor \vy) - F(\vy)}{F(\vw_i)}
    \;\ge\; \frac{(1-\alpha_g) - \tau}{1 - \tau}.
  \]
  The right-hand side is decreasing in \(\tau\), since its derivative is
  \(-\alpha_g/(1-\tau)^2\).  Because \(\tau\le\hat r_F\),
  \[
    \frac{F(\vx \lor \vy) - F(\vy)}{F(\vw_i)}
    \;\ge\; 1 - \frac{\alpha_g}{1 - \hat r_F}.
  \]
  Taking the infimum over $i$ and $O^* \in \cO$ and applying
  Definition~\ref{defn:fractional-curvature}
  yields~\eqref{eq:dmcgp-frac-opt-free}.
\end{proof}

\begin{remark}[Relationship to the discrete certificate]
  \label{rem:dmcgp-discrete-parallel}
  Theorem~\ref{thm:dmcgp-frac-opt-free} is the exact multilinear
  analogue of the discrete removal-marginal certificate for decomposable
  objectives.
  In the discrete-greedy limit
  ($\tilde S_i \to \vone_{A_i}$ integer-valued),
  $\partial_jG(\tilde S_i)$ reduces to
  \(\Delta_g(j\mid A_i\setminus\{j\})\), and
  \eqref{eq:dmcgp-frac-opt-free} reduces to
  \(c_g \le \alpha_g/(1-\hat r)\).
\end{remark}

\begin{remark}[Verifiability of $\hat r_F$]
  \label{rem:dmcgp-frac-verifiable}
  The certificate is computed after the run from the active coordinates
  and the component gradients \(\partial_jG(\tilde S_i)\).  No optimum
  is needed.  The curvature parameter \(\alpha_g\) admits closed-form
  expressions for
  the application classes in this paper
  (Appendix~\ref{apx:certificate}).  When $\alpha_g$ is not known
  analytically, the total curvature
  $\alpha_g^{\mathrm{total}} = 1 - \min_e\,\Delta_g(e \mid N \setminus \{e\})/g(\{e\})$
  is a computable upper bound that can be substituted for \(\alpha_g\).
\end{remark}

\subsubsection{Decomposable certificate proposition}\label{apx:dmcgp-frac-main}

Combining Theorem~\ref{thm:dmcgp-frac-opt-free} with the headline
Theorem~\ref{thm:dmcgp-general} yields the main deployable result of
this section.  Proposition~\ref{prop:dmcgp-frac-opt-aware} remains a
sharper OPT-aware comparison when the optimum is available.

\begin{proposition}[Decomposable certificate for $c_g^F$]
  \label{prop:dmcgp-decomp-cert}
  Let $f = g - \ell$ be decomposable as
  in~\S\ref{apx:dmcgp-decomp-setup}, let $O^* \in \cO$, and consider
  the trajectory of $\dmcgp(F, P, T)$.  Then the OPT-free certificate
  \[
    c_g^F \le \frac{\alpha_g}{1-\hat r_F}
  \]
  holds, with \(\hat r_F<1\) computed from
  \eqref{eq:dmcgp-rhatF}.  Plugging this certificate into
  Theorem~\ref{thm:dmcgp-general} and using monotonicity of
  $c \mapsto (1 - e^{-c})/c$ on $c > 0$ yields the approximation
  bound
  \begin{equation}\label{eq:dmcgp-frac-main}
    F(\tilde S_T)
    \;\ge\; \frac{1 - (1 - \bar c_g^F/T)^T}{\bar c_g^F}\,f(O^*) \;-\; O(1/T)
    \;\xrightarrow[T\to\infty]{}\;
    \frac{1 - e^{-\bar c_g^F}}{\bar c_g^F}\,f(O^*),
  \end{equation}
  and in particular
  \begin{equation}\label{eq:dmcgp-frac-cert}
    F(\tilde S_T)
    \;\ge\; \frac{1 - e^{-\bar c_{\mathrm{cert}}}}{\bar c_{\mathrm{cert}}}\,f(O^*) \;-\; o(1),
    \qquad
    \hat c_g^F \;\triangleq\; \frac{\alpha_g}{1 - \hat r_F},\quad
    \bar c_{\mathrm{cert}}\triangleq\max\{1,\hat c_g^F\}.
  \end{equation}
  No monotonicity or non-negativity of $f$ itself is required; in
  particular, $f$ may take negative values (where $c_F = \infty$
  globally and the classical MCG analysis fails).
\end{proposition}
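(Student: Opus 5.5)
This is an assembly argument: combine the OPT-free certificate of Theorem~\ref{thm:dmcgp-frac-opt-free} with the general guarantee of Theorem~\ref{thm:dmcgp-general}.

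\textbf{Step 1: the certificate and its finiteness.} I first record that $\hat r_F<1$. Every index in \eqref{eq:dmcgp-rhatF} is an active coordinate of an iterate $\tilde S_i$. For $i\ge1$, Lemma~\ref{lem:dmcgp-slope} gives
\[
\partial_jG(\tilde S_i)-\ell_j=\partial_jF(\tilde S_i)>0 .
\]
The iterate $\tilde S_0=\vzero$ has no active coordinates, so it contributes nothing. Hence each ratio in \eqref{eq:dmcgp-rhatF} lies in $[0,1)$. Because there are finitely many indices, the maximum $\hat r_F$ is also strictly below~$1$.

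Theorem~\ref{thm:dmcgp-frac-opt-free} then gives $c_g^F\le \alpha_g/(1-\hat r_F)=\hat c_g^F<\infty$. This verifies the finite-curvature hypothesis of Theorem~\ref{thm:dmcgp-general}, and it does so without any sign assumption on~$f$. The only ingredients used are Lemma~\ref{lem:dmcgp-multilinear-cc}, which needs only $g$ monotone submodular, and the linearity of~$L$.

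\textbf{Step 2: invoke the general guarantee.} Under the step-size condition $\bar c_g^F\le T$, Theorem~\ref{thm:dmcgp-general} applies directly. It yields \eqref{eq:dmcgp-frac-main} with error $n(n-1)C_F/T=O(1/T)$, together with its $T\to\infty$ limit. Since $\bar c_g^F\le\bar c_{\mathrm{cert}}$, the condition $\bar c_{\mathrm{cert}}\le T$ suffices, and it is OPT-free.

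There is a subtlety here. The trajectory, and hence $\hat r_F$, depends on $T$, so "sending $T\to\infty$ for fixed curvature" must be read along runs whose certificate stays bounded. I will phrase the $o(1)$ term in \eqref{eq:dmcgp-frac-cert} as
\[
\bigl[(1-\bar c/T)^T-e^{-\bar c}\bigr]\frac{f(O^*)}{\bar c}+O(1/T),
\]
which vanishes uniformly for $\bar c$ in any bounded set.

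\textbf{Step 3: pass from $\bar c_g^F$ to $\bar c_{\mathrm{cert}}$.} Here I need two monotonicity facts.
\begin{itemize}
\item $\phi(c)=(1-e^{-c})/c$ is nonincreasing on $c>0$. Indeed, $\phi(c)=\int_0^1 e^{-cs}\,ds$.
\item The finite-$T$ version $\phi_T(c)=(1-(1-c/T)^T)/c=\frac1T\sum_{i<T}(1-c/T)^i$ is nonincreasing on $[0,T]$.
\end{itemize}
Since $\bar c_g^F=\max\{1,c_g^F\}\le\max\{1,\hat c_g^F\}=\bar c_{\mathrm{cert}}$, we get $\phi(\bar c_g^F)\ge\phi(\bar c_{\mathrm{cert}})$, and likewise for $\phi_T$.

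The sign of $f(O^*)$ matters in this step. $O^*$ maximizes over a downward-closed family containing~$\emptyset$, so $f(O^*)\ge f(\emptyset)=0$. Multiplying the inequality $\phi(\bar c_g^F)\ge\phi(\bar c_{\mathrm{cert}})$ by $f(O^*)\ge0$ therefore preserves the direction, which gives \eqref{eq:dmcgp-frac-cert}.

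\textbf{Main obstacle.} I expect the real work to be the bookkeeping between the exact finite-$T$ bound and the asymptotic $o(1)$ statement. Because the curvature is trajectory dependent, the uniform-convergence phrasing from Step~2 is essential. Everything else reduces to previously proved results.
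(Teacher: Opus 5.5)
Your proposal is correct and follows the paper's own proof: you get the certificate from Theorem~\ref{thm:dmcgp-frac-opt-free}, apply Theorem~\ref{thm:dmcgp-general}, and then use monotonicity of $(1-e^{-c})/c$, and your explicit checks that $\hat r_F<1$ and $f(O^*)\ge f(\emptyset)=0$ supply details the paper leaves implicit. The uniform-convergence bookkeeping you flag as the main obstacle is not needed: since $(1-c/T)^T\le e^{-c}$ for $0\le c\le T$, the finite-$T$ bound already dominates the limiting one, so the $o(1)$ in \eqref{eq:dmcgp-frac-cert} can be taken to be $n(n-1)C_F/T$ with no boundedness assumption on the certificate.
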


\begin{proof}
  The curvature bound is Theorem~\ref{thm:dmcgp-frac-opt-free}.
  The approximation bound~\eqref{eq:dmcgp-frac-main} is
  Theorem~\ref{thm:dmcgp-general} applied to $f$ with $c_g^F$
  upper-bounded by this certificate.
  Equation~\eqref{eq:dmcgp-frac-cert} follows from monotonicity of
  $c \mapsto (1 - e^{-c})/c$ on $c > 0$ (decreasing), so an
  \emph{upper} bound on $\bar c_g^F$ yields a \emph{lower} bound on
  $(1-e^{-\bar c_g^F})/\bar c_g^F$.  Finally, the hypothesis of
  Theorem~\ref{thm:dmcgp-general} is finite trajectory curvature,
  made concrete here by the closed-form certificate (and by taking
  $T$ in the stated discretization regime); no monotonicity of $f$ is
  invoked at any step of
  Theorem~\ref{thm:dmcgp-general}'s proof
  (see Lemma~\ref{lem:dmcgp-descent}, which is stated and proved for
  arbitrary submodular $f$).
\end{proof}

\begin{remark}[Reductions and comparisons]\label{rem:dmcgp-comparisons}
  \textbf{(i) Purely submodular case ($\ell \equiv 0$).}  Then
  $\rho_F = 0$, $\hat r_F = 0$, and
  Proposition~\ref{prop:dmcgp-decomp-cert}
  gives $c_g^F \le \alpha_g$, so the approximation
  bound~\eqref{eq:dmcgp-frac-main} collapses to
  $F(\tilde S_T) \ge (1 - e^{-\alpha_g})/\alpha_g\,f(O^*) - o(1)$,
  recovering both the classical CC-curvature guarantee
  Theorem~\ref{thm:greedy} and the monotone specialisation of the
  headline Theorem~\ref{thm:dmcgp-general} (with $c_g^F \le \alpha_g$
  from Theorem~\ref{thm:nn-characterization}(a)).

  \textbf{(ii) Negative $f$ is allowed.}  No non-negativity
  assumption on $f = g - \ell$ is made, and monotonicity is likewise
  not assumed.  The relevant finiteness of $c_g^F$ comes from
  the certificate \(\hat r_F<1\) (or the sharper OPT-aware condition
  \(\rho_F<1\)),
  together with the pruning slope invariant from
  Lemma~\ref{lem:dmcgp-slope}; plugging the
  certificate into Theorem~\ref{thm:dmcgp-general} yields the
  approximation bound.

  \textbf{(iii) Comparison to the Harshaw--Feldman--Ward--Karbasi
  additive bound.}  The additive guarantee of
  \cite{harshaw2019submodular} for $g - \ell$ (and its continuous
  analogue via Feldman--Naor--Schwartz) is, in multilinear form,
  $F(\tilde S_T) \ge (1 - 1/e)\,G(\vone_{O^*}) - L(\vone_{O^*})$.
  Writing $\rho^* \triangleq L(\vone_{O^*})/G(\vone_{O^*})$
  (the OPT-level cost ratio), this additive bound is non-vacuous
  only when $\rho^* < 1 - 1/e$ and collapses to $0$ at
	  $\rho^* = 1 - 1/e$.  The multiplicative bound
	  \eqref{eq:dmcgp-frac-cert} is strictly positive whenever
	  \(\hat r_F<1\) and \(f(O^*)>0\), and in particular it can remain
	  informative in regimes where the additive bound is vacuous.
\end{remark}

\subsection{Non-Negative Non-Monotone Regime via Damped wDMCG-P}
\label{apx:dmcgp-nonneg}

The headline DMCG-P guarantee (Theorem~\ref{thm:dmcgp-general}) and
the decomposable certificate
(Proposition~\ref{prop:dmcgp-decomp-cert}) together cover every
submodular $f$ with finite trajectory curvature $c_g^F$---including
$f$ that takes negative values.  In this section we treat the
complementary regime: \emph{non-negative} submodular $f$ with no
further structure (neither monotone nor decomposable), where
$c_g^F$ need not be bounded and the goal is to recover a
$1/e$-type multiplicative guarantee via a measured-continuous-greedy
style trajectory.  The algorithm is a weighted, damped variant of
DMCG-P (wDMCG-P-damped), structurally distinct from the unweighted
DMCG-P of \S\ref{apx:dmcgp-alg}: both the weighted LP selection and the
$(1 - \tilde S_{i,j})$ damping factor are new ingredients.

\subsubsection{Algorithm}

Fix step count $k \in \bN$ and damping step size $\delta = 1/k$.

\begin{algorithm}[h]
  \DontPrintSemicolon
  \caption{$\wdmcgp(F, P, k)$ --- wDMCG-P with damped update}
  \label{alg:wdmcgp-damped}
  \KwIn{multilinear oracle $F$ of non-negative submodular $f$,
    downward-closed solvable polytope $P \subseteq [0,1]^\cN$ with
    feasible-set family $\cI$, step count $k \in \bN$}
  $\tilde S_0 \gets \vzero$;\quad $\delta \gets 1/k$\;
  \For{$i \gets 0$ \textbf{to} $k-1$}{
    \tcp*[l]{Pruning: zero every coordinate with non-positive slope.}
    \While{$\exists\, j \in \cN$ with $(\tilde S_i)_j > 0$ and
           $\partial_j F(\tilde S_i) \le 0$\label{line:wdmcgp-prune}}{
      $(\tilde S_i)_j \gets 0$\;
    }
    \tcp*[l]{Weighted LP selection over feasible sets.}
    $B_i \gets \arg\max_{B \in \cI}
       \sum_{j \in \cN}(1 - (\tilde S_i)_j)\,\partial_j F(\tilde S_i)\,\mathbf{1}_{B,j}$\;
       \label{line:wdmcgp-select}
    \tcp*[l]{Damped update: residual-room step.}
    $\tilde S_{i+1,j} \gets \tilde S_{i,j}
       + \delta\,(1 - \tilde S_{i,j})\,\mathbf{1}_{B_i,j}
       \quad \forall j \in \cN$\;\label{line:wdmcgp-update}
  }
  \KwRet{$\tilde S_k$}\;
\end{algorithm}

Two remarks on Algorithm~\ref{alg:wdmcgp-damped}.
\begin{enumerate}
  \item[(i)] The pruning step is kept for algorithmic consistency
    with DMCG-P but plays no load-bearing role in the analysis below:
    the descent lemma uses only the weighted LP optimality of $B_i$
    and the FNS correlation-gap bound (Lemma~2.2
    of~\cite{feldman2011unified}), neither of which requires pruning.
    In the proof, $\tilde S_i$ denotes the point after this optional pruning
    at the start of iteration~$i$; pruning can only increase $F$ and preserves
    feasibility, so this convention is conservative for the value recurrence.
    One may drop the while-loop entirely without affecting
    Theorem~\ref{thm:wdmcgp-nonneg}.
  \item[(ii)] The damped update on Line~\ref{line:wdmcgp-update} is
    the defining structural change from DMCG-P: each coordinate
    $j \in B_i$ moves towards $1$ by a $\delta$-fraction of its
    \emph{remaining room} $(1 - \tilde S_{i,j})$, rather than by the
    fixed increment $1/T$ used by unweighted DMCG-P.  This is the
    multilinear analogue of the Feldman--Naor--Schwartz measured
    continuous greedy update.
\end{enumerate}

\subsubsection{Feasibility and coordinate bound}

\begin{lemma}[Feasibility]\label{lem:wdmcgp-damped-feasibility}
  For every $i \in \{0, 1, \dots, k\}$, $\tilde S_i \in P$ and
  $\tilde S_i \in [0, 1]^\cN$.
\end{lemma}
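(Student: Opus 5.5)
We argue by induction on $i$, keeping two facts together: $\tilde S_i\in[0,1]^\cN$ coordinate-wise, and $\tilde S_i\in P$.

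\emph{Base case.} For $i=0$, $\tilde S_0=\vzero$. It lies in $P$ because $\emptyset\in\cI$ and $P$ is downward-closed.

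\emph{Pruning.} Within an iteration, the pruning loop only sets coordinates to zero. The result is coordinate-wise below the incoming point, so it stays in $[0,1]^\cN$, and it stays in $P$ by downward-closedness. We may therefore take $\tilde S_i$ to be the post-pruning point, as remark~(i) allows.

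\emph{Box constraint.} For the damped update on Line~\ref{line:wdmcgp-update}, each coordinate satisfies
\[
\tilde S_{i+1,j}=(1-\delta\mathbf 1_{B_i,j})\tilde S_{i,j}+\delta\mathbf 1_{B_i,j}.
\]
This is a convex combination of $\tilde S_{i,j}\in[0,1]$ and $1$, since $\delta=1/k\in(0,1]$. Hence $\tilde S_{i+1}\in[0,1]^\cN$.

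\emph{Membership in $P$.} This step needs care. The update is not $\tilde S_i+\delta\mathbf 1_{B_i}$, so the average-of-vertices argument of Lemma~\ref{lem:dmcgp-feas} does not apply verbatim. The plan is to prove a stronger invariant by induction: $\tilde S_i\le \delta\sum_{\tau<i}\mathbf 1_{B_\tau}$ coordinate-wise.

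\begin{itemize}
\item \emph{Base case.} Trivial, since $\tilde S_0=\vzero$.
\item \emph{Update.} The new increment is $\delta(1-\tilde S_{i,j})\mathbf 1_{B_i,j}\le\delta\mathbf 1_{B_i,j}$, because $\tilde S_{i,j}\ge0$. So the bound propagates.
\item \emph{Pruning.} Pruning only lowers coordinates, so it preserves the bound.
\end{itemize}

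Now $\delta\sum_{\tau<i}\mathbf 1_{B_\tau}=\frac{i}{k}\cdot\frac1i\sum_{\tau<i}\mathbf 1_{B_\tau}$. The average $\frac1i\sum_{\tau<i}\mathbf 1_{B_\tau}$ lies in $P$ by convexity, since each $\mathbf 1_{B_\tau}\in P$. The factor $i/k\le1$ is a scaling of a point of $P$, and $\vzero\in P$. So the dominating vector lies in $P$ by convexity together with downward-closedness, and downward-closedness then gives $\tilde S_i\in P$.

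The main point to watch is this dominating bound. It relies only on $i\le k$, on nonnegativity of the iterates, and on $P$ being the downward-closed convex hull of integral feasible points. The same bound is stated with ``solvable polytope'' in Algorithm~\ref{alg:wdmcgp-damped}, but only the feasible-set selection $B_i\in\cI$, so that $\mathbf 1_{B_i}\in P$, is used.
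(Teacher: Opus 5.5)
Your proof is correct and follows the paper's: both dominate $\tilde S_i$ coordinate-wise by $\frac{1}{k}\sum_{\tau<i}\mathbf{1}_{B_\tau}$, which is a convex combination of feasible indicators and $\mathbf{0}$, and then conclude by downward-closedness of $P$. The only difference is that you get this domination directly from the per-step bound $\delta(1-\tilde S_{i,j})\le\delta$. The paper instead passes through the closed form $\tilde S_{i,j}\le 1-(1-\delta)^{m_{i,j}}$ and Bernoulli's inequality, and also reuses that closed form for the box constraint, where you use the convex-combination form of the update.
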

\begin{proof}
  For each coordinate $j$, let $m_{i,j} \triangleq
  |\{\tau < i : j \in B_\tau\}|$ count the steps up to~$i$ in which
  $j$ is selected; recursion
  $\tilde S_{i+1, j} = \tilde S_{i,j} + \delta(1 - \tilde S_{i,j})\mathbf{1}_{B_i,j}$
  unrolls (ignoring prunings, which only decrease coordinates) to the
  closed form
  \begin{equation}\label{eq:wdmcgp-closed-form}
    \tilde S_{i, j} \;\le\; 1 - (1 - \delta)^{m_{i,j}},
  \end{equation}
  with equality in the absence of pruning.  By Bernoulli's inequality
  $(1 - \delta)^{m_{i,j}} \ge 1 - \delta\,m_{i,j}$, so
  $\tilde S_{i,j} \le \delta\,m_{i,j}$.  Consequently
  \[
    \tilde S_i \;\le\; \delta\sum_{\tau < i}\vone_{B_\tau}
      \;=\; \frac{1}{k}\sum_{\tau < i}\vone_{B_\tau}
      \;\le\; \frac{1}{k}\Bigl(\sum_{\tau < i}\vone_{B_\tau}
        + (k-i)\,\vzero\Bigr)
  \]
  coordinate-wise, and the right-hand side is a convex combination of
  $i$ base-indicators $\vone_{B_\tau} \in P$ (and $k-i$ copies of
  $\vzero \in P$), hence lies in $P$ by convexity of $P$.  Since $P$
  is downward-closed and $\tilde S_i$ is coordinate-wise dominated by
  a point of $P$, $\tilde S_i \in P$.  The bound
  $\tilde S_{i,j} \le 1 - (1-\delta)^{m_{i,j}} < 1$
  also gives $\tilde S_i \in [0, 1]^\cN$.  Finally, pruning only
  zeroes coordinates, preserving membership in the downward-closed
  polytope $P$.
\end{proof}

\begin{lemma}[Coordinate bound]\label{lem:wdmcgp-coord-bound}
  For every $i \in \{0, 1, \dots, k\}$,
  $\lVert \tilde S_i \rVert_\infty \le 1 - (1-\delta)^i$.
\end{lemma}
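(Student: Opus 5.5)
We argue by induction on $i$, tracking a per-coordinate bound that survives both the damped update and the pruning loop. The claim to prove is that for every $j\in\cN$ and every $i$,
\[
  \tilde S_{i,j} \;\le\; 1-(1-\delta)^{i},
\]
where $\tilde S_{i,j}$ denotes the value at the start of iteration~$i$ (before or after the optional pruning; pruning only lowers coordinates, so the bound holds in both cases). The base case is immediate: $\tilde S_0=\vzero$ and $1-(1-\delta)^0=0$.

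For the inductive step, fix $j$ and suppose $\tilde S_{i,j}\le 1-(1-\delta)^i$ after pruning at iteration~$i$. Since the while-loop only sets coordinates to zero, the post-pruning value is at most the pre-pruning value, so the hypothesis may be applied to the value actually fed into the update on Line~\ref{line:wdmcgp-update}. There are two cases.

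\emph{Case $j\notin B_i$.} The coordinate is unchanged, so
\[
  \tilde S_{i+1,j}=\tilde S_{i,j}\le 1-(1-\delta)^i\le 1-(1-\delta)^{i+1},
\]
because $0\le 1-\delta\le 1$.

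\emph{Case $j\in B_i$.} The update gives
\[
  1-\tilde S_{i+1,j}=(1-\delta)(1-\tilde S_{i,j}),
\]
and the map $s\mapsto s+\delta(1-s)$ is non-decreasing in $s$ since $1-\delta\ge 0$. Substituting the hypothesis $1-\tilde S_{i,j}\ge(1-\delta)^i$ yields $1-\tilde S_{i+1,j}\ge(1-\delta)^{i+1}$, which is the required bound.

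Any pruning at the start of iteration $i+1$ only zeroes coordinates, so the bound persists. Taking the maximum over $j$ gives the $\ell_\infty$ statement.

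There is no real obstacle here. The one point needing care is the convention about when pruning occurs relative to the indexing of $\tilde S_i$. Because pruning is monotone-decreasing on coordinates and the update map is monotone in its input, the bound is robust to either convention. Non-negativity of all coordinates is maintained as well, so $1-\tilde S_{i,j}\ge 0$ throughout.
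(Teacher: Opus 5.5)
Your proof is correct and is essentially the paper's argument. The paper unrolls the damped recursion to the per-coordinate bound $\tilde S_{i,j}\le 1-(1-\delta)^{m_{i,j}}$, where $m_{i,j}\le i$ counts the selections of $j$ (a finer bound it also reuses for feasibility), and then maximizes over $j$; you instead induct directly on $i$, and your explicit use of the monotonicity of $s\mapsto s+\delta(1-s)$ to absorb pruning makes rigorous the step the paper passes over as ``ignoring prunings.''
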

\begin{proof}
  By~\eqref{eq:wdmcgp-closed-form}, $\tilde S_{i,j} \le
  1 - (1-\delta)^{m_{i,j}}$ with $m_{i,j} \le i$; taking the maximum
  over $j$ and using monotonicity of $m \mapsto 1 - (1-\delta)^m$
  gives the bound.
\end{proof}

\subsubsection{Per-step descent lemma}

Write $O^* \in \argmax_{\vone_S \in P}\,f(S)$ for a combinatorial
optimum.
For $r\ge2$, define
\[
  M_{F,r}\;\triangleq\;\max_{\substack{R\subseteq\cN,\ |R|=r\\
      \vx\in[0,1]^\cN}} |\partial_R F(\vx)|,
  \qquad
  K_F\;\triangleq\;\sum_{r=2}^{n}\binom{n}{r}M_{F,r}.
\]
This finite constant controls the full multilinear Taylor remainder
along any update support; the pairwise constant $C_F=M_{F,2}$ alone
does not control third- and higher-order mixed terms.

\begin{lemma}[Per-step descent, non-negative case]
  \label{lem:wdmcgp-descent}
  Let $f : 2^\cN \to \bR_{\ge 0}$ be submodular and let $F$ be its
  multilinear extension.  For every $i \in \{0, 1, \dots, k-1\}$,
  \begin{equation}\label{eq:wdmcgp-descent}
    F(\tilde S_{i+1}) - F(\tilde S_i)
    \;\ge\; \delta\bigl[\,(1-\delta)^i\,f(O^*) \;-\; F(\tilde S_i)\,\bigr]
         \;-\; K_F\,\delta^2.
  \end{equation}
\end{lemma}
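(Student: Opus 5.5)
The plan follows the standard measured-continuous-greedy argument of Feldman--Naor--Schwartz, adapted to the discrete damped step. Write $\vx=\tilde S_i$ for the point after the optional pruning at the start of iteration~$i$. Pruning only raises $F$ (by the argument of Lemma~\ref{lem:dmcgp-prune-up}), so it suffices to lower-bound $F(\vx+\vd)-F(\vx)$. Here the update direction is $\vd=\delta\,(\vone-\vx)\odot\vone_{B_i}$.

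\emph{Step 1: exact Taylor expansion.} Because $F$ is multilinear, the expansion of $F(\vx+\vd)$ terminates:
\[
F(\vx+\vd)-F(\vx)=\sum_j d_j\,\partial_jF(\vx)+\sum_{R\subseteq\cN,\ |R|\ge2}\Bigl(\prod_{j\in R}d_j\Bigr)\partial_RF(\vx).
\]
Every $d_j\in[0,\delta]$, so for $|R|\ge2$ the product is at most $\delta^{|R|}\le\delta^2$. The remainder is therefore bounded in absolute value by $\delta^2\sum_{r\ge2}\binom{n}{r}M_{F,r}=K_F\delta^2$. This is exactly why $K_F$, and not the pairwise constant $C_F$, appears in the lemma. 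The linear term equals $\delta\sum_{j\in B_i}(1-x_j)\partial_jF(\vx)$.

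\emph{Step 2: weighted LP selection.} The set $O^*$ is feasible, so optimality of $B_i$ on Line~\ref{line:wdmcgp-select} gives
\[
\sum_{j\in B_i}(1-x_j)\partial_jF(\vx)\ \ge\ \sum_{j\in O^*}(1-x_j)\partial_jF(\vx).
\]
Next, apply (P3) with $\vy=\vone_{O^*}$; no sign condition on the slopes is needed for this step. It yields
\[
\sum_{j\in O^*}(1-x_j)\partial_jF(\vx)\ \ge\ F(\vx\lor\vone_{O^*})-F(\vx).
\]
The identity in (P3) becomes exact once we check that $(y_j-x_j)_+=(1-x_j)$ for $j\in O^*$ and $0$ otherwise. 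The only inequality used there is the submodular comparison of slopes along the telescoping path.

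\emph{Step 3: correlation gap.} By Lemma~2.2 of \citet{feldman2011unified} for non-negative submodular $f$, $F(\vx\lor\vone_{O^*})\ge(1-\lVert\vx\rVert_\infty)f(O^*)$. Lemma~\ref{lem:wdmcgp-coord-bound} gives $\lVert\vx\rVert_\infty\le1-(1-\delta)^i$, and this bound survives pruning because pruning only lowers coordinates. Hence $F(\vx\lor\vone_{O^*})\ge(1-\delta)^if(O^*)$. Chaining Steps 1--3 gives~\eqref{eq:wdmcgp-descent}.

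The step needing the most care is Step~1, specifically the claim that the remainder is $O(\delta^2)$ with constant $K_F$. I would verify it by expanding $F(\vx+\vd)$ as a polynomial in the $d_j$ via iterated coordinate-wise linearity. Since $F$ is affine in each coordinate separately, the coefficient of $\prod_{j\in R}d_j$ is exactly $\partial_RF(\vx)$. A secondary point is that the lemma's $\tilde S_i$ must be read as the post-pruning iterate, so that the coordinate bound and the value recurrence refer to the same point. Pruning can only increase both $F(\tilde S_{i+1})$ and the left-hand side of~\eqref{eq:wdmcgp-descent}, so this reading costs nothing.
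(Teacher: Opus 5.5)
Your proposal is correct and follows the paper's proof step for step. It uses the exact multilinear expansion with remainder at most $K_F\delta^2$ (S1), weighted LP optimality against the feasible $O^*$ (S2), the union bound (P3) with $\vy=\vone_{O^*}$ (S3), and the Feldman--Naor--Schwartz bound combined with Lemma~\ref{lem:wdmcgp-coord-bound} (S4), all under the same post-pruning convention for $\tilde S_i$. Your remark that (P3) needs no slope-sign condition here, because $(y_j-x_j)_+=(1-x_j)y_j$ holds with equality for integral $\vy$, is worthwhile extra care that the paper's general derivation of (P3) leaves implicit.
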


\begin{proof}
  We chain four explicitly stated inequalities: multilinear expansion with a
  bounded higher-order remainder, weighted LP optimality, the submodular
  union bound (P3), and the Feldman--Naor--Schwartz correlation-gap bound.
  The $K_F\delta^2$ term is a uniform bound on the full second-and-higher-order
  multilinear correction and sums to $K_F/k$ over all $k$ steps.

  \smallskip
  \noindent \emph{Step (S1) Multilinear expansion.}
  The update $\tilde S_{i+1} = \tilde S_i + \delta(1 - \tilde S_i)
  \odot \vone_{B_i}$ moves coordinates $j \in B_i$ by
  $\delta(1 - \tilde S_{i,j})$.  Expanding $F$ along the direction
  $\vd_i \triangleq \delta\,(1 - \tilde S_i)\odot\vone_{B_i}$ by
  multilinearity,
  \begin{align*}
    F(\tilde S_{i+1}) - F(\tilde S_i)
    &\;=\; \sum_{j \in B_i} \delta\,(1 - \tilde S_{i,j})\,
         \partial_j F(\tilde S_i) \\
    &\qquad {}+ \sum_{\{j,\ell\} \subseteq B_i}
         \delta^2\,(1 - \tilde S_{i,j})(1 - \tilde S_{i,\ell})\,
         \partial_{j\ell} F(\tilde S_i) + \dots,
  \end{align*}
  where the $\dots$ collects third- and higher-order cross-terms.
  By the definition of $K_F$ and $\delta\le1$, the total correction
  beyond first order has magnitude at most $K_F\delta^2$.  This uses an
  explicit higher-order remainder constant rather than pairwise
  submodularity signs, since submodularity controls only the second
  mixed partials.  Hence
  \begin{equation}\label{eq:wdmcgp-s1}
    F(\tilde S_{i+1}) - F(\tilde S_i)
    \;\ge\; \delta\sum_{j \in B_i}(1 - \tilde S_{i,j})\,
       \partial_j F(\tilde S_i) \;-\; K_F\,\delta^2.
  \end{equation}

  \smallskip
  \noindent \emph{Step (S2) Weighted LP optimality.}
  Since $O^* \in \cI$ and $B_i$ maximises the weighted sum
  $\sum_j (1 - \tilde S_{i,j})\partial_j F(\tilde S_i)\mathbf{1}_{B,j}$
  over $B \in \cI$,
  \begin{equation}\label{eq:wdmcgp-s2}
    \sum_{j \in B_i}(1 - \tilde S_{i,j})\partial_j F(\tilde S_i)
    \;\ge\; \sum_{j \in O^*}(1 - \tilde S_{i,j})\partial_j F(\tilde S_i).
  \end{equation}

  \smallskip
  \noindent \emph{Step (S3) Concavity along non-negative directions.}
  By property (P3) (the submodular union bound of
  \S\ref{apx:dmcgp-general}), applied with $\vx = \tilde S_i$ and
  $\vy = \vone_{O^*}$,
  \begin{equation}\label{eq:wdmcgp-s3}
    \sum_{j \in O^*}(1 - \tilde S_{i,j})\,\partial_j F(\tilde S_i)
    \;\ge\; F(\tilde S_i \lor \vone_{O^*}) - F(\tilde S_i).
  \end{equation}

  \smallskip
  \noindent \emph{Step (S4) Feldman--Naor--Schwartz
  correlation-gap bound \cite[Lem.~2.2]{feldman2011unified}.}
  For non-negative submodular $F$, any $\vy \in [0,1]^\cN$, and any
  $S \subseteq \cN$,
  \[
    F(\vy \lor \vone_S) \;\ge\; (1 - \lVert \vy\rVert_\infty)\,F(\vone_S).
  \]
  Applied with $\vy = \tilde S_i$ and $S = O^*$, combined with
  Lemma~\ref{lem:wdmcgp-coord-bound}:
  \begin{equation}\label{eq:wdmcgp-s4}
    F(\tilde S_i \lor \vone_{O^*})
    \;\ge\; (1 - \lVert \tilde S_i\rVert_\infty)\,f(O^*)
    \;\ge\; (1 - \delta)^i\,f(O^*),
  \end{equation}
  using $1 - (1 - (1-\delta)^i) = (1-\delta)^i$.

  \smallskip
  \noindent \emph{Assembly.}
  Chaining~\eqref{eq:wdmcgp-s1}, \eqref{eq:wdmcgp-s2}, \eqref{eq:wdmcgp-s3},
  and \eqref{eq:wdmcgp-s4}:
  \begin{align*}
    F(\tilde S_{i+1}) - F(\tilde S_i)
    &\;\ge\; \delta\bigl[F(\tilde S_i \lor \vone_{O^*}) - F(\tilde S_i)\bigr]
         - K_F\delta^2 \\
    &\;\ge\; \delta\bigl[(1-\delta)^i\,f(O^*) - F(\tilde S_i)\bigr]
         - K_F\delta^2,
  \end{align*}
  which is~\eqref{eq:wdmcgp-descent}.
\end{proof}

\subsubsection{Main theorem}

\begin{theorem}[Non-negative non-monotone guarantee for
  $\wdmcgp$]\label{thm:wdmcgp-nonneg}
  Let $f : 2^\cN \to \bR_{\ge 0}$ be submodular on a downward-closed
  solvable polytope $P$, and let $O^* \in \argmax_{\vone_S \in P}f(S)$.
  Then $\wdmcgp(F, P, k)$ returns $\tilde S_k \in P$ with
  \begin{equation}\label{eq:wdmcgp-nonneg}
    F(\tilde S_k)
    \;\ge\; \bigl(1 - \tfrac{1}{k}\bigr)^{k-1}\,f(O^*) - K_F/k
    \;\xrightarrow[k \to \infty]{}\; e^{-1}\,f(O^*).
  \end{equation}
\end{theorem}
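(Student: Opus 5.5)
The plan is to unroll the per-step recurrence of Lemma~\ref{lem:wdmcgp-descent} by the standard measured-continuous-greedy induction. Write $a_i \triangleq F(\tilde S_i)$, $A\triangleq f(O^*)$ and $\delta=1/k$. Lemma~\ref{lem:wdmcgp-descent} gives
\[
  a_{i+1} \;\ge\; (1-\delta)\,a_i + \delta(1-\delta)^i A - K_F\delta^2,
  \qquad a_0 = F(\vzero)=0 .
\]
Feasibility $\tilde S_k\in P$ is already Lemma~\ref{lem:wdmcgp-damped-feasibility}. So the only task is to solve this recurrence. One bookkeeping point comes first: in the lemma, $\tilde S_i$ denotes the post-pruning point at the start of iteration~$i$. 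Pruning only increases $F$, by the same argument as Lemma~\ref{lem:dmcgp-prune-up}. Hence the pre-pruning value at step $i+1$ lower-bounds the post-pruning one, and the recurrence chains correctly. The final output $\tilde S_k$ is taken without a further pruning pass, which is consistent with this.

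The key step is to guess and verify the closed form
\[
  a_i \;\ge\; i\,\delta\,(1-\delta)^{i-1} A \;-\; i\,K_F\delta^2 .
\]
I will prove it by induction on $i$, with the base case $i=0$ trivial. For the inductive step, multiply the hypothesis by $(1-\delta)\in[0,1]$. This gives $(1-\delta)a_i \ge i\delta(1-\delta)^i A - (1-\delta) iK_F\delta^2$. The error term is bounded below by $-iK_F\delta^2$ because $K_F\ge 0$, so multiplying by $(1-\delta)$ only shrinks it. Adding $\delta(1-\delta)^iA - K_F\delta^2$ then yields $(i+1)\delta(1-\delta)^iA - (i+1)K_F\delta^2$, which is exactly the claim at $i+1$.

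No sign condition on $A$ is needed, since $A=f(O^*)\ge 0$ is given by non-negativity anyway. Setting $i=k$ gives $a_k \ge (1-1/k)^{k-1}f(O^*) - K_F/k$, and $(1-1/k)^{k-1}\to e^{-1}$ as $k\to\infty$.

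The main obstacle is not the recurrence but trusting the inputs. The descent lemma has to be valid at every step, and in particular Lemma~\ref{lem:wdmcgp-coord-bound} must hold after the optional pruning. It does, because pruning only decreases coordinates, so the bound $\lVert\tilde S_i\rVert_\infty\le 1-(1-\delta)^i$ is preserved and the Feldman--Naor--Schwartz step keeps the factor $(1-\delta)^i$. With that checked, the induction above closes the proof.
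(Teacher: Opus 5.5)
Your proof is correct and is essentially the paper's argument. The paper also takes the recurrence from Lemma~\ref{lem:wdmcgp-descent}, proves $a_i \ge i\delta(1-\delta)^{i-1}f(O^*) - iK_F\delta^2$ by the same induction, evaluates it at $i=k$, and cites Lemma~\ref{lem:wdmcgp-damped-feasibility} for $\tilde S_k\in P$; your pruning bookkeeping matches the convention the paper states in its remark after Algorithm~\ref{alg:wdmcgp-damped}.
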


\begin{proof}
  Let $a_i \triangleq F(\tilde S_i)$ with $a_0 = 0$.
  Lemma~\ref{lem:wdmcgp-descent} gives the scalar recurrence
  \[
    a_{i+1} \;\ge\; (1 - \delta)\,a_i
       \;+\; \delta\,(1 - \delta)^i\,f(O^*) \;-\; K_F\delta^2,
  \]
  valid for every $i \in \{0, \dots, k-1\}$.
  Claim (induction):
  $a_i \;\ge\; i\,\delta\,(1 - \delta)^{i-1}\,f(O^*) - iK_F\delta^2$
  for every $i \in \{0, \dots, k\}$.

  \emph{Base case} $i = 0$: $a_0 = 0$ and the right-hand side is $0$.

  \emph{Inductive step.}  Assume the claim at~$i$.  Then
  \begin{align*}
    a_{i+1}
    &\;\ge\; (1-\delta)\,a_i + \delta\,(1-\delta)^i\,f(O^*) - K_F\delta^2 \\
    &\;\ge\; (1-\delta)\bigl[i\,\delta\,(1-\delta)^{i-1}\,f(O^*) - iK_F\delta^2\bigr]
         + \delta\,(1-\delta)^i\,f(O^*) - K_F\delta^2 \\
    &\;=\; i\,\delta\,(1-\delta)^{i}\,f(O^*) + \delta\,(1-\delta)^i\,f(O^*)
         - \bigl((1-\delta)i+1\bigr)K_F\delta^2 \\
    &\;\ge\; (i+1)\,\delta\,(1-\delta)^{i}\,f(O^*) - (i+1)K_F\delta^2,
  \end{align*}
  closing the induction.  At $i = k$ (with $\delta = 1/k$):
  \[
    a_k \;\ge\; k\cdot\tfrac{1}{k}\,(1 - \tfrac{1}{k})^{k-1}\,f(O^*)
            - K_F/k
        \;=\; (1 - \tfrac{1}{k})^{k-1}\,f(O^*) - K_F/k.
  \]
  As $k \to \infty$, $(1 - 1/k)^{k-1} \to e^{-1}$ and the error
  vanishes.  Finally, $\tilde S_k \in P$ by
  Lemma~\ref{lem:wdmcgp-damped-feasibility}.
\end{proof}

\begin{remark}[Scope and distinctness]\label{rem:wdmcgp-scope}
  \textbf{(i)} Non-negativity of $f$ is essential to the analysis.
  Lemma~\ref{lem:wdmcgp-descent} invokes the
  Feldman--Naor--Schwartz bound~\cite[Lem.~2.2]{feldman2011unified},
  which is known to fail when $f$ takes negative values: the
  guarantee $F(\vy \lor \vone_S) \ge (1-\lVert \vy\rVert_\infty)\,F(\vone_S)$
  can be arbitrarily violated for negative $f$, and no simple sign
  correction recovers it.  For negative-valued~$f$,
  Theorem~\ref{thm:dmcgp-general} in \S\ref{apx:dmcgp-general}
  provides a direct guarantee via trajectory curvature~$c_g^F$;
  Proposition~\ref{prop:dmcgp-decomp-cert} in \S\ref{apx:dmcgp-fractional}
  then bounds~$c_g^F$ concretely in the decomposable case.
  The present non-negative non-monotone regime is strictly complementary.

  \textbf{(ii)} Pruning is compatible with the analysis but plays
  no load-bearing role here: Steps (S1)--(S4) use only the weighted
  LP optimality of $B_i$, multilinearity, submodular cross-term
  signs, and the FNS correlation-gap bound, none of which require
  pruning.  It is retained for algorithmic consistency with DMCG-P
  and because it can only increase $F$
  (cf.\ Lemma~\ref{lem:dmcgp-prune-up}).

  \textbf{(iii)} Algorithm~\ref{alg:wdmcgp-damped} is structurally
  distinct from the unweighted DMCG-P of \S\ref{apx:dmcgp-alg}: the
  weighted LP selection on Line~\ref{line:wdmcgp-select} and the
  damped update on Line~\ref{line:wdmcgp-update} are both new
  ingredients.  The unweighted, undamped form cannot achieve an
  $e^{-1}$ bound in the non-negative non-monotone regime without
  further assumptions, since the coordinate bound
  $\lVert \tilde S_i\rVert_\infty \le 1 - (1-\delta)^i$ that drives
  the FNS step (S4) relies on the damped update.
\end{remark}